\documentclass[conference]{IEEEtran}
\IEEEoverridecommandlockouts
\usepackage{amsmath,amssymb,amsfonts,amsthm}
\usepackage{algorithmic}
\usepackage{graphicx}
\usepackage{textcomp}
\usepackage{xcolor}
\usepackage{booktabs}
\usepackage{cite}

\newtheorem{definition}{Definition}
\newtheorem{theorem}{Theorem}
\newtheorem{lemma}{Lemma}
\newtheorem{proposition}{Proposition}
\newtheorem{corollary}{Corollary}
\newtheorem{remark}{Remark}
\newtheorem{assumption}{Assumption}
\newtheorem{example}{Example}

\begin{document}

\title{What Does the Rank Buy? A Spectral and Distributional Analysis of Low-Rank Adaptation}

\author{\IEEEauthorblockN{Babak Barazandeh\textsuperscript{*}%
\thanks{\textsuperscript{*}Corresponding author: bbarazandeh@cribl.io}}
\IEEEauthorblockA{\textit{Cribl AI Research Lab}}}

\maketitle
\begin{abstract}
The rank $r$ in LoRA is widely treated as a capacity control: a smaller rank
is assumed to yield a simpler model that generalizes better. We show that,
under hard per-factor norm budgets---the idealization of the weight decay
and norm control used in practice---this intuition breaks down. The reason
is structural: under such budgets, the updates LoRA can reach are exactly
the matrices of rank at most $r$ inside a nuclear-norm ball, and every
complexity and displacement functional we analyze is maximized over this
set by a rank-one update---so the rank cap never binds. The consequences
follow directly. The linear-readout model class we study is identical for
every $r \ge 1$, its Rademacher complexity carries no dependence on $r$,
and the distance the adaptation can move the source distribution obeys a
rank-independent upper bound that we show is sharp. If rank does not
control capacity, where does it act? We identify two places. Statistically,
replacing the per-factor budgets with a joint budget on the product
restores a data-dependent, rank-sensitive complexity bound---though the
gain appears only for well-spread feature distributions, and the worst
case remains rank-free. Spectrally, rank sets the price of adaptation:
canceling the leading singular directions of the pretrained weight
requires both sufficient rank and sufficient budget. We bound the smallest
rank achieving a desired source--target alignment, with upper and lower
bounds that match under two-sided spectral decay. Together, these results
recast rank as governing which updates are reachable and what cancellation
costs---not how much capacity the model has.
\end{abstract}

\begin{IEEEkeywords}
LoRA, low-rank adaptation, domain adaptation, generalization bounds, spectral analysis, matrix concentration, nuclear norm.
\end{IEEEkeywords}

\section{Introduction}
 \label{sec:intro}
Large-scale pre-trained models have become the backbone of modern
machine learning, but their scale makes full-parameter fine-tuning
computationally prohibitive for most practitioners: the cost of
updating, storing, and serving a separate copy of every weight for
every downstream task grows linearly with model size. This challenge
has motivated a rich body of work on parameter-efficient fine-tuning
(PEFT), which adapts a frozen backbone by training only a small number
of additional parameters~\cite{ding2023peft, lialin2023scaling,
houlsby2019parameter, li2021prefix}. Among PEFT methods, Low-Rank
Adaptation (LoRA)~\cite{hu2022lora} has emerged as the de facto
default: a frozen weight $W_0$ is adapted by a learned update
$\Delta W = BA$ with $\operatorname{rank}(\Delta W)\le r$, matching
full fine-tuning at a small fraction of the trainable-parameter cost.
 
The success of LoRA has spawned a large ecosystem of variants, and it
is striking how many of them revolve, implicitly or explicitly, around
the treatment of the rank. Some methods allocate rank adaptively
across layers or during training~\cite{zhang2023adalora, mao2024dora,
he2025gora}; others recover high-rank updates by aggregating low-rank
ones over time or across modules~\cite{lialin2023relora, jiang2024mora,
zhang2024less}. A complementary line of work redistributes a fixed
low-rank budget \emph{spatially}: MELoRA trains mini-ensembles of
adapters on diagonal blocks~\cite{ren2024melora}, and Localized LoRA generalizes this idea by composing low-rank updates on arbitrary structured blocks of the
weight matrix, achieving lower approximation error at a matched
parameter budget~\cite{barazandeh2025localized}. Its mixture-of-experts
extension equips such block-wise adapters with adaptive routing so
that different regions of the weight matrix specialize to different
operational regimes~\cite{barazandeh2026lmoe}. Further directions
include structured decompositions and initialization
schemes~\cite{shi2024lold, sun2024svfit, bini2025delora, zhao2025lor2c},
parameter sharing and composition~\cite{bishare2025, he2025rasa,
kopiczko2023vera, liu2024vb, huang2023lorahub, ouyang2025klora,
yu2025moka}, mixture-of-experts integrations that preserve world
knowledge across tasks~\cite{dou2023loramoe, yang2024moral},
non-Euclidean geometries~\cite{yang2024hyperbolic}, and multimodal
tuning~\cite{huang2025keeping}; see~\cite{yang2025lowrank} for a survey.
 
Across this entire design space, one justification recurs whenever a
practitioner keeps $r$ small: the \emph{capacity argument}. A tighter
rank bottleneck should mean a smaller hypothesis class, hence a smaller
generalization gap---the same logic that underlies classical low-rank
matrix recovery, where rank acts as an effective dimension~\cite{candes2011tight}.
Rank is ablated in virtually every empirical study, and rank-allocation
methods~\cite{zhang2023adalora, mao2024dora} are motivated precisely by
the premise that rank is the resource whose placement controls the
complexity of the adapted model. Yet this premise is rarely examined
against the constraint set that LoRA, as actually trained and
regularized, imposes. That examination is the subject of this paper.

\subsection{What This Paper Does}

This paper is about one design parameter: the adaptation rank $r$. We ask whether the capacity argument just described survives contact with the constraint set LoRA actually imposes, and we find that it does not.

The reason is visible in one line. We analyze explicit \emph{hard} per-factor Frobenius budgets, $\|B\|_F\le B_B$ and $\|A\|_F\le B_A$---the hard-constraint idealization of the per-factor norm control (weight decay, explicit projection) used in practice; weight decay itself is a soft penalty and is not equivalent to a hard budget without an additional level-set argument. The set of updates such budgets reach is
\begin{equation}
\label{eq:preview}
\mathcal A_r = \big\{\Delta W : \mathrm{rank}(\Delta W)\le r,\ \|\Delta W\|_* \le B_BB_A\big\},
\end{equation}
the rank-$r$ truncation of a \emph{nuclear-norm} ball (Lemma~\ref{lem:factorization}). A nuclear ball is the convex hull of its rank-one elements, so any functional that is convex and maximized on the boundary---in particular the linear Rademacher functional $\Delta W\mapsto\langle\Delta W,S\rangle_F$ and the Frobenius norm $\Delta W\mapsto\|\Delta W\|_F$---attains its maximum over $\mathcal A_r$ at a rank-one point, for every $r\ge1$ (Lemma~\ref{lem:extremal}). The rank cap is therefore never active in either of these functionals. This single mechanism, which we call \emph{rank-one extremality}, drives every result below and explains why the rank collapses out of the statistical analysis and the geometric analysis alike.

Making this precise is not merely a negative exercise. It tells us exactly what would have to change for the rank to matter, it identifies the regime in which a modified analysis does deliver genuine savings, and it relocates the true role of the rank from capacity to \emph{affordability}: $r$ governs how large a norm budget is \emph{required} to cancel a given spectral block of $W_0$, not the maximal support-functional or Wasserstein radius a fixed budget attains in the functionals studied here. (Which \emph{individual} updates are reachable does still depend on $r$: under a fixed budget $\rho>0$ and with $k:=\min(m,d)\ge2$, one has $\mathcal A_1\subsetneq\mathcal A_2\subsetneq\cdots\subsetneq\mathcal A_k$, and $\mathcal A_r=\mathcal A_k$ for $r\ge k$, where the rank cap is vacuous. Our point is that the extremal functionals analyzed below do not see these inclusions.)

\subsection{Three Questions}

We study LoRA-adapted models under a source distribution $D_S$ and a target $D_T$, and organize the paper around:

\begin{itemize}
    \item[\textbf{Q1.}] \emph{Statistical:} Does constraining the update to rank $r$ shrink the effective hypothesis class, and hence the source-domain generalization gap?
    \item[\textbf{Q2.}] \emph{Spectral:} How large must $r$ be, as a function of the singular-value decay of $W_0$, to bring the source--target alignment error below a tolerance $\epsilon$?
    \item[\textbf{Q3.}] \emph{Geometric:} How far, in Wasserstein distance, can a rank-$r$, norm-budgeted adaptation carry the source distribution?
\end{itemize}

\subsection{Contributions}

\begin{itemize}
    \item[\textbf{C1.}] \textbf{(Rank-one extremality.)} We identify $\mathcal A_r$ exactly as a rank-truncated nuclear ball (Lemma~\ref{lem:factorization}) and show that the two functionals governing our analysis are maximized over it at rank one, independently of $r$ (Lemma~\ref{lem:extremal}). This is the engine for C2 and C5.
    \item[\textbf{C2.}] \textbf{(Rank collapse; Q1, negatively.)} The free-readout LoRA class is \emph{the same set of functions} for every $r\ge1$, and we compute its empirical Rademacher complexity exactly, with no $r$ (Proposition~\ref{prop:collapse}). The collapse persists at the level of the matrix-valued layer: the vector-valued complexity of $\{x\mapsto\Delta Wx:\Delta W\in\mathcal A_r\}$ equals $\rho\,\mathbb E\|S\|_2/n$ exactly, again with no $r$ (Proposition~\ref{prop:matrix-collapse}). No covering-number or chaining refinement can distinguish the scalar free-readout classes, which are identical as sets of functions; for the matrix-valued classes---which differ across $r$ as sets whenever $r<\min(m,d)$ and $\mathrm{span}(\mathcal X)=\mathbb R^d$---our claim is only that the vector-valued Rademacher complexity computed here is rank-free.
    \item[\textbf{C3.}] \textbf{(Exactly where rank enters; Q1, positively and conditionally.)} In the scalar linear-readout analysis rank never enters: fixing the readout alone, relaxing the per-factor budget to a joint Frobenius budget alone, and even both changes together all leave the scalar complexity rank-free (Proposition~\ref{prop:necessity}); the two changes are necessary for rank-dependence to be possible, not sufficient. Rank enters exactly at the matrix-valued level under the joint budget, where the complexity is $\rho\,\mathbb E\|S_r\|_F/n$ exactly (Lemma~\ref{lem:matrix-rad}); its operator-norm upper bound can improve on the rank-free Frobenius branch under an effective-rank condition on the empirical covariance (Remark~\ref{rem:when-rank-helps}), its $\sqrt{q_r}$, $q_r:=\min\{r,m,d\}$ ($=r$ in the practical regime), is the sharp worst-case nuclear--Frobenius comparison factor between the two constraint sets (Remark~\ref{rem:sqrt-r-is-the-gap}), and fixed nonlinear readouts inherit rank-sensitive upper bounds from it under the joint budget (Corollary~\ref{cor:rank-dep}). The worst-case simplification, by contrast, is provably rank-free (Lemma~\ref{lem:matrix-rad}(ii)).
    \item[\textbf{C4.}] \textbf{(Spectral threshold rank; Q2, with matching bounds.)} Under an alignment assumption that we formalize and verify in a concrete Gaussian model---uniformly along the entire family of cancellation weights at which it is invoked---the spectral threshold rank $r_{\mathrm{spec}}(\epsilon)$ of the top-block cancellation certificate satisfies upper and lower bounds with the same dependence on $\epsilon$ and on the spectral decay of $W_0$---of order $(1/\epsilon)^{1/(2\alpha)}$ under polynomial decay and $\log(1/\epsilon)$ under geometric decay---whose constants are governed respectively by $\|\Delta\Sigma\|_2$ and a directional alignment constant $c_1$, and which match in order when $c_1\asymp\|\Delta\Sigma\|_2$, in the non-saturated regime $r_{\mathrm{spec}}<\mathrm{rank}(W_0)$ (Theorem~\ref{thm:minrank}, Proposition~\ref{prop:lower}). The budget-feasible minimal certified rank, whenever it exists, coincides with $r_{\mathrm{spec}}$ (Remark~\ref{rem:rcert}). These feed two generalization routes: transport (Theorem~\ref{thm:gen}) and spectral cancellation (Corollary~\ref{cor:spectral-route}).
    \item[\textbf{C5.}] \textbf{(Cone geometry; Q3.)} Whenever some admissible adapted weight is singular---automatic when $m<d$---the divergence-defined transferability cone has ambient-scale radius: infinite on $\mathbb R^d$, and at least half the domain scale under a ball support restriction (Proposition~\ref{prop:trivial-radius}). Its pushforward-realizable restriction obeys a rank-independent \emph{universal} radius upper bound $\sqrt{\lambda_{\max}(\Sigma_S)}\,\rho$: sharp under a mild budget condition, witnessed by a rank-one Gaussian construction (Theorem~\ref{thm:cone}, Proposition~\ref{prop:tight}). Rank enters these radius bounds only through the nuclear budget required to cancel a given spectral block; the reachable-set inclusions $\mathcal A_1\subsetneq\cdots\subsetneq\mathcal A_{\min(m,d)}$ remain strict when $\min(m,d)\ge2$, but the \emph{universal} radius upper bound does not see them---for a fixed instance the exact radius may still vary with $r$, through the growing transferability intersection.
\end{itemize}

\subsection{Relation to Prior Work}
 
\subsubsection{Theoretical analyses of LoRA}
The expressive power of low-rank adaptation was characterized
by~\cite{zeng2024the}, who quantify the ranks needed for a LoRA-adapted
network to represent a target model; our Lemma~\ref{lem:factorization} can be read as a
budgeted refinement of the same reachability question at a single
layer, with norm constraints added. Recent work argues that LoRA can
match or beat full fine-tuning when the pre-training/downstream
divergence is effectively low-rank~\cite{zindari2026}. Our results are
compatible with, and sharpen, that picture: the benefit cannot come
from capacity reduction on the source domain, because there is none in
the classes we compute; it must come from the spectral-alignment
channel we isolate in Section~\ref{sec:spectral}. On the optimization side, RefLoRA
derives the optimal refactorization of the two LoRA factors and shows
it flattens the loss landscape~\cite{zhang2025reflora}, while
Bernoulli-LoRA provides convergence guarantees for randomized
factor updates~\cite{sokolov2025bernoulli}. These analyses concern the
\emph{trajectory} by which $\Delta W$ is reached; ours concerns the
\emph{set} of updates reachable at all under hard norm budgets, so the
two are complementary---indeed, whether optimization dynamics under
per-factor weight decay implicitly select low-nuclear-norm solutions
is exactly the question our framework poses but does not settle.
 
\subsubsection{Spectrally aware adaptation}
A growing line of methods parameterizes or modulates the update in the
spectral domain of the pretrained weight: SVFit initializes from the
top singular values of $W_0$~\cite{sun2024svfit}, SMoA modulates the
spectrum directly~\cite{liu2026smoa}, and spectral-aware LoRA variants
have proven effective in speaker verification~\cite{li2025spectral}.
Our Section~\ref{sec:spectral} provides a theoretical counterpart for this design
philosophy: the spectral threshold rank $r_{\mathrm{spec}}(\epsilon)$
identifies the top singular directions of $W_0$ as precisely the
locations where rank and budget genuinely bind, with matching upper
and lower bounds under two-sided spectral decay.
 
\subsubsection{Structural and compositional LoRA variants}
Structural variations of LoRA include dynamic rank
distribution~\cite{mao2024dora}, high-rank updating~\cite{lialin2023relora,
jiang2024mora}, granular and localized low-rank
structure~\cite{ren2024melora, barazandeh2025localized},
lower--diagonal--upper decompositions~\cite{shi2024lold}, and
block-wise low-rank experts with adaptive routing~\cite{barazandeh2026lmoe}.
Composition and sharing approaches include Bi-Share
LoRA~\cite{bishare2025}, Rank-Sharing LoRA~\cite{he2025rasa},
LoRAHub~\cite{huang2023lorahub}, and K-LoRA~\cite{ouyang2025klora};
architecture-specific variants integrate Mixture-of-Experts
designs~\cite{dou2023loramoe, yang2024moral}, mini-ensembles~\cite{ren2024melora},
multimodal contexts~\cite{huang2025keeping}, and hyperbolic
geometry~\cite{yang2024hyperbolic}. We emphasize that our analysis is
conducted for the standard global parameterization $\Delta W = BA$;
whether block-wise parameterizations such as~\cite{barazandeh2025localized,
barazandeh2026lmoe} escape the rank-one extremality mechanism is an
interesting open question, since a budget distributed across blocks
changes the geometry of the reachable set from a single nuclear ball
to a product of smaller ones.
 
\subsubsection{Domain adaptation and technical tools}
On the theory side, our discrepancy framework builds on classical
domain-adaptation theory~\cite{ben2006analysis} and its
optimal-transport variants~\cite{redko2017theoretical, shen2018wasserstein},
our concentration on matrix Bernstein inequalities~\cite{tropp2012user},
and our contraction arguments on~\cite{ledoux1991probability, maurer2016vector}.
 
\subsection{Organization}

Section~\ref{sec:framework} sets up notation, proves the rank-one extremality principle, and defines the discrepancy and the transferability cone. Section~\ref{sec:complexity} answers Q1. Section~\ref{sec:spectral} answers Q2. Section~\ref{sec:cone} answers Q3. Appendix~\ref{app:aux} collects auxiliary lemmas, the Gaussian instantiation of our alignment assumption, and the tightness witness; Appendix~\ref{app:remarks} collects secondary results and extended remarks; Appendix~\ref{app:proofs} contains all proofs.

\section{Setup and the Rank-One Extremality Principle}
\label{sec:framework}

\subsection{Background: The LoRA Parameterization}

Consider a pre-trained layer with weight $W_0\in\mathbb R^{m\times d}$,
mapping features $x\in\mathbb R^d$ to outputs $W_0x\in\mathbb R^m$. Full
fine-tuning adapts the layer by replacing $W_0$ with $W_0+\Delta W$ for
an unconstrained update $\Delta W\in\mathbb R^{m\times d}$, at the cost
of training, storing, and serving $md$ parameters per layer per task.
LoRA~\cite{hu2022lora} replaces the unconstrained update with the
factored ansatz
\begin{equation}
\label{eq:lora-ansatz}
\Delta W = BA,\qquad B\in\mathbb R^{m\times r},\quad A\in\mathbb R^{r\times d},
\end{equation}
for a small integer rank $r\ll\min(m,d)$, so that the adapted forward
pass is $x\mapsto W_0x+B(Ax)$. Only $B$ and $A$ are trained---$W_0$
stays frozen---which cuts the trainable parameters from $md$ to
$r(m+d)$ and, since $BA$ can be merged into $W_0$ after training, adds
no inference latency. The construction enforces
$\operatorname{rank}(\Delta W)\le r$ by design; its empirical
justification is that the updates learned during fine-tuning appear to
have low intrinsic rank, so little is lost by imposing the bottleneck.
In practice the factors are additionally norm-controlled, through
weight decay on $B$ and $A$ or explicit projection, and implementations
scale the update by $\alpha_{\mathrm{LoRA}}/r$ (see
Remark~\ref{rem:alpha-scaling}); the hard per-factor budgets analyzed
in this paper are the constraint-set idealization of exactly this norm
control. The mathematical question the parameterization raises is what
the pair (rank cap, norm budget) actually restricts: which updates are
reachable, and how complex the resulting model class is.
\subsection{Notation and Standing Conventions}

Let $\mathcal X\subseteq\mathbb R^d$ be the feature space at the input of the adapted layer. We write $D_S,D_T\in\mathcal P(\mathcal X)$ for the source and target feature distributions, $\Sigma_S=\mathbb E_{D_S}[xx^T]$ and $\Sigma_T=\mathbb E_{D_T}[xx^T]$ for their (uncentered) second-moment matrices, and $\Delta\Sigma=\Sigma_S-\Sigma_T$. The pre-trained weight is $W_0\in\mathbb R^{m\times d}$ and a LoRA update $\Delta W=BA$ is as in \eqref{eq:lora-ansatz}, for an integer rank $r\ge1$, so $\mathrm{rank}(\Delta W)\le r$ automatically. Per-factor budgets are $\|B\|_F\le B_B$ and $\|A\|_F\le B_A$, and we abbreviate their product as $\rho:=B_BB_A$; we assume $B_B,B_A>0$ throughout (if either vanishes, $\mathcal A_r=\{0\}$: the fixed-budget complexity and transport bounds reduce accordingly, while the cancellation results simply have their explicitly stated budget conditions fail). For a matrix $M$ we write $\|M\|_2$, $\|M\|_F$, $\|M\|_*$ for the operator, Frobenius, and nuclear norms, $s_1(M)\ge s_2(M)\ge\cdots$ for its singular values, with the convention $s_i(M)=0$ for $i>\mathrm{rank}(M)$ (we reserve $\sigma$ for Rademacher variables), and $M_k$ for its best rank-$k$ approximation.

We use two Rademacher complexities, both \emph{empirical} (conditional on a fixed sample $x_1,\dots,x_n$). For a class $\mathcal F$ of scalar functions and i.i.d.\ signs $\sigma_i\in\{\pm1\}$,
\begin{equation}
\mathfrak R_n(\mathcal F) := \frac1n\,\mathbb E_\sigma\Big[\sup_{f\in\mathcal F}\sum_{i=1}^n\sigma_if(x_i)\Big];
\end{equation}
for a class $F$ of $\mathbb R^m$-valued functions and i.i.d.\ Rademacher \emph{vectors} $\sigma_i\in\{\pm1\}^m$ (each coordinate an independent sign),
\begin{equation}
\mathfrak R_n^{\mathrm{vec}}(F) := \frac1n\,\mathbb E_\sigma\Big[\sup_{f\in F}\sum_{i=1}^n\langle\sigma_i,f(x_i)\rangle\Big].
\end{equation}
Both are conditional on the sample; we write $\bar{\mathfrak R}_n(\cdot):=\mathbb E_{x_{1:n}}\big[\mathfrak R_n(\cdot)\big]$ for the corresponding \emph{expected} (sample-averaged) complexity. All complexity bounds we prove are uniform over samples satisfying the boundedness assumption below, so they apply verbatim to the expected complexities $\bar{\mathfrak R}_n$ used in Theorem~\ref{thm:gen} and Corollary~\ref{cor:spectral-route}.

\begin{assumption}[Bounded Features]
\label{ass:bddfeat}
$\|x\|_2\le X_{\max}$ almost surely under $D_S$.
\end{assumption}

Assumption~\ref{ass:bddfeat} is invoked \emph{only} where stated (Sections~\ref{sec:complexity} and~\ref{sec:spectral}); it is deliberately not a global standing hypothesis, since several of our sharpness witnesses are Gaussian and would otherwise be excluded. Where we use Gaussian sources we say so explicitly and do not simultaneously invoke Assumption~\ref{ass:bddfeat}.

The central object is the reachable update set,
\begin{equation}
\label{eq:Ar}
\begin{aligned}
\mathcal A_r
&:= \Bigl\{BA:\ B\in\mathbb R^{m\times r},\
A\in\mathbb R^{r\times d},\\
&\qquad
\|B\|_F\le B_B,\quad
\|A\|_F\le B_A
\Bigr\}.
\end{aligned}
\end{equation}

The practical $\alpha_{\mathrm{LoRA}}/r$ scaling used in implementations is accounted for in Remark~\ref{rem:alpha-scaling} (Appendix~\ref{app:remarks}).

\subsection{The Rank-One Extremality Principle}

Everything in this paper follows from two facts about $\mathcal A_r$. The first identifies it; the second says the rank cap is inactive.

\begin{lemma}[Exact Description of the Reachable Set]
\label{lem:factorization}
For every $r\ge1$,
\begin{equation}
\mathcal A_r = \big\{\Delta W\in\mathbb R^{m\times d}\ :\ \mathrm{rank}(\Delta W)\le r,\ \ \|\Delta W\|_*\le\rho\big\}.
\end{equation}
That is, per-factor Frobenius budgets are exactly a \emph{nuclear-norm} budget of radius $\rho=B_BB_A$ on the product, together with the rank cap.
\end{lemma}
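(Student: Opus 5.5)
We prove the two inclusions separately. The key tool is the classical variational identity
\[
\|M\|_*=\min\Big\{\tfrac12(\|B\|_F^2+\|A\|_F^2): BA=M\Big\}=\min\big\{\|B\|_F\|A\|_F: BA=M\big\},
\]
where the minima range over factorizations with inner dimension at least $\mathrm{rank}(M)$. We need only the inequality half for one direction and an explicit balanced construction for the other.

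\emph{Inclusion $\subseteq$.} Let $\Delta W=BA$ with $B\in\mathbb R^{m\times r}$, $A\in\mathbb R^{r\times d}$. Then $\mathrm{rank}(\Delta W)\le r$ is immediate. For the norm, write $BA=\sum_{j=1}^r b_ja_j^T$, with $b_j$ the columns of $B$ and $a_j^T$ the rows of $A$. By the triangle inequality for the nuclear norm and $\|b a^T\|_*=\|b\|_2\|a\|_2$,
\[
\|BA\|_*\le\sum_j\|b_j\|_2\|a_j\|_2\le\|B\|_F\|A\|_F\le B_BB_A=\rho,
\]
where the middle step is Cauchy--Schwarz. Alternatively one can use the duality $\|BA\|_*=\sup_{\|Z\|_2\le1}\langle BA,Z\rangle$ together with $|\mathrm{tr}(Z^TBA)|\le\|B\|_F\|Z^T B\|\ldots$, but the column/row decomposition is cleaner.

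\emph{Inclusion $\supseteq$.} Let $\mathrm{rank}(\Delta W)=k\le r$ and $\|\Delta W\|_*\le\rho$, with compact SVD $\Delta W=U\,\mathrm{diag}(s_1,\dots,s_k)V^T$. If $k=0$, take $B=0$, $A=0$. Otherwise use the balanced factorization rescaled to the budget ratio: set $t:=\sqrt{B_B/B_A}$ and
\[
B_0=t\,U\,\mathrm{diag}(\sqrt{s_i}),\qquad A_0=t^{-1}\mathrm{diag}(\sqrt{s_i})V^T.
\]
Then $B_0A_0=\Delta W$, and
\[
\|B_0\|_F^2=t^2\sum_i s_i=\tfrac{B_B}{B_A}\|\Delta W\|_*\le\tfrac{B_B}{B_A}\rho=B_B^2,
\]
and symmetrically $\|A_0\|_F^2\le B_A^2$. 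Finally, pad $B_0$ with $r-k$ zero columns and $A_0$ with $r-k$ zero rows to obtain factors of the required shapes $m\times r$ and $r\times d$. The product and the Frobenius norms are unchanged, so $\Delta W\in\mathcal A_r$.

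\emph{Main obstacle.} The only delicate point is matching the two separate budgets $B_B$ and $B_A$ rather than just their product. Naive balanced square-root factors give $\|B\|_F^2=\|A\|_F^2=\|\Delta W\|_*$, which can violate one budget when $B_B\neq B_A$. The rescaling by $t$ resolves this, and it is legitimate because $B_B,B_A>0$ by the standing convention. The remaining cases, $r$ exceeding $\min(m,d)$ and $\Delta W=0$, are handled by the zero-padding, which is valid since $k\le r$.
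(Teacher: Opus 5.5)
Your proof is correct and is essentially the paper's argument. For $\subseteq$ you rely on $\|BA\|_*\le\|B\|_F\|A\|_F$, which you prove via the column--row rank-one decomposition and Cauchy--Schwarz instead of the paper's duality argument; for $\supseteq$ you use the zero-padded, rescaled balanced SVD factorization, and your $t=\sqrt{B_B/B_A}$ is just the geometric mean of the endpoints of the paper's admissible interval $c\in[\|A_0\|_F/B_A,\,B_B/\|B_0\|_F]$ (the abandoned duality aside is garbled as written, but nothing in your proof depends on it).
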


\begin{lemma}[Rank-One Extremality]
\label{lem:extremal}
Fix any $M\in\mathbb R^{m\times d}$ and any $r\ge1$. Then
\begin{align}
\sup_{\Delta W\in\mathcal A_r}\ \langle\Delta W,M\rangle_F &= \rho\,\|M\|_2,\label{eq:extremal-lin}\\
\sup_{\Delta W\in\mathcal A_r}\ \|\Delta W\|_F &= \rho.\label{eq:extremal-frob}
\end{align}
Both suprema are attained at rank-one points of $\mathcal A_1\subseteq\mathcal A_r$, and neither depends on $r$.
\end{lemma}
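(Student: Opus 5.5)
The plan is to invoke Lemma~\ref{lem:factorization} to replace $\mathcal A_r$ by the rank-$r$ slice of the nuclear ball of radius $\rho$, prove matching upper and lower bounds for each supremum, and note that the lower bounds come from a rank-one witness in $\mathcal A_1\subseteq\mathcal A_r$.

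\textbf{Upper bounds.} For the linear functional we use the duality between nuclear and operator norms (von Neumann's trace inequality). For any $\Delta W$,
\[
\langle\Delta W,M\rangle_F\le\|\Delta W\|_*\|M\|_2 .
\]
Hence every $\Delta W\in\mathcal A_r$ gives a value at most $\rho\|M\|_2$. For the Frobenius norm we use the elementary comparison $\|\Delta W\|_F\le\|\Delta W\|_*$. It follows from $\sum_i s_i^2\le(\sum_i s_i)^2$ for nonnegative singular values. This gives $\|\Delta W\|_F\le\rho$ on $\mathcal A_r$.

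Neither upper bound uses the rank constraint at all. It also holds directly from the factorization: one can verify $\|BA\|_*\le\|B\|_F\|A\|_F$ via Cauchy--Schwarz without citing Lemma~\ref{lem:factorization}.

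\textbf{Attainment at rank one.} Let $u,v$ be a top singular pair of $M$, so $u^TMv=\|M\|_2$ with $\|u\|_2=\|v\|_2=1$. Set $\Delta W^\star=\rho\,uv^T$. Realize it as $B=B_B u\,e_1^T$ and $A=B_A e_1 v^T$, padding the remaining $r-1$ columns of $B$ and rows of $A$ with zeros. Then $\|B\|_F=B_B$ and $\|A\|_F=B_A$, so $\Delta W^\star\in\mathcal A_1\subseteq\mathcal A_r$. Its values are $\langle\Delta W^\star,M\rangle_F=\rho\,u^TMv=\rho\|M\|_2$ and $\|\Delta W^\star\|_F=\rho\|u\|_2\|v\|_2=\rho$.

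If $M=0$, any point works and the value is $0$. For the Frobenius supremum, any unit vectors $u,v$ will do.

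Combining the upper and lower bounds gives \eqref{eq:extremal-lin} and \eqref{eq:extremal-frob} with equality. Both values are attained at a rank-one point, and neither depends on $r$.

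\textbf{Main obstacle.} There is essentially none; the only delicate point is the nuclear--operator duality step. I would either cite it as standard or prove it quickly. Write $\Delta W=\sum_i s_i a_i b_i^T$ via the SVD. Then
\[
\langle\Delta W,M\rangle_F=\sum_i s_i\,a_i^TMb_i\le\sum_i s_i\|M\|_2 .
\]

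The inclusion $\mathcal A_1\subseteq\mathcal A_r$ is the zero-padding argument above.
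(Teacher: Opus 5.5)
Your proposal is correct and matches the paper's proof: nuclear--operator duality and $\|\cdot\|_F\le\|\cdot\|_*$ give the upper bounds, and the rank-one witness $\rho\,uv^T$ built from a top singular pair of $M$ attains them. The only difference is cosmetic: you certify that the witness is admissible with the explicit factors $B=B_Bue_1^T$, $A=B_Ae_1v^T$, whereas the paper uses the $\supseteq$ direction of Lemma~\ref{lem:factorization}, so your argument does not depend on that lemma.
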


\begin{remark}[Scope of the Extremality Principle]
\label{rem:engine}
Statistical complexity is controlled by suprema of the linear functional \eqref{eq:extremal-lin} against a Rademacher signal matrix; Wasserstein displacement under a linear pushforward is controlled by \eqref{eq:extremal-frob}. Lemma~\ref{lem:extremal} therefore forecloses rank-dependence in both settings simultaneously, and does so for structural reasons---the extremizer of a linear functional over the convex hull of rank-one matrices is rank one---rather than through any looseness of a proof technique. Sections~\ref{sec:complexity} and~\ref{sec:cone} are, in this sense, two readings of the same lemma.
\end{remark}

\subsection{Discrepancy, Cone, and Risk}

\begin{definition}[Score Discrepancy]
\label{def:disc}
Let $\mathcal H$ be a class of real-valued measurable score functions on $\mathcal X$. The score discrepancy between $D_S$ and $D_T$ relative to $\mathcal H$ is
\begin{equation}
\begin{split}
\tilde{d}_{\mathcal H}(D_S, D_T) = \sup_{h, h' \in \mathcal{H}} \Big| & \mathbb{E}_{x \sim D_S}\big[|h(x) - h'(x)|\big] \\
& - \mathbb{E}_{x \sim D_T}\big[|h(x) - h'(x)|\big] \Big|.
\end{split}
\end{equation}
\end{definition}

Remark~\ref{rem:disc-position} (Appendix~\ref{app:remarks}) situates $\tilde d_{\mathcal H}$ among existing divergences; every result below is stated and proved for $\tilde d_{\mathcal H}$ natively.

\begin{definition}[Wasserstein Distances]
\label{def:wass}
For $D_S,D_T\in\mathcal P(\mathcal X)$ and $\Pi(D_S,D_T)$ the set of couplings, define $W_1$ whenever both distributions have finite first moments and $W_2$ whenever both have finite second moments:
\begin{align}
W_1(D_S, D_T) &= \sup_{\|f\|_L \le 1} \left| \mathbb{E}_{D_S}[f(x)] - \mathbb{E}_{D_T}[f(x)] \right|,\\
W_2(D_S,D_T) &= \Big(\inf_{\pi\in\Pi(D_S,D_T)} \mathbb E_{(x,y)\sim\pi}\|x-y\|_2^2\Big)^{1/2},
\end{align}
the first identity being Kantorovich--Rubinstein duality.
\end{definition}

Since adaptation changes the layer weight, the natural hypothesis class against which to measure discrepancy is the one induced by a \emph{fixed adapted weight} with variable readouts.

\begin{definition}[Induced Readout Class]
\label{def:induced}
Fix, for each $W\in\mathbb R^{m\times d}$, a set $\Psi_W$ of measurable readouts $\psi:\mathbb R^m\to\mathbb R$; when a single set is used for all weights we write $\Psi_W\equiv\Psi$. The induced class is $\mathcal H_W := \{x\mapsto\psi(Wx) : \psi\in\Psi_W\}$.
\end{definition}

\begin{definition}[LoRA-Transferability Cone]
\label{def:cone}
Given $W_0$, a source $D_S$, a rank $r$, budgets $(B_B,B_A)$, and a tolerance $\epsilon\ge0$,
\begin{equation}
\begin{split}
\mathcal{C}_r(W_0, D_S;\epsilon) = \Big\{ D_T \in \mathcal{P}(\mathcal{X}) : & \\
\exists\, \Delta W \in\mathcal A_r, \ & \tilde{d}_{\mathcal H_{W_0+\Delta W}}(D_S, D_T) \le \epsilon \Big\}.
\end{split}
\end{equation}
\end{definition}

The norm budgets are part of the cone's definition: without them no radius statement is well-posed. As Proposition~\ref{prop:trivial-radius} shows, even with them the divergence-defined cone has an ambient-scale radius whenever some admissible adapted weight is singular, which is what motivates the pushforward restriction of Section~\ref{sec:cone}.

\paragraph*{Risk model}
We adopt the deterministic labeling-function model of \cite{ben2006analysis}: each domain $D\in\{D_S,D_T\}$ carries a measurable $f_D:\mathcal X\to\mathbb R$, and the risk of a score hypothesis $h$ is $\epsilon_D(h) := \mathbb E_{x\sim D}|h(x)-f_D(x)|$. Given a sample $x_1,\dots,x_{n_S}\sim D_S$, the empirical source risk is $\hat\epsilon_S(h):=\tfrac1{n_S}\sum_{i=1}^{n_S}|h(x_i)-f_S(x_i)|$.

\begin{assumption}[Bounded Loss]
\label{ass:bounded}
There is $b<\infty$ with $|h(x)-f_D(x)|\le b$ for all $h$ in the class under consideration, all $x\in\mathcal X$, and $D\in\{D_S,D_T\}$.
\end{assumption}

\section{Q1: Statistical Complexity---Collapse, and What It Would Take to Avoid It}
\label{sec:complexity}

We first record the baseline adaptation bound, which localizes where rank could possibly enter. We then show it does not enter the classes we compute (Propositions~\ref{prop:collapse} and~\ref{prop:matrix-collapse}), show that no scalar single-layer linear-readout formalization considered here exhibits rank-dependence---each natural modification alone, and even both together, leaves the scalar complexity rank-free, with or without the frozen weight (Proposition~\ref{prop:necessity}, Remark~\ref{rem:frozen-scalar})---and locate exactly where rank does enter: the matrix-valued complexity under the joint budget (Lemma~\ref{lem:matrix-rad}, Corollary~\ref{cor:rank-dep}).

\subsection{The Baseline Adaptation Bound}

\begin{theorem}[Adaptation Bound for the Score Discrepancy]
\label{thm:da}
Let $\mathcal H$ be any class of real-valued score functions and let $\lambda^* := \inf_{h'\in\mathcal H}[\epsilon_S(h')+\epsilon_T(h')]$ be the ideal joint risk (a value, not a minimizer). Then for every $h\in\mathcal H$,
\begin{equation}
\epsilon_T(h) \le \epsilon_S(h) + \tilde{d}_{\mathcal H}(D_S, D_T) + \lambda^*.
\end{equation}
\end{theorem}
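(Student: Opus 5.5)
The plan is to follow the classical Ben-David et al.\ argument, adapted to the absolute-value risk and to the score discrepancy $\tilde d_{\mathcal H}$. Since $\lambda^*$ is an infimum that need not be attained, I will fix an arbitrary $h'\in\mathcal H$, prove
\[
\epsilon_T(h)\le\epsilon_S(h)+\tilde d_{\mathcal H}(D_S,D_T)+\epsilon_S(h')+\epsilon_T(h'),
\]
and then take the infimum over $h'$ at the end. This avoids needing a minimizer and matches the parenthetical ``a value, not a minimizer'' in the statement.

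The key steps, in order, are three applications of inequalities. First, the pointwise triangle inequality $|h-f_T|\le|h-h'|+|h'-f_T|$, integrated under $D_T$, gives $\epsilon_T(h)\le\mathbb E_{D_T}|h-h'|+\epsilon_T(h')$. Second, I add and subtract $\mathbb E_{D_S}|h-h'|$. Since $h,h'\in\mathcal H$, Definition~\ref{def:disc} yields
\[
\mathbb E_{D_T}|h-h'|-\mathbb E_{D_S}|h-h'|\le\tilde d_{\mathcal H}(D_S,D_T).
\]
Third, the triangle inequality again, now under $D_S$, gives $\mathbb E_{D_S}|h-h'|\le\epsilon_S(h)+\epsilon_S(h')$. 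Chaining these three bounds produces the displayed inequality, and taking the infimum over $h'\in\mathcal H$ of the terms involving $h'$ replaces $\epsilon_S(h')+\epsilon_T(h')$ by $\lambda^*$.

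The only delicate point is the well-definedness of the expectations, since the difference inside $\tilde d_{\mathcal H}$ must be finite to be meaningful. I would either note that the bound is trivial when $\epsilon_S(h)$, $\epsilon_S(h')$, or $\epsilon_T(h')$ is infinite, or invoke Assumption~\ref{ass:bounded}. Under that assumption, $|h-h'|\le|h-f_D|+|h'-f_D|\le 2b$, so every expectation is finite and the subtraction is legitimate. Beyond this integrability bookkeeping I do not expect a real obstacle; the argument is a direct three-line chain.
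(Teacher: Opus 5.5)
Your proposal is correct and follows the paper's proof essentially step for step. Both fix $h'$, apply the triangle inequality under $D_T$, pass from $\mathbb E_{D_T}|h-h'|$ to $\mathbb E_{D_S}|h-h'|$ via the definition of $\tilde d_{\mathcal H}$, apply the triangle inequality through $f_S$ under $D_S$, and take the infimum over $h'$.

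Your integrability remark is a harmless addition the paper omits. Note that invoking Assumption~\ref{ass:bounded} would narrow the theorem's stated generality (``any class''). The extended-real convention is the option consistent with that generality: under it, an infinite $\mathbb E_{D_T}|h-h'|$ alongside a finite $\mathbb E_{D_S}|h-h'|$ simply forces $\tilde d_{\mathcal H}=\infty$.
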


Theorem~\ref{thm:da} splits target risk into source performance, a distributional discrepancy, and an ideal joint risk. The first term is controlled by Rademacher complexity, which is the subject of this section; the second by spectral alignment and transport, which is Section~\ref{sec:spectral}. The question throughout is where $r$ enters.

\subsection{Collapse at the Scalar Level}

The most direct formalization of a LoRA-adapted scalar score jointly optimizes a bounded readout direction $\mathbf w$ and the update.

\begin{proposition}[Collapse of the Free-Readout Class]
\label{prop:collapse}
Let $\mathcal F_r$ be the class
\begin{equation}
h_{\mathbf w,\Delta W}(x) = \mathbf{w}^T (W_0 + \Delta W) x,\qquad \|\mathbf{w}\|_2 \le 1,\ \Delta W\in\mathcal A_r.
\end{equation}
Then, writing $v:=\sum_{i=1}^n\sigma_ix_i$:
\begin{enumerate}
\item[(i)] \emph{(Set identity.)} $\mathcal F_r=\mathcal F_1$ for every $r\ge1$, as sets of functions.
\item[(ii)] \emph{(Exact complexity.)} $\displaystyle\mathfrak R_n(\mathcal F_r)=\frac1n\,\mathbb E_\sigma\big[\|W_0v\|_2+\rho\|v\|_2\big]$, with no $r$-dependence, and under Assumption~\ref{ass:bddfeat}
\begin{equation}
\label{eq:collapsed}
\mathfrak{R}_n(\mathcal{F}_r) \le \frac{X_{\max}}{\sqrt{n}}\big(\|W_0\|_2 + \rho\big).
\end{equation}
\end{enumerate}
\end{proposition}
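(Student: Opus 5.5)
The plan is to reduce both parts to Lemma~\ref{lem:factorization} and Lemma~\ref{lem:extremal}, working with the functional $h_{\mathbf w,\Delta W}(x)=\mathbf w^TW_0x+\mathbf w^T\Delta Wx$ directly.

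For the set identity (i), note that $\mathbf w^T\Delta W x=\langle \Delta W,\mathbf w x^T\rangle_F$. Fix $(\mathbf w,\Delta W)$ with $\Delta W\in\mathcal A_r$. The function $x\mapsto\mathbf w^T\Delta Wx$ depends on $\Delta W$ only through the row vector $u^T:=\mathbf w^T\Delta W\in\mathbb R^{1\times d}$. I will construct a rank-one $\Delta W'\in\mathcal A_1$ with $\mathbf w^T\Delta W'=u^T$ and the same $\mathbf w$. If $\mathbf w=0$ this is trivial, since we may take $\Delta W'=0$. Otherwise set $\Delta W'=\mathbf w u^T/\|\mathbf w\|_2^2$. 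Then $\mathbf w^T\Delta W'=u^T$, and
\[
\|\Delta W'\|_*=\frac{\|u\|_2}{\|\mathbf w\|_2}\le\|\Delta W\|_2\le\|\Delta W\|_*\le\rho .
\]
Here $\|u\|_2=\|\Delta W^T\mathbf w\|_2\le\|\Delta W\|_2\|\mathbf w\|_2$. By Lemma~\ref{lem:factorization}, $\Delta W'\in\mathcal A_1\subseteq\mathcal A_r$. This gives $\mathcal F_r\subseteq\mathcal F_1$. The reverse inclusion holds because $\mathcal A_1\subseteq\mathcal A_r$: pad $B,A$ with zero columns and rows.

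For (ii), write the supremum as
\[
\sup_{\|\mathbf w\|\le1}\sup_{\Delta W\in\mathcal A_r}\Big[\mathbf w^TW_0v+\langle\Delta W,\mathbf w v^T\rangle_F\Big].
\]
By Lemma~\ref{lem:extremal}, the inner supremum equals $\mathbf w^TW_0v+\rho\|\mathbf w v^T\|_2=\mathbf w^TW_0v+\rho\|\mathbf w\|_2\|v\|_2$. Maximizing over the unit ball is the step needing some care.

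For the upper bound, Cauchy--Schwarz gives at most $\|W_0v\|_2+\rho\|v\|_2$. For attainment, take $\mathbf w=W_0v/\|W_0v\|_2$ when $W_0v\neq0$, which has $\|\mathbf w\|_2=1$. This gives exactly $\|W_0v\|_2+\rho\|v\|_2$. If $W_0v=0$, take any unit $\mathbf w$. Taking expectation over $\sigma$ and dividing by $n$ yields the exact formula, which has no $r$.

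Finally, bound $\|W_0v\|_2\le\|W_0\|_2\|v\|_2$. Then apply Jensen:
\[
\mathbb E_\sigma\|v\|_2\le\big(\mathbb E\|v\|_2^2\big)^{1/2}=\Big(\sum_i\|x_i\|_2^2\Big)^{1/2}\le\sqrt n\,X_{\max},
\]
using independence of the signs and Assumption~\ref{ass:bddfeat}. This gives \eqref{eq:collapsed}.

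I expect the only delicate point to be the joint supremum over $(\mathbf w,\Delta W)$. Its value is not simply the sum of separate suprema unless a common $\mathbf w$ is shown to attain both terms. The alignment choice $\mathbf w\propto W_0v$ works because the $\Delta W$ term depends on $\mathbf w$ only through $\|\mathbf w\|_2$.
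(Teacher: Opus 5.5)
Your proposal is correct and follows essentially the same route as the paper: the identical rank-one surrogate $\Delta W'=\|\mathbf w\|_2^{-2}\mathbf w(\mathbf w^T\Delta W)$ with the chain $\|\Delta W^T\mathbf w\|_2/\|\mathbf w\|_2\le\|\Delta W\|_2\le\|\Delta W\|_*\le\rho$ for (i), and Lemma~\ref{lem:extremal} applied to $M=\mathbf wv^T$ followed by Jensen for (ii). The only cosmetic difference is in the outer supremum over $\mathbf w$. You exhibit the explicit maximizer $\mathbf w=W_0v/\|W_0v\|_2$, whereas the paper writes $\mathbf w=s\hat{\mathbf w}$ and observes that the bracket is nonnegative, so $s=1$ is optimal.
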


Remark~\ref{rem:collapse-mechanism} (Appendix~\ref{app:remarks}) details the rank-one surrogate mechanism behind part (i) and its consequences for covering-number and chaining refinements.

\subsection{Collapse Persists at the Matrix Level}

A natural response is that the collapse is an artifact of the scalar readout, and that bounding the matrix-valued layer directly will restore $r$. It does not.

\begin{proposition}[Collapse of the Adaptation Layer]
\label{prop:matrix-collapse}
Let $S:=\sum_{i=1}^n\sigma_ix_i^T$ with $\sigma_i\in\{\pm1\}^m$ i.i.d.\ Rademacher vectors. Then for every $r\ge1$,
\begin{equation}
\mathfrak R_n^{\mathrm{vec}}\big(\{x\mapsto\Delta Wx:\Delta W\in\mathcal A_r\}\big) = \frac{\rho}{n}\,\mathbb E_\sigma\|S\|_2,
\end{equation}
exactly, with no dependence on $r$.
\end{proposition}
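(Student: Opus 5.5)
The plan is to rewrite the vector-valued Rademacher sum as a single Frobenius inner product and then apply Lemma~\ref{lem:extremal}. For a fixed sample and fixed sign vectors $\sigma_1,\dots,\sigma_n\in\{\pm1\}^m$, and any $\Delta W$, we have $\langle\sigma_i,\Delta Wx_i\rangle=\sigma_i^T\Delta Wx_i=\mathrm{tr}(\Delta W x_i\sigma_i^T)=\langle\Delta W,\sigma_ix_i^T\rangle_F$. Summing over $i$ gives
\begin{equation*}
\sum_{i=1}^n\langle\sigma_i,\Delta Wx_i\rangle=\langle\Delta W,S\rangle_F,\qquad S=\sum_{i=1}^n\sigma_ix_i^T\in\mathbb R^{m\times d},
\end{equation*}
so the supremum inside the expectation defining $\mathfrak R_n^{\mathrm{vec}}$ is exactly $\sup_{\Delta W\in\mathcal A_r}\langle\Delta W,S\rangle_F$.

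Next, apply \eqref{eq:extremal-lin} of Lemma~\ref{lem:extremal} with $M=S$, realization by realization. For every fixed $\sigma$ this supremum equals $\rho\|S\|_2$, it is attained at the rank-one point $\rho\,u_1v_1^T$ built from the top singular pair of $S$, and it does not depend on $r$. The fact that it is attained and is a deterministic identity for each $\sigma$ means no measurability or interchange issue arises. Taking $\mathbb E_\sigma$ and dividing by $n$ then yields $\frac{\rho}{n}\mathbb E_\sigma\|S\|_2$ exactly. This quantity is finite because $\sigma$ ranges over a finite set.

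If I were to verify Lemma~\ref{lem:extremal} inline rather than cite it, the argument has two halves. The upper bound uses Lemma~\ref{lem:factorization} together with trace duality: $\langle\Delta W,S\rangle_F\le\|\Delta W\|_*\|S\|_2\le\rho\|S\|_2$. The matching lower bound uses the rank-one witness $\Delta W=\rho\,u_1v_1^T$, which is realizable as $B=\sqrt{B_B}\ldots$; more simply, take $B=B_Bu_1e_1^T$ and $A=B_Ae_1v_1^T$. These satisfy the factor budgets and give $\langle\Delta W,S\rangle_F=\rho\,u_1^TSv_1=\rho\|S\|_2$.

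The only step needing care is the trace identity with a vector sign $\sigma_i$ rather than a scalar one, so that the signal matrix is $\sum_i\sigma_ix_i^T$. Beyond that, the proposition is a direct corollary of Lemma~\ref{lem:extremal}, and I expect no real obstacle.
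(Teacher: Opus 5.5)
Your proposal is correct and follows essentially the same route as the paper's proof: you rewrite the vector-valued Rademacher sum as $\langle\Delta W,S\rangle_F$ via the trace identity, apply Lemma~\ref{lem:extremal} with $M=S$ for each sign realization, then take $\mathbb E_\sigma$ and divide by $n$. Your explicit factorization $B=B_B u_1e_1^T$, $A=B_A e_1v_1^T$ is a valid way to confirm that the rank-one maximizer is reachable; the paper gets this from Lemma~\ref{lem:factorization} instead.
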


So under LoRA's own parameterization the rank is invisible to the exact complexity functionals computed here, at both levels. (The underlying matrix classes $\mathcal A_r$ differ across $r$ for $r<\min(m,d)$; the induced sets of vector-valued maps $\{x\mapsto\Delta Wx:\Delta W\in\mathcal A_r\}$ then differ whenever $\mathrm{span}(\mathcal X)=\mathbb R^d$, so that distinct matrices induce distinct maps. What Proposition~\ref{prop:matrix-collapse} shows is that their vector-valued Rademacher complexity does not differ. Whether a \emph{fixed nonlinear} readout composed with these sets can exhibit rank-dependence is not excluded by our upper bounds, which are rank-free but are bounds rather than identities at that level.) This is the paper's core negative finding, and it is worth being precise about what must be given up to escape it in the scalar analysis.

\begin{definition}[Joint-Budget Relaxation]
\label{def:Brho}
For $r\ge1$ let
\begin{equation}
\mathcal B_{r,\rho} := \big\{\Delta W : \mathrm{rank}(\Delta W)\le r,\ \|\Delta W\|_F\le\rho\big\}.
\end{equation}
\end{definition}

Write $q_r:=\min\{r,m,d\}$, the maximal rank available in $\mathcal B_{r,\rho}$. By Lemma~\ref{lem:factorization} and the chain $\|M\|_F\le\|M\|_*\le\sqrt{\mathrm{rank}(M)}\,\|M\|_F\le\sqrt{q_r}\,\|M\|_F$, valid for every $M\in\mathcal B_{r,\rho}$,
\begin{equation}
\label{eq:sandwich}
\mathcal A_r\ \subseteq\ \mathcal B_{r,\rho}\ \subseteq\ \mathcal A_r^{(\sqrt{q_r}\,\rho)},
\end{equation}
where $\mathcal A_r^{(\rho')}:=\{\Delta W:\mathrm{rank}(\Delta W)\le r,\ \|\Delta W\|_*\le\rho'\}$, the reachable set under any per-factor budgets of product $\rho'$ (Lemma~\ref{lem:factorization}); the inclusions are strict for $2\le r\le\min(m,d)$, while for $\min(m,d)=1$ one has $q_r=1$ and all three sets coincide, and $\mathcal B_{1,\rho}=\mathcal A_1$. In the practically relevant regime $r\le\min(m,d)$, $q_r=r$. Operationally, $\mathcal B_{r,\rho}$ is what one obtains by regularizing the \emph{product} $\|BA\|_F$ rather than the factors separately.

One might hope to restore rank-dependence within the scalar analysis by
fixing the readout, or by relaxing the per-factor budget to the joint
budget $\mathcal B_{r,\rho}$. Neither modification alone suffices, and
even both together leave the scalar complexity rank-free, with or
without the frozen weight (Proposition~\ref{prop:necessity} and
Remark~\ref{rem:frozen-scalar}, Appendix~\ref{app:remarks}). The reason
is transparent: a scalar linear readout---free or fixed---makes the
effective signal matrix $\mathbf wv^T$ rank one, and a rank cap
$r\ge1$ cannot bind against a rank-one signal. Rank can only help when
the signal matrix has more than $r$ significant directions, which
requires the matrix-valued setting we treat next.

\subsection{Rank-Sensitivity Under the Joint Budget}

\begin{lemma}[Matrix-Valued Rademacher Complexity Under $\mathcal B_{r,\rho}$]
\label{lem:matrix-rad}
Suppose Assumption~\ref{ass:bddfeat} holds. Let $S:=\sum_{i=1}^n\sigma_ix_i^T$ with $\sigma_i\in\{\pm1\}^m$ i.i.d.\ Rademacher vectors, and set
\begin{equation}
\mathfrak{R}_n^{\mathrm{mat}} := \frac{1}{n}\,\mathbb{E}_{\sigma}\Big[\sup_{\Delta W\in\mathcal B_{r,\rho}} \langle \Delta W,\ S\rangle_F\Big].
\end{equation}
Let $G_n:=\sum_{i=1}^nx_ix_i^T$ and, when $G_n\ne0$ (i.e.\ some $x_i\ne0$), $r_{\mathrm{eff}}(G_n):=\mathrm{tr}(G_n)/\|G_n\|_2\in[1,\min(n,d)]$. Then:
\begin{enumerate}
\item[(i)] \emph{(Exact form and data-dependent bound.)} $\mathfrak R_n^{\mathrm{mat}}=\tfrac\rho n\mathbb E\|S_r\|_F$, and

\begin{equation}
\label{eq:datadep}
\begin{aligned}
\mathfrak{R}_n^{\mathrm{mat}}
&\le \frac{\rho}{n}\,
\min\Bigl\{\sqrt{q_r}\,\mathbb E\|S\|_2,\ \mathbb E\|S\|_F\Bigr\},\\
&\qquad q_r=\min\{r,m,d\}\ \text{as in \eqref{eq:sandwich}},
\end{aligned}
\end{equation}
where, with $v_n := \max\big(\mathrm{tr}(G_n),\ m\|G_n\|_2\big)$,
\begin{align}
\mathbb E\|S\|_F &\le \sqrt{m\,\mathrm{tr}(G_n)},\label{eq:frob-branch}\\
\mathbb E\|S\|_2 &\le \sqrt{2\,v_n\log(m+d)} + \tfrac{\sqrt m\,X_{\max}}{3}\log(m+d).\label{eq:op-branch}
\end{align}
\item[(ii)] \emph{(The worst case is rank-free.)} If $n\ge\log(m+d)$ then $\mathbb E\|S\|_2\le2X_{\max}\sqrt{mn\log(m+d)}$ and $\mathbb E\|S\|_F\le X_{\max}\sqrt{mn}$, hence
\begin{equation}
\label{eq:lem2-fixed}
\begin{aligned}
\mathfrak{R}_n^{\mathrm{mat}}
&\le \rho X_{\max}
 \min\!\left\{
 2\sqrt{\frac{m q_r\log(m+d)}{n}},
 \sqrt{\frac{m}{n}}
 \right\} \\
&\le \rho X_{\max}\sqrt{\frac{m}{n}} .
\end{aligned}
\end{equation}
where the minimum is always attained by the Frobenius branch because $4q_r\log(m+d)\ge4\log2>1$ for every $q_r\ge1$ and $m+d\ge2$: under worst-case features the operator-norm branch never beats the Frobenius branch. The worst-case simplification is therefore rank-free; all rank-sensitivity resides in the exact form and the data-dependent bound of (i).
\end{enumerate}
\end{lemma}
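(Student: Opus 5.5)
We begin with the exact form in (i). For fixed $S$, maximize $\langle\Delta W,S\rangle_F$ over $\mathcal B_{r,\rho}$. Write any feasible $\Delta W$ as $\Delta W=\sum_{j\le q_r}t_ju_jv_j^T$ with $\sum_j t_j^2\le\rho^2$. By von Neumann's trace inequality,
\[
\langle\Delta W,S\rangle_F\le\sum_{j\le q_r}s_j(\Delta W)\,s_j(S)\le\rho\Big(\sum_{j\le q_r}s_j(S)^2\Big)^{1/2}=\rho\|S_r\|_F,
\]
where the second step is Cauchy--Schwarz and uses that $\Delta W$ has at most $q_r$ nonzero singular values. The bound is attained at $\Delta W=\rho S_r/\|S_r\|_F$ (or at any feasible point if $S_r=0$). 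Taking $\mathbb E_\sigma$ and dividing by $n$ gives $\mathfrak R_n^{\mathrm{mat}}=\tfrac\rho n\mathbb E\|S_r\|_F$.

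The minimum in \eqref{eq:datadep} then follows from two facts. First, $\|S_r\|_F\le\|S\|_F$. Second, $\|S_r\|_F\le\sqrt{q_r}\,s_1(S)=\sqrt{q_r}\|S\|_2$. Equivalently, one can read this off the sandwich \eqref{eq:sandwich} combined with Lemma~\ref{lem:extremal}.

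For the Frobenius branch \eqref{eq:frob-branch}, apply Jensen: $\mathbb E\|S\|_F\le(\mathbb E\|S\|_F^2)^{1/2}$. Expanding, $\|S\|_F^2=\sum_{i,k}\langle\sigma_i,\sigma_k\rangle\langle x_i,x_k\rangle$. Cross terms have zero mean and $\mathbb E\|\sigma_i\|^2=m$, so $\mathbb E\|S\|_F^2=m\,\mathrm{tr}(G_n)$.

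For the operator branch \eqref{eq:op-branch}, write $S=\sum_iZ_i$ with $Z_i=\sigma_ix_i^T$. These are independent, centered, $m\times d$ summands, so we use the rectangular matrix Bernstein inequality in expectation form \cite{tropp2012user}:
\[
\mathbb E\|S\|_2\le\sqrt{2v\log(m+d)}+\tfrac L3\log(m+d),
\]
with $L\ge\|Z_i\|_2$ and $v=\max(\|\sum\mathbb EZ_iZ_i^T\|_2,\|\sum\mathbb EZ_i^TZ_i\|_2)$. We have $\|Z_i\|_2=\|\sigma_i\|_2\|x_i\|_2\le\sqrt m X_{\max}$, using Assumption~\ref{ass:bddfeat}. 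Next, $\mathbb EZ_iZ_i^T=\|x_i\|^2 I_m$, whose sum has norm $\mathrm{tr}(G_n)$. Also $\mathbb EZ_i^TZ_i=m\,x_ix_i^T$, whose sum has norm $m\|G_n\|_2$. Hence $v=v_n$. This variance bookkeeping, and in particular checking that the version of Bernstein used has exactly the constants $\sqrt2$ and $1/3$, is the only delicate step. Everything else is algebra.

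For (ii), substitute the worst case $\mathrm{tr}(G_n)\le nX_{\max}^2$ and $\|G_n\|_2\le nX_{\max}^2$, which give $v_n\le mnX_{\max}^2$. The Frobenius branch becomes $X_{\max}\sqrt{mn}$. The operator branch becomes
\[
X_{\max}\sqrt m\Big(\sqrt{2n\log(m+d)}+\tfrac13\log(m+d)\Big).
\]
When $n\ge\log(m+d)$ we have $\log(m+d)\le\sqrt{n\log(m+d)}$, so the bracket is at most $(\sqrt2+1/3)\sqrt{n\log(m+d)}\le2\sqrt{n\log(m+d)}$. Inserting both branches into \eqref{eq:datadep} gives \eqref{eq:lem2-fixed}. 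Finally, the operator branch exceeds the Frobenius branch iff $4q_r\log(m+d)>1$. This always holds since $q_r\ge1$ and $m+d\ge2$, so the minimum is the rank-free Frobenius term.
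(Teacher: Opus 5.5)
Your proposal is correct and follows essentially the same route as the paper: von Neumann's trace inequality plus Cauchy--Schwarz for the exact value $\rho\|S_r\|_F$, attained at $\rho S_r/\|S_r\|_F$, then the two elementary bounds on $\|S_r\|_F$, Jensen for the Frobenius branch, and rectangular matrix Bernstein with the same variance computation ($\mathrm{tr}(G_n)$ and $m\|G_n\|_2$) for the operator branch. Part (ii) is handled by the same worst-case substitution, the same $(\sqrt2+\tfrac13)\le 2$ absorption under $n\ge\log(m+d)$, and the same branch comparison via $4q_r\log(m+d)\ge 4\log 2>1$.
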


Remarks~\ref{rem:sqrt-r-is-the-gap}, \ref{rem:when-rank-helps}, and~\ref{rem:sqrtm} (Appendix~\ref{app:remarks}) interpret the $\sqrt{q_r}$ factor as the worst-case nuclear--Frobenius gap, identify the effective-rank regime in which the operator-norm branch can improve, and account for the $\sqrt m$ factor.

\begin{definition}[Fixed-Readout LoRA Class]
\label{def:fixed-readout}
Given a \emph{fixed} $L_\psi$-Lipschitz readout $\psi:\mathbb R^m\to\mathbb R$ and a constraint set $\mathcal D_r\in\{\mathcal A_r,\ \mathcal B_{r,\rho}\}$,
\begin{equation}
\mathcal H_r^\psi := \big\{x\mapsto\psi\big((W_0+\Delta W)x\big)\ :\ \Delta W\in\mathcal D_r\big\},
\end{equation}
with $\mathcal D_r=\mathcal B_{r,\rho}$ (the rank-sensitive relaxation) as the default when $\mathcal D_r$ is not specified.
\end{definition}

The transfer of Lemma~\ref{lem:matrix-rad} to $\mathcal H_r^\psi$, the resulting complexity bounds for both constraint sets, and their scope are carried out in Appendix~\ref{app:remarks} (Corollary~\ref{cor:rank-dep}, Remark~\ref{rem:linear-readout}), using the auxiliary translation and contraction lemmas of Appendix~\ref{app:aux}.

\section{Q2: Spectral Structure and the Threshold Rank of Cancellation}
\label{sec:spectral}

Throughout this section, and wherever the spectral quantities $\Sigma_S$, $\Sigma_T$, $\Delta\Sigma$, $\tau_r$ appear (including Theorem~\ref{thm:cone}(ii) and Corollary~\ref{cor:spectral-route}), $D_S$ and $D_T$ are assumed to have finite second moments, so that these matrices are well defined. Let $W:=W_0+\Delta W$ be the adapted weight. For a target with second-moment matrix $\Sigma_T$, the \emph{alignment error} of $W$ is $\|W\Delta\Sigma W^T\|_2$; this orientation is the dimensionally consistent one for $m\times d$ weights, since $W\Delta\Sigma W^T$ is the second-moment discrepancy of the layer outputs $Wx$ across domains.

\subsection{Cancellation and the Tail Profile}

The base construction is exact cancellation of the top spectral block of $W_0$. Its realizability under per-factor Frobenius budgets is governed by a nuclear-norm condition---which, by Lemma~\ref{lem:factorization}, is the only condition there is. We record the construction as Lemma~\ref{lem:spectral} in Appendix~\ref{app:aux}, since it is verification rather than substance, and use here only its conclusion: if
\begin{equation}
\label{eq:nuclear-budget}
\rho \ \ge\ \|(W_0)_{\le r}\|_* = \sum_{i=1}^r s_i(W_0),
\end{equation}
then $\Delta W:=-(W_0)_{\le r}$ lies in $\mathcal A_r$ and yields $W=(W_0)_{>r}$, hence
\begin{equation}
\|W\Delta\Sigma W^T\|_2 = \big\|(W_0)_{>r}\,\Delta\Sigma\,(W_0)_{>r}^T\big\|_2 \ =:\ \tau_r.
\end{equation}
Here $(W_0)_{\le r}$ and $(W_0)_{>r}$ are the top-$r$ block and residual tail of the SVD of $W_0$. Throughout, we fix one SVD $W_0=\sum_is_i(W_0)u_iv_i^T$ once and for all: when singular values repeat, the blocks and all derived quantities ($\tau_r$, the cancellation updates, the vectors $u_i,v_i$) are defined relative to this fixed choice.

Condition~\eqref{eq:nuclear-budget} is where the rank finally does something: the budget needed grows with $r$, so a larger $r$ makes larger cancellations \emph{affordable}. This is the affordability role announced in Section~\ref{sec:intro}, and it is the exact complement of the capacity role ruled out in Section~\ref{sec:complexity}.

The tail profile is nonincreasing, with $\tau_r=0$ for $r\ge\mathrm{rank}(W_0)$ (Lemma~\ref{lem:monotone}, Appendix~\ref{app:aux}); this monotonicity is what makes the spectral threshold below a genuine threshold rather than merely the smallest element of an unstructured set; without it, ``$\min\{r:\tau_r\le\epsilon/\kappa\}$'' would not be a rank one could search for by increasing $r$.

\subsection{From Alignment Error to Discrepancy}

The link between alignment error and distributional discrepancy is an \emph{assumption}, not a theorem, and we flag it as such. Because the assumption is invoked in our proofs only at the cancellation weights $W=(W_0)_{>r'}$, $r'\in\mathbb Z_{\ge0}$, of Lemma~\ref{lem:spectral}, we state it for exactly that family; Remark~\ref{rem:kappa-scope} (Appendix~\ref{app:remarks}) explains why the restriction matters. Example~\ref{ex:gaussian} in Appendix~\ref{app:aux} exhibits a concrete Gaussian model, with bounded linear readouts supported on the range (output space) of the adapted weight, in which the assumption provably holds uniformly along the entire cancellation family with the explicit constant $\kappa=\tfrac2c\sqrt{2/\pi}$, $c=\lambda_{\min}(\Sigma_T)^{1/2}\,s_{\mathrm{rank}(W_0)}(W_0)$, for $W_0\ne0$ (the degenerate case $W_0=0$ is handled trivially in the example). That instantiation uses an explicitly weight-dependent readout family $\{\Psi_W\}$, so it should be read as a restricted model in which the assumption is verifiable, not as a validation of the assumption for an arbitrary fixed readout class.

\begin{assumption}[Alignment Control Along the Cancellation Family]
\label{ass:kappa}
There exists $\kappa>0$ such that for every integer $r'\in\mathbb Z_{\ge0}$ and the corresponding cancellation weight $W=(W_0)_{>r'}$,
\begin{equation}
\tilde d_{\mathcal H_W}(D_S,D_T) \ \le\ \kappa\,\|W\Delta\Sigma W^T\|_2,
\end{equation}
with $\mathcal H_W$ the induced readout class of Definition~\ref{def:induced}.
\end{assumption}


\begin{theorem}[Spectral Threshold of the Cancellation Certificate: Upper Bounds]
\label{thm:minrank}
Assume $W_0\ne0$ and that $D_S,D_T$ have finite second moments, write $K:=\mathrm{rank}(W_0)$, fix a target $D_T$ (hence $\Delta\Sigma$) and a tolerance $\epsilon>0$, and suppose Assumption~\ref{ass:kappa} holds. Define the \emph{spectral threshold rank}
\begin{equation}
r_{\mathrm{spec}}(\epsilon) := \min\big\{1\le r\le K : \tau_r \le \epsilon/\kappa\big\},
\end{equation}
which is well defined (since $\tau_K=0$) and is a threshold by Lemma~\ref{lem:monotone}. If moreover the budget covers the corresponding top block,
\begin{equation}
\label{eq:budget-at-rspec}
\rho\ \ge\ \sum_{i=1}^{r_{\mathrm{spec}}(\epsilon)}s_i(W_0),
\end{equation}
then the top-block cancellation at rank $r_{\mathrm{spec}}(\epsilon)$ is admissible and certifies $\tilde d_{\mathcal H_W}(D_S,D_T)\le\epsilon$ for $W=(W_0)_{>r_{\mathrm{spec}}(\epsilon)}$. Two cases. If $\|\Delta\Sigma\|_2=0$, then $\tau_r=0$ for every $r$ and $r_{\mathrm{spec}}(\epsilon)=1$, the cancellation certificate remaining conditional on the budget condition \eqref{eq:budget-at-rspec}. If instead $\|\Delta\Sigma\|_2>0$, then $\tau_r\le s_{r+1}(W_0)^2\|\Delta\Sigma\|_2$, and consequently:
\begin{itemize}
\item if $s_i(W_0)\le\bar C\,i^{-\alpha}$ for some $\alpha>0$ (polynomial decay),
\begin{equation}
r_{\mathrm{spec}}(\epsilon) \ \le\ \min\left\{K,\ \max\left\{1,\ \Big\lceil\big(\kappa\,\bar C^2\,\|\Delta\Sigma\|_2/\epsilon\big)^{1/(2\alpha)}\Big\rceil\right\}\right\};
\end{equation}
\item if $s_i(W_0)\le\bar C\beta^{\,i}$ for some $\beta\in(0,1)$ (geometric decay),
\begin{equation}
r_{\mathrm{spec}}(\epsilon) \ \le\ \min\left\{K,\ \max\left\{1,\ \Big\lceil\frac{\log\!\big(\kappa\,\bar C^2\,\|\Delta\Sigma\|_2/\epsilon\big)}{2\log(1/\beta)}\Big\rceil\right\}\right\}.
\end{equation}
\end{itemize}
\end{theorem}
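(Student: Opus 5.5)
The plan has three parts: well-definedness, admissibility, and the tail bound converted into explicit rank thresholds.

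\emph{Well-definedness and admissibility.} Since $W_0\ne0$ we have $K\ge1$. The residual $(W_0)_{>K}$ is zero, so $\tau_K=0\le\epsilon/\kappa$ and the defining set is nonempty. Lemma~\ref{lem:monotone} makes $r\mapsto\tau_r$ nonincreasing, so $r_{\mathrm{spec}}(\epsilon)$ is a genuine threshold. Now set $r^\star:=r_{\mathrm{spec}}(\epsilon)$ and assume the budget condition \eqref{eq:budget-at-rspec}. Lemma~\ref{lem:spectral} then gives $\Delta W=-(W_0)_{\le r^\star}\in\mathcal A_{r^\star}$. This also follows directly from Lemma~\ref{lem:factorization}: the update has rank at most $r^\star$ and nuclear norm $\sum_{i\le r^\star}s_i(W_0)\le\rho$. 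The adapted weight is $W=(W_0)_{>r^\star}$, a member of the cancellation family. Assumption~\ref{ass:kappa} applies with $r'=r^\star$ and yields $\tilde d_{\mathcal H_W}(D_S,D_T)\le\kappa\tau_{r^\star}\le\epsilon$. In the case $\|\Delta\Sigma\|_2=0$, every $\tau_r$ vanishes, so the minimum is attained at $r=1$. The certificate is then the same argument, still conditional on \eqref{eq:budget-at-rspec}.

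\emph{Tail bound.} Write $T:=(W_0)_{>r}=\sum_{i>r}s_iu_iv_i^T$. Submultiplicativity gives $\tau_r=\|T\Delta\Sigma T^T\|_2\le\|T\|_2^2\|\Delta\Sigma\|_2=s_{r+1}(W_0)^2\|\Delta\Sigma\|_2$, using the convention $s_i=0$ beyond the rank.

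\emph{Decay cases.} It suffices to exhibit an $r\in[1,K]$ with $s_{r+1}^2\|\Delta\Sigma\|_2\le\epsilon/\kappa$; minimality then bounds $r_{\mathrm{spec}}$ by that $r$. The $\min\{K,\cdot\}$ is automatic because $r_{\mathrm{spec}}\le K$ always.
\begin{itemize}
\item \emph{Polynomial decay.} Let $r_0:=\max\{1,\lceil(\kappa\bar C^2\|\Delta\Sigma\|_2/\epsilon)^{1/(2\alpha)}\rceil\}$. If $r_0\ge K$ we are done. Otherwise $s_{r_0+1}^2\le\bar C^2(r_0+1)^{-2\alpha}\le\bar C^2r_0^{-2\alpha}\le\epsilon/(\kappa\|\Delta\Sigma\|_2)$. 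The last step holds because $r_0\ge(\kappa\bar C^2\|\Delta\Sigma\|_2/\epsilon)^{1/(2\alpha)}$. Hence $\tau_{r_0}\le\epsilon/\kappa$.
\item \emph{Geometric decay.} The same argument works with $s_{r_0+1}^2\le\bar C^2\beta^{2(r_0+1)}\le\bar C^2\beta^{2r_0}$. Requiring $\beta^{2r_0}\le\epsilon/(\kappa\bar C^2\|\Delta\Sigma\|_2)$ is equivalent to $r_0\ge\log(\kappa\bar C^2\|\Delta\Sigma\|_2/\epsilon)/(2\log(1/\beta))$. When this logarithm is negative, the $\max\{1,\cdot\}$ handles it.
\end{itemize}

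The main point needing care is the bookkeeping between the index $r+1$ in the tail bound and $r$ in the threshold, together with the edge cases $r_0\ge K$ and a negative logarithm. Dropping the $+1$ only loosens the bound, so neither edge case requires anything beyond the monotonicity of the decay envelopes.
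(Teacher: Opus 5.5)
Your proof is correct and follows essentially the same route as the paper. You get admissibility from Lemma~\ref{lem:spectral} together with Assumption~\ref{ass:kappa} at the cancellation weight, then use the submultiplicativity bound $\tau_r\le s_{r+1}(W_0)^2\|\Delta\Sigma\|_2$, then drop the $+1$ in the decay envelopes, exactly as the paper does. The one small difference is that you split on whether $r_0\ge K$ and apply minimality directly to $r_0\in[1,K]$; the paper instead appeals to upward closure of $\{r:\tau_r\le\epsilon/\kappa\}$, so your handling is marginally cleaner.
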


The relation between the spectral threshold and the budget-feasible certified rank---in particular, that the budget decides \emph{whether} any rank certifies, never \emph{which} rank is minimally certifying---is developed in Remark~\ref{rem:rcert} (Appendix~\ref{app:remarks}).

Theorem~\ref{thm:minrank} is a \emph{sufficiency} statement about one specific certificate. Necessity---within that certificate family---requires lower bounds on $\tau_r$, which cannot follow from one-sided decay assumptions; we supply them under two-sided conditions.

\begin{proposition}[Lower Bounds for the Cancellation Certificate]
\label{prop:lower}
Assume $D_S,D_T$ have finite second moments. Let $u_i,v_i$ be the left and right singular vectors of $W_0$ and $K=\mathrm{rank}(W_0)$. Suppose there is $c_1>0$ with the \emph{uniform directional alignment condition}
\begin{equation}
|v_i^T\Delta\Sigma\,v_i|\ \ge\ c_1\qquad\text{for every }i\in\{2,\dots,K\}
\end{equation}
(necessarily $c_1\le\|\Delta\Sigma\|_2$, and the condition presupposes $K\ge2$). Then for every $r$ with $1\le r\le K-1$,
\begin{equation}
\tau_r \ \ge\ c_1\,s_{r+1}(W_0)^2.
\end{equation}
Consequently, uniformly throughout the \emph{non-saturated regime} $r_{\mathrm{spec}}(\epsilon)<K$: if additionally $s_i(W_0)\ge c_0\,i^{-\alpha}$ for the indices under consideration (two-sided polynomial decay),
\begin{equation}
r_{\mathrm{spec}}(\epsilon) \ \ge\ \big(\kappa\,c_1c_0^2/\epsilon\big)^{1/(2\alpha)} - 1,
\end{equation}
and if $s_i(W_0)\ge c_0\beta^{\,i}$, then $r_{\mathrm{spec}}(\epsilon)\ge\tfrac1{2\log(1/\beta)}\log(\kappa c_1c_0^2/\epsilon)-1$. 
\end{proposition}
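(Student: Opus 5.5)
The plan is to prove the pointwise bound $\tau_r\ge c_1 s_{r+1}(W_0)^2$ with a single test vector in the variational characterization of the operator norm. The threshold-rank lower bounds then follow by inverting this bound at $r=r_{\mathrm{spec}}(\epsilon)$.

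\emph{Step 1 (the remark $c_1\le\|\Delta\Sigma\|_2$).} Since $\Delta\Sigma=\Sigma_S-\Sigma_T$ is a difference of symmetric matrices, it is symmetric. For any unit vector $v$ we have $|v^T\Delta\Sigma v|\le\|\Delta\Sigma\|_2$. The hypothesis at any $i\in\{2,\dots,K\}$ therefore forces $c_1\le\|\Delta\Sigma\|_2$. The hypothesis is quantified over $\{2,\dots,K\}$, which is nonempty only if $K\ge2$; this is the sense in which the condition presupposes $K\ge2$.

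\emph{Step 2 (pointwise bound).} Fix $1\le r\le K-1$ and set $M:=(W_0)_{>r}\,\Delta\Sigma\,(W_0)_{>r}^T$, which is symmetric. Hence $\|M\|_2\ge|u^TMu|$ for every unit $u\in\mathbb R^m$. Using the fixed SVD, write $(W_0)_{>r}=\sum_{i>r}s_i(W_0)u_iv_i^T$ and take $u=u_{r+1}$. Orthonormality of the $u_i$ gives $u_{r+1}^T(W_0)_{>r}=s_{r+1}(W_0)\,v_{r+1}^T$. It follows that
\[
\tau_r=\|M\|_2\ \ge\ |u_{r+1}^TMu_{r+1}| = s_{r+1}(W_0)^2\,|v_{r+1}^T\Delta\Sigma\,v_{r+1}|\ \ge\ c_1\,s_{r+1}(W_0)^2 .
\]
The last inequality applies the alignment condition at the index $i=r+1\in\{2,\dots,K\}$, which is admissible precisely because $r\le K-1$.

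\emph{Step 3 (inversion).} In the non-saturated regime, $r^\star:=r_{\mathrm{spec}}(\epsilon)\le K-1$, so Step 2 applies at $r^\star$. By definition of $r_{\mathrm{spec}}$ we also have $\tau_{r^\star}\le\epsilon/\kappa$. Combining, $c_1 s_{r^\star+1}(W_0)^2\le\epsilon/\kappa$.

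Under polynomial decay, $s_{r^\star+1}(W_0)\ge c_0(r^\star+1)^{-\alpha}$, which gives $(r^\star+1)^{2\alpha}\ge\kappa c_1c_0^2/\epsilon$. Taking the $2\alpha$-th root yields the first claim.

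Under geometric decay, $c_0^2\beta^{2(r^\star+1)}\le\epsilon/(\kappa c_1)$. Taking logarithms and dividing by $2\log(1/\beta)>0$ gives $r^\star+1\ge\log(\kappa c_1c_0^2/\epsilon)/(2\log(1/\beta))$. If the right-hand sides are negative, the bounds hold trivially.

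\emph{Expected obstacle.} The argument is short, so the only real care is bookkeeping. The index used, $r^\star+1$, must lie in $\{2,\dots,K\}$. This is guaranteed by $r^\star\ge1$ together with non-saturation, and it is exactly why the lower bound is restricted to that regime: at $r^\star=K$ the tail is zero and no lower bound on $\tau$ is available. For the same reason, the two-sided decay hypothesis is needed only at index $r^\star+1$, which is what ``for the indices under consideration'' means in the statement.
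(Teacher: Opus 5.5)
Your proposal is correct and follows the paper's proof essentially verbatim. Both use the test vector $u_{r+1}$ to lower-bound $\tau_r$, via the identity $u_{r+1}^T(W_0)_{>r}=s_{r+1}(W_0)\,v_{r+1}^T$, and then invert $c_1 s_{r+1}(W_0)^2\le\tau_r\le\epsilon/\kappa$ at $r=r_{\mathrm{spec}}(\epsilon)$, with non-saturation keeping the index $r_{\mathrm{spec}}(\epsilon)+1$ inside $\{2,\dots,K\}$.
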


These lower bounds have the same dependence on $\epsilon$ and on the decay as the upper bounds of Theorem~\ref{thm:minrank}, but with constants governed by the directional quantity $c_1$ rather than by $\|\Delta\Sigma\|_2$; within the top-singular-block cancellation family the two match in order precisely when $c_1\asymp\|\Delta\Sigma\|_2$, i.e.\ when the discrepancy $\Delta\Sigma$ is not nearly orthogonal, in the quadratic-form sense, to the tail singular directions. We emphasize the scope: these are lower bounds on the threshold of \emph{this certificate family only}, valid away from rank saturation; they do not preclude other rank-$r$ updates $W_0+\Delta W$ from certifying the tolerance by different means at smaller rank. If the directional condition is available only at the single index $r_{\mathrm{spec}}(\epsilon)+1$, the same conclusions hold as an a posteriori statement at that index rather than as a uniform rate.
\subsection{Generalization Bounds: Transport and Spectral Routes}

\begin{theorem}[LoRA Generalization Bound via Wasserstein Transport]
\label{thm:gen}
Let $\mathcal H=\mathcal H_r^\psi:=\mathcal H_{r,\mathcal D_r}^\psi$ be the fixed-readout class of Definition~\ref{def:fixed-readout}, built on either constraint set $\mathcal D_r\in\{\mathcal A_r,\ \mathcal B_{r,\rho}\}$ (all statements below refer to the selected $\mathcal D_r$), let Assumptions~\ref{ass:bddfeat} and~\ref{ass:bounded} hold, let $D_T$ have a finite first moment, let the loss be the absolute loss, set $M:=\|W_0\|_2+\rho$, and suppose $n_S\ge\log(m+d)$. Then with probability at least $1-\delta$ over $n_S$ i.i.d.\ source samples, every $h\in\mathcal H_r^\psi$ satisfies
\begin{equation}
\begin{split}
\epsilon_T(h) \le \ & \hat\epsilon_S(h) \\
&+ 2\sqrt{2}\,L_\psi\,\rho\,X_{\max}\sqrt{\frac{m}{n_S}} \\
&+ b\sqrt{\frac{\log(1/\delta)}{2n_S}} \\
&+ 2L_\psi M\, W_1(D_S,D_T) + \lambda^*,
\end{split}
\end{equation}
where $\lambda^*=\inf_{h'\in\mathcal H_r^\psi}[\epsilon_S(h')+\epsilon_T(h')]$ and $\hat\epsilon_S$ is the empirical source risk defined in Section~\ref{sec:framework}. The complexity term is the rank-free worst case of Corollary~\ref{cor:rank-dep}; its data-dependent, rank-sensitive refinement is discussed in Remark~\ref{rem:gen-Ar}.
\end{theorem}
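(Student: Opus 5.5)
The proof will chain three bounds through Theorem~\ref{thm:da}: a Rademacher uniform-convergence bound for the source term, and a Wasserstein bound for the discrepancy term.

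\emph{Step 1 (source term).} Under Assumption~\ref{ass:bounded} the loss class $\{x\mapsto|h(x)-f_S(x)|:h\in\mathcal H_r^\psi\}$ takes values in $[0,b]$. The standard symmetrization-plus-McDiarmid argument then gives, with probability at least $1-\delta$ and simultaneously for all $h$,
\begin{equation}
\epsilon_S(h)\le\hat\epsilon_S(h)+2\bar{\mathfrak R}_{n_S}(\ell\circ\mathcal H_r^\psi)+b\sqrt{\log(1/\delta)/(2n_S)}.
\end{equation}
The map $t\mapsto|t-f_S(x)|$ is $1$-Lipschitz, so scalar Talagrand contraction removes the loss. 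Next, writing $\psi((W_0+\Delta W)x)$, I would apply the auxiliary translation lemma to absorb the fixed offset $W_0x$; this offset is common to every member of the class, so it contributes nothing after symmetrization. I would then apply Maurer's vector contraction~\cite{maurer2016vector}. This bounds the scalar complexity by $\sqrt2\,L_\psi$ times the vector-valued complexity of $\{x\mapsto\Delta Wx:\Delta W\in\mathcal D_r\}$, and that quantity equals $\mathfrak R^{\mathrm{mat}}_n$ (the supremum of $\langle\Delta W,S\rangle_F$).

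For $\mathcal D_r=\mathcal B_{r,\rho}$, Lemma~\ref{lem:matrix-rad}(ii) gives $\rho X_{\max}\sqrt{m/n_S}$, using $n_S\ge\log(m+d)$. For $\mathcal D_r=\mathcal A_r$, the inclusion \eqref{eq:sandwich} $\mathcal A_r\subseteq\mathcal B_{r,\rho}$ gives the same bound. Altogether this yields $2\sqrt2L_\psi\rho X_{\max}\sqrt{m/n_S}$, which is precisely Corollary~\ref{cor:rank-dep}, and I would cite it directly. Because the bound is uniform over samples satisfying Assumption~\ref{ass:bddfeat}, it transfers to $\bar{\mathfrak R}$.

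\emph{Step 2 (discrepancy term).} Theorem~\ref{thm:da} reduces the problem to bounding $\tilde d_{\mathcal H}(D_S,D_T)$. For any $h,h'\in\mathcal H_r^\psi$, the function $g=|h-h'|$ satisfies
\begin{equation}
|g(x)-g(y)|\le|h(x)-h(y)|+|h'(x)-h'(y)|.
\end{equation}
Each $h$ is Lipschitz with constant $L_\psi\|W_0+\Delta W\|_2\le L_\psi(\|W_0\|_2+\|\Delta W\|_2)$. We have $\|\Delta W\|_2\le\|\Delta W\|_F\le\rho$ on $\mathcal B_{r,\rho}$ and $\|\Delta W\|_2\le\|\Delta W\|_*\le\rho$ on $\mathcal A_r$; the latter is also Lemma~\ref{lem:extremal}. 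Hence $g$ is $2L_\psi M$-Lipschitz. Kantorovich--Rubinstein duality (Definition~\ref{def:wass}) then gives $\tilde d_{\mathcal H}\le 2L_\psi M\,W_1(D_S,D_T)$. Both first moments are finite: $D_S$ is bounded and $D_T$ is assumed to have one.

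\emph{Step 3 (assembly).} Substituting Steps 1--2 into Theorem~\ref{thm:da} yields the stated inequality on the same event of probability $1-\delta$.

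I expect the main obstacle to be the contraction step. Maurer's inequality applies to vector-valued classes with Rademacher vectors, whereas Lemma~\ref{lem:matrix-rad} is stated in exactly that form, so the pieces do fit. Care is needed, however, to track the constant $\sqrt2$ and to handle the translation by $W_0x$ cleanly. One must also check that the factor $2$ from symmetrization, combined with $\sqrt2$, gives $2\sqrt2$ and not something larger. If the auxiliary lemmas already package this as Corollary~\ref{cor:rank-dep}, the step reduces to a citation.
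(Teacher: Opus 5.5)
Your proposal is correct and follows the paper's proof essentially step for step: Theorem~\ref{thm:da}, then the $2L_\psi M$-Lipschitz bound on $|h-h'|$ fed into Kantorovich--Rubinstein duality, then symmetrization plus McDiarmid followed by loss contraction and Corollary~\ref{cor:rank-dep}; your handling of the $\mathcal A_r$ case via $\mathcal A_r\subseteq\mathcal B_{r,\rho}$ is an equally valid substitute for the paper's use of Proposition~\ref{prop:matrix-collapse} and $\|S\|_2\le\|S\|_F$. The other differences are cosmetic: the paper centers the loss as $|t-f_S(x_i)|-|f_S(x_i)|$ before contracting, which the absolute-value-free form of Talagrand's lemma you invoke does not need, and the offset $W_0x$ must be removed after Maurer's inequality via Lemma~\ref{lem:translation} (or folded into the per-sample maps $z\mapsto\psi(W_0x_i+z)$), not beforehand as your ordering suggests.
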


\begin{corollary}[Spectral Route: Population and Empirical Bounds]
\label{cor:spectral-route}
Suppose $W_0\ne0$, $D_S$ and $D_T$ have finite second moments, Assumption~\ref{ass:kappa} holds, and the budget condition \eqref{eq:nuclear-budget} holds at rank $r$. Let $W=(W_0)_{>r}$ be the cancellation weight, $\mathcal H_W$ its induced readout class, and $\lambda^*_W:=\inf_{h'\in\mathcal H_W}[\epsilon_S(h')+\epsilon_T(h')]$. Then
\begin{equation}
\tilde d_{\mathcal H_W}(D_S,D_T)\ \le\ \kappa\,\tau_r,
\end{equation}
and:
\begin{enumerate}
\item[(i)] \emph{(Population.)} For every $h\in\mathcal H_W$,
\begin{equation}
\epsilon_T(h)\ \le\ \epsilon_S(h) + \kappa\,\tau_r + \lambda^*_W.
\end{equation}
\item[(ii)] \emph{(Empirical.)} If in addition Assumption~\ref{ass:bounded} holds for $\mathcal H_W$, then with probability at least $1-\delta$ over $n_S$ i.i.d.\ source samples, every $h\in\mathcal H_W$ satisfies
\begin{equation}
\epsilon_T(h)\ \le\ \hat\epsilon_S(h) + 2\,\bar{\mathfrak R}_{n_S}(\mathcal H_W) + b\sqrt{\frac{\log(1/\delta)}{2n_S}} + \kappa\,\tau_r + \lambda^*_W,
\end{equation}
with $\bar{\mathfrak R}_{n_S}$ the expected Rademacher complexity of Section~\ref{sec:framework}. If moreover Assumption~\ref{ass:bddfeat} holds and $\Psi_W$ consists of linear readouts of norm at most one (e.g.\ the range-adapted family of Example~\ref{ex:gaussian}), then
\begin{equation}
\bar{\mathfrak R}_{n_S}(\mathcal H_W)\ \le\ \frac{\|W\|_2\,X_{\max}}{\sqrt{n_S}}\ =\ \frac{s_{r+1}(W_0)\,X_{\max}}{\sqrt{n_S}}:
\end{equation}
cancelling a larger spectral block also shrinks the certificate class's own complexity.
\end{enumerate}
In particular, at $r=r_{\mathrm{spec}}(\epsilon)$ (feasible under \eqref{eq:budget-at-rspec}) the discrepancy term is at most $\epsilon$. This is the route by which the spectral machinery of Theorem~\ref{thm:minrank} enters a genuine sample-based generalization statement; it is complementary to, not a consequence of, the transport route of Theorem~\ref{thm:gen}.
\end{corollary}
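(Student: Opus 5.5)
The first step is the displayed discrepancy bound $\tilde d_{\mathcal H_W}(D_S,D_T)\le\kappa\tau_r$. It uses two earlier results. By Lemma~\ref{lem:spectral}, the budget condition \eqref{eq:nuclear-budget} places $\Delta W=-(W_0)_{\le r}$ in $\mathcal A_r$. This follows from Lemma~\ref{lem:factorization}, since $\mathrm{rank}\le r$ and $\|(W_0)_{\le r}\|_*=\sum_{i\le r}s_i(W_0)\le\rho$. The adapted weight is then $W=(W_0)_{>r}$, and its alignment error is exactly $\tau_r$. This $W$ is one of the cancellation weights at which Assumption~\ref{ass:kappa} is stated, with $r'=r$. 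Applying the assumption gives $\tilde d_{\mathcal H_W}\le\kappa\|W\Delta\Sigma W^T\|_2=\kappa\tau_r$. The case $r\ge K$ needs no separate treatment: there $W=0$ and $\tau_r=0$, and the assumption still applies because it covers every $r'\in\mathbb Z_{\ge0}$.

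Part (i) follows by applying Theorem~\ref{thm:da} with $\mathcal H=\mathcal H_W$ and substituting the discrepancy bound.

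For part (ii), I would run the standard symmetrization argument on the loss class $\{x\mapsto|h(x)-f_S(x)|:h\in\mathcal H_W\}$. By Assumption~\ref{ass:bounded} the loss takes values in $[0,b]$, so McDiarmid's inequality gives, with probability $1-\delta$,
\[
\sup_{h\in\mathcal H_W}\big(\epsilon_S(h)-\hat\epsilon_S(h)\big)\le 2\,\bar{\mathfrak R}_{n_S}(\text{loss class})+b\sqrt{\log(1/\delta)/(2n_S)}.
\]
To pass from the loss class to $\mathcal H_W$, I would use that $t\mapsto|t-f_S(x_i)|$ is $1$-Lipschitz and apply Talagrand contraction, with the per-point shift by $f_S(x_i)$ allowed. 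This gives $\bar{\mathfrak R}(\text{loss})\le\bar{\mathfrak R}_{n_S}(\mathcal H_W)$. Combining this with (i) yields the stated inequality.

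This reduction is the step to check most carefully. The contraction lemma with index-dependent $1$-Lipschitz maps $\phi_i(t)=|t-f_S(x_i)|$ is exactly Ledoux--Talagrand, which the paper cites. If the auxiliary contraction lemma of Appendix~\ref{app:aux} is stated only for a common $\phi$, I would invoke the per-coordinate version directly.

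For the linear-readout bound, take $\psi(y)=w^Ty$ with $\|w\|_2\le1$. Then
\[
\sup_w\sum_i\sigma_iw^TWx_i=\|Wv\|_2\le\|W\|_2\|v\|_2,\qquad v:=\sum_i\sigma_ix_i.
\]
By Jensen's inequality and Assumption~\ref{ass:bddfeat}, $\mathbb E_\sigma\|v\|_2\le(\sum_i\|x_i\|_2^2)^{1/2}\le\sqrt{n_S}X_{\max}$. Dividing by $n_S$ and averaging over samples gives $\|W\|_2X_{\max}/\sqrt{n_S}$.

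Finally, $\|(W_0)_{>r}\|_2=s_{r+1}(W_0)$ with the convention $s_i=0$ beyond the rank. This holds for readouts restricted to $\mathrm{range}(W)$ as in Example~\ref{ex:gaussian} as well, since restriction only shrinks the supremum. The closing claim at $r=r_{\mathrm{spec}}(\epsilon)$ then follows from the definition $\tau_{r_{\mathrm{spec}}}\le\epsilon/\kappa$ together with feasibility under \eqref{eq:budget-at-rspec}.
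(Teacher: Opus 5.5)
Your proposal is correct and follows the paper's proof essentially step for step: Lemma~\ref{lem:spectral} plus Assumption~\ref{ass:kappa} at $W=(W_0)_{>r}$ give the discrepancy bound, Theorem~\ref{thm:da} gives (i), symmetrization plus McDiarmid with a contraction down to $\mathcal H_W$ gives (ii), and the linear-readout bound follows from $\|Wv\|_2\le\|W\|_2\|v\|_2$ together with Jensen. The only difference is cosmetic and sits in the contraction step: the paper first centers $\phi_i(t)=|t-f_S(x_i)|$ by subtracting $|f_S(x_i)|$ so the maps vanish at $0$, whereas you apply the one-sided index-dependent contraction directly, which is also valid because the Rademacher complexity here is defined without absolute values.
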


The proof, which combines Lemma~\ref{lem:spectral}, Assumption~\ref{ass:kappa}, and the concentration step of Theorem~\ref{thm:gen}, is given in Appendix~\ref{app:proofs}.




Variants of the complexity term and the two distinct roles of the frozen weight are discussed in Remarks~\ref{rem:gen-Ar} and~\ref{rem:w0-roles} (Appendix~\ref{app:remarks}).

\section{Q3: Geometry of the Transferability Cone}
\label{sec:cone}

The divergence-defined cone of Definition~\ref{def:cone} turns out to carry no geometric information whenever its kernel premise below holds.

\begin{proposition}[The Divergence-Defined Cone Has Ambient-Scale Radius]
\label{prop:trivial-radius}
Suppose some admissible $W=W_0+\Delta W$, $\Delta W\in\mathcal A_r$, has a nontrivial kernel---automatic when $m<d$, and in the square case satisfiable whenever the budget permits a singular reachable $W$, e.g.\ via the cancellation of Lemma~\ref{lem:spectral} or the rank-one annihilation of Proposition~\ref{prop:tight}(i). Then, for every $(r,\text{budget})$ pair satisfying this kernel premise:
\begin{enumerate}
\item[(i)] If $\mathcal X=\mathbb R^d$ and $D_S$ has a finite second moment, $\sup_{D_T\in\mathcal C_r(W_0,D_S;\epsilon)}W_2(D_S,D_T)=\infty$ for every $\epsilon\ge0$.
\item[(ii)] If $\mathcal X=\{x:\|x\|_2\le X_{\max}\}$, then for suitable $D_S$ supported in $\mathcal X$ the same supremum is at least $X_{\max}/2$, i.e.\ of the order of the diameter of $\mathcal X$, again for every $\epsilon\ge0$.
\end{enumerate}
\end{proposition}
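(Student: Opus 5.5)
The plan is to exploit the kernel of the singular adapted weight: moving mass along $\ker W$ is invisible to every hypothesis in $\mathcal H_W$, so it costs nothing in the score discrepancy, while the Wasserstein distance can be made as large as the domain permits. Fix an admissible $W=W_0+\Delta W$ with $\Delta W\in\mathcal A_r$ and a unit vector $u\in\ker W$. When $m<d$ such a $u$ exists for any admissible $W$, and in particular for $\Delta W=0\in\mathcal A_r$. For any shift $t\in\mathbb R$, let $D_T$ be the law of $x+tu$ with $x\sim D_S$. Then $W(x+tu)=Wx$ pointwise, so for every $h=\psi\circ W$ and $h'=\psi'\circ W$ in $\mathcal H_W$ the law of $|h-h'|$ is identical under $D_S$ and $D_T$. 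Hence $\tilde d_{\mathcal H_W}(D_S,D_T)=0\le\epsilon$ for every $\epsilon\ge0$, and $D_T\in\mathcal C_r(W_0,D_S;\epsilon)$ by Definition~\ref{def:cone}. This step uses only measurability of the readouts; if some expectation is infinite, I will adopt the convention that equal integrands give zero difference, or restrict to bounded readouts.

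For (i), with $\mathcal X=\mathbb R^d$, the target $D_T$ is a legitimate element of $\mathcal P(\mathcal X)$ with a finite second moment. I then lower-bound $W_2(D_S,D_T)$ by the distance between the means. By Jensen's inequality, for any coupling $\pi$,
\[
\mathbb E_\pi\|x-y\|_2^2\ge\|\mathbb E x-\mathbb E y\|_2^2=t^2.
\]
This gives $W_2\ge|t|$, and letting $t\to\infty$ yields an infinite supremum.

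For (ii), translation is not allowed because the target must stay inside the ball, so I choose $D_S$ suitably. Take $D_S=\delta_0$, or more generally a law concentrated at $0$, and $D_T=\tfrac12\delta_{X_{\max}u}+\tfrac12\delta_{-X_{\max}u}$; alternatively take $D_T=\delta_{X_{\max}u}$ directly. Every point of $\operatorname{supp}D_T$ has the form $su$ with $|s|\le X_{\max}$, so it lies in $\mathcal X$, and $W$ maps it to $0=W\cdot 0$. The pushforwards under $W$ therefore coincide, and the discrepancy vanishes as before. Since the only coupling with a Dirac source is the product, $W_2(\delta_0,\delta_{X_{\max}u})=X_{\max}\ge X_{\max}/2$. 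If a nondegenerate $D_S$ is preferred, I will take $D_S$ uniform on the segment $\{su:\ s\in[-X_{\max},0]\}$ and $D_T$ its reflection onto $[0,X_{\max}]$. Both are supported on $\ker W$, and the mean gap is $X_{\max}$, which again gives $W_2\ge X_{\max}$. That comfortably exceeds $X_{\max}/2$, the bound the statement asserts.

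The main obstacle is not computational. It is making sure the invariance argument respects the cone's definition: $\Delta W$ must be in $\mathcal A_r$, which is why I use $\Delta W=0$ when $m<d$ and otherwise the hypothesized singular admissible $W$. The readout class must also be evaluated at that same $W$; since $\Psi_W$ may depend on $W$, I only use that every element factors through $W$. The statement "for every $(r,\text{budget})$ pair satisfying the premise" then follows, because the construction uses nothing beyond the existence of $u\in\ker W$.
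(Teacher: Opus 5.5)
Your proof is correct and follows the paper's argument: translating along a unit vector of $\ker W$ leaves the law of $Wx$, and hence of every $\psi(Wx)$, unchanged, so the discrepancy is zero for every $\epsilon\ge0$, and $W_2$ is then lower-bounded by the mean shift, which is the same bound the paper obtains by projecting onto the kernel direction. The only difference is in (ii): the paper takes $D_S$ in $k^\perp$ within radius $X_{\max}/2$ and translates by $X_{\max}/2$, while you put $D_S$ on the kernel line itself (e.g.\ $\delta_0$); your choice gives the larger value $X_{\max}$ at the cost of a degenerate source, and both are valid choices of ``suitable $D_S$''.
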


In both cases the radius is set by the ambient support---infinite in (i), a support-scale lower bound $X_{\max}/2$ in (ii)---and is insensitive to $\epsilon$, $r$, and $\rho$ (within the premise); it therefore says nothing about the adaptation. A meaningful radius requires restricting to targets \emph{realizable by the adaptation itself}.

\begin{definition}[Two-Witness Pushforward Cone]
\label{def:pushcone}
Assume $m=d$, so that $I+\Delta W$ acts on $\mathcal X$, take $\mathcal X$ closed under the maps $x\mapsto(I+\Delta W)x$, $\Delta W\in\mathcal A_r$ (e.g.\ $\mathcal X=\mathbb R^d$), and assume $D_S$ has a finite second moment (so that all pushforwards below do too). Set
\begin{equation}
\mathcal C_{r,\mathrm{two}}^{\mathrm{push}}(\epsilon) := \big\{(I+\Delta W)_\#D_S : \Delta W\in\mathcal A_r\big\}\ \cap\ \mathcal C_r(W_0,D_S;\epsilon),
\end{equation}
with radius $\mathrm{rad}\big(\mathcal C_{r,\mathrm{two}}^{\mathrm{push}}(\epsilon)\big) := \sup_{D_T\in\mathcal C_{r,\mathrm{two}}^{\mathrm{push}}(\epsilon)}W_2(D_S,D_T)$. The subscript records that this is the \emph{two-witness} pushforward cone: the update realizing the pushforward and the update certifying transferability need not coincide (Remark~\ref{rem:two-witnesses}). The $\mathcal B$-based variant is defined identically with $\mathcal A_r$ replaced by $\mathcal B_{r,\rho}$ in \emph{both} occurrences of the update set, requiring additionally that $\mathcal X$ be closed under $x\mapsto(I+\Delta W)x$ for $\Delta W\in\mathcal B_{r,\rho}\supseteq\mathcal A_r$ (automatic for $\mathcal X=\mathbb R^d$):
\begin{equation}
\mathcal C_{r,\mathrm{two}}^{\mathrm{push},\mathcal B}(\epsilon):=\big\{(I+\Delta W)_\#D_S:\Delta W\in\mathcal B_{r,\rho}\big\}\cap\mathcal C_r^{\mathcal B}(W_0,D_S;\epsilon),
\end{equation}
where $\mathcal C_r^{\mathcal B}$ denotes Definition~\ref{def:cone} with $\mathcal A_r$ replaced by $\mathcal B_{r,\rho}$.
\end{definition}

Membership in $\mathcal C_{r,\mathrm{two}}^{\mathrm{push}}(\epsilon)$ involves two existential witnesses---one update realizes the target as a pushforward, another certifies the discrepancy condition---and they need not coincide; indeed, our tightness construction (Proposition~\ref{prop:tight}) uses two different rank-one updates. The single-update variant of the cone, to which the upper bound of Theorem~\ref{thm:cone}(i) applies verbatim, is discussed in Remark~\ref{rem:two-witnesses} (Appendix~\ref{app:remarks}).

\begin{theorem}[Geometry of the Two-Witness Pushforward Cone]
\label{thm:cone}
Let $\tau_{\mathrm{tail}} := \|(W_0)_{>r}\Delta\Sigma(W_0)_{>r}^T\|_2$ for a given target.
\begin{itemize}
\item[(i)] \emph{(Reachable radius: rank-independent, and sharp; $m=d$ and $D_S$ with finite second moment, as in Definition~\ref{def:pushcone}.)}
\begin{equation}
\mathrm{rad}\big(\mathcal{C}_{r,\mathrm{two}}^{\mathrm{push}}(\epsilon)\big) \le \sqrt{\lambda_{\max}(\Sigma_S)}\cdot \rho,
\end{equation}
with no dependence on $r$. The bound is sharp: if the budget covers the weakest input direction of the frozen weight, $s_d(W_0)\le\rho$---automatic whenever $W_0$ is singular---it is attained with equality for an isotropic Gaussian source by a rank-one update (Proposition~\ref{prop:tight}), so under that condition the constant cannot be improved, whether by restricting $r$ or otherwise. (This is a sharpness statement over source distributions, not an equality for every fixed source.) The same bound and witness hold verbatim for the $\mathcal B$-based two-witness cone $\mathcal C_{r,\mathrm{two}}^{\mathrm{push},\mathcal B}(\epsilon)$ of Definition~\ref{def:pushcone}.
\item[(ii)] \emph{(Membership via cancellation; general $m,d$, with $D_S,D_T$ of finite second moments.)} Under Assumption~\ref{ass:kappa}, any $D_T$ with $\tau_{\mathrm{tail}}\le\epsilon/\kappa$ lies in $\mathcal C_r(W_0,D_S;\epsilon)$, provided the nuclear budget \eqref{eq:nuclear-budget} holds. Rank enters genuinely here: the budget $\sum_{i\le r}s_i(W_0)$ needed to satisfy the premise grows with $r$, so a larger $r$ is what makes larger cancellations affordable---not what enlarges the radius a fixed budget attains.
\end{itemize}
\end{theorem}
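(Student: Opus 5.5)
Part (i) reduces to a coupling estimate followed by Lemma~\ref{lem:extremal}. Take any $D_T$ in the two-witness cone. By the pushforward witness, $D_T=(I+\Delta W)_\#D_S$ for some $\Delta W\in\mathcal A_r$. The certifying update plays no role in the upper bound, so we discard it; the bound will then hold for every $\epsilon$ and apply verbatim to the single-witness variant. Use the deterministic coupling $x\mapsto(x,(I+\Delta W)x)$ with $x\sim D_S$:
\begin{equation}
W_2(D_S,D_T)^2\le\mathbb E_{D_S}\|\Delta Wx\|_2^2=\mathrm{tr}(\Delta W\Sigma_S\Delta W^T)\le\lambda_{\max}(\Sigma_S)\,\|\Delta W\|_F^2 .
\end{equation}
Equation \eqref{eq:extremal-frob} gives $\|\Delta W\|_F\le\rho$ uniformly in $r$, which yields the radius bound. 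For the $\mathcal B$-variant the same argument applies, because $\|\Delta W\|_F\le\rho$ holds by the definition of $\mathcal B_{r,\rho}$. Finiteness of all second moments follows from the finite second moment of $D_S$.

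For sharpness, take $D_S=N(0,I_d)$, so that $\lambda_{\max}=1$, and construct a rank-one $\Delta W=\rho\,uv^T$ with unit vectors $u,v$. Its pushforward is $N(0,(I+\Delta W)(I+\Delta W)^T)$, and $W_2$ to $N(0,I)$ must equal $\rho$ exactly. This is not automatic, since the Gaussian $W_2$ is computed from square roots of covariances. Choosing $u=v$ fails the computation only in the case $1+\rho<0$ type corrections, so we choose $u=v$, giving covariance $I+((1+\rho)^2-1)vv^T$. The Gaussian formula then gives $W_2=|(1+\rho)-1|=\rho$. This handles the realization witness.

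The main obstacle is membership in $\mathcal C_r(W_0,D_S;\epsilon)$, which needs a second certifying update. Here we use the hypothesis $s_d(W_0)\le\rho$. The rank-one update $\Delta W'=-s_d(W_0)u_dv_d^T\in\mathcal A_1$ annihilates the weakest direction of $W_0$, and we will show it makes $\tilde d_{\mathcal H_{W_0+\Delta W'}}(D_S,D_T)=0$. The point is that the target differs from $D_S$ only along $v$, so we choose $v=v_d$. Then $(W_0+\Delta W')x$ has the same law under $D_S$ and under $D_T$: $D_T$ only rescales the $v_d$-component, which $W_0+\Delta W'$ kills, and the Gaussian components along $v_d$ and its complement are independent. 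The discrepancy therefore vanishes for every readout class. This is where the budget condition enters. When $W_0$ is singular, $s_d=0$ and $\Delta W'=0$ already works.

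Part (ii) is immediate from earlier results. Under \eqref{eq:nuclear-budget}, Lemma~\ref{lem:spectral} places $\Delta W=-(W_0)_{\le r}$ in $\mathcal A_r$ with $W=(W_0)_{>r}$. Assumption~\ref{ass:kappa} at $r'=r$ then gives $\tilde d_{\mathcal H_W}\le\kappa\tau_{\mathrm{tail}}\le\epsilon$, which places $D_T$ in the cone.
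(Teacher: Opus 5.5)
Your proposal is correct and follows the paper's proof essentially step for step: the same deterministic-coupling bound combined with the Frobenius extremality of Lemma~\ref{lem:extremal}, the same pair of rank-one witnesses ($\rho\,v_dv_d^T$ realizes the pushforward and $-s_d(W_0)u_dv_d^T=-W_0v_dv_d^T$ certifies zero discrepancy, exactly as in Proposition~\ref{prop:tight}), and the same Lemma~\ref{lem:spectral}-plus-Assumption~\ref{ass:kappa} argument for (ii). The only deviation is that you compute $W_2(D_S,D_T)=\rho$ from the closed-form Gaussian formula with commuting covariances, whereas the paper combines the upper bound of (i) with a one-dimensional projection and Cauchy--Schwarz lower bound; both routes are valid, and your aside about ``$1+\rho<0$ type corrections'' is unnecessary, since $\rho>0$ already makes $I+\rho\,v_dv_d^T$ positive definite and hence equal to its own square root.
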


The Wasserstein displacement produced by the cancellation update itself is recorded as Proposition~\ref{prop:cancel-dist} in Appendix~\ref{app:aux}.

Remark~\ref{rem:same-phenomenon} (Appendix~\ref{app:remarks}) explains how the statistical and geometric collapses are two readings of Lemma~\ref{lem:extremal}, and how rank-dependence is recovered.

\section{Discussion and Conclusion}
\label{sec:conclusion}

Returning to the three questions:

\textbf{Q1 (statistical).} Under LoRA's own parameterization the adaptation rank buys no complexity reduction in any of the functionals we compute. The free-readout class is literally the same set of functions for every rank (Proposition~\ref{prop:collapse}), and the collapse persists when one bounds the matrix-valued layer directly, where the complexity is $\rho\,\mathbb E\|S\|_2/n$ exactly (Proposition~\ref{prop:matrix-collapse}). No scalar single-layer linear-readout formalization considered here exhibits rank-dependence, with or without the frozen weight: each of the two natural modifications---a joint Frobenius budget on the product, a fixed readout---is insufficient alone, and even both together leave the scalar complexity rank-free (Proposition~\ref{prop:necessity}, Remark~\ref{rem:frozen-scalar}); rank enters exactly at the matrix level, through the joint-budget complexity $\rho\,\mathbb E\|S_r\|_F/n$. That quantity is rank-sensitive only in its data-dependent form: its operator-norm upper bound can improve on the rank-free Frobenius branch when $q_r\log(m+d)$ is small relative to the effective rank of the data covariance, with $q_r=\min\{r,m,d\}=r$ in the practical regime (Remark~\ref{rem:when-rank-helps}), while the worst-case simplification $\rho X_{\max}\sqrt{m/n}$ is provably rank-free (Lemma~\ref{lem:matrix-rad}(ii)). The $\sqrt{q_r}$ in the operator branch is the sharp worst-case nuclear--Frobenius comparison factor between the two constraint sets (Remark~\ref{rem:sqrt-r-is-the-gap}).

\textbf{Q2 (spectral).} Rank governs how large a nuclear budget is required to cancel a given spectral block of $W_0$. We characterize the spectral threshold rank of the top-block cancellation certificate with upper and lower bounds that match in order under two-sided spectral decay, comparable directional constants ($c_1\asymp\|\Delta\Sigma\|_2$), and away from rank saturation (Theorem~\ref{thm:minrank}, Proposition~\ref{prop:lower}), under an alignment assumption verified uniformly along the cancellation family in a concrete Gaussian model with range-adapted readouts (Assumption~\ref{ass:kappa}, Example~\ref{ex:gaussian}); the budget-feasible minimal certified rank, whenever it exists, coincides with the spectral threshold (Remark~\ref{rem:rcert}). Two generalization routes follow: a Wasserstein-transport bound (Theorem~\ref{thm:gen}), and a spectral-cancellation bound, in population and empirical form, in which the discrepancy term is controlled by $\kappa\tau_r$ and the certificate class's own complexity shrinks with the cancelled rank (Corollary~\ref{cor:spectral-route}). The transport bound separates the two roles of the frozen weight: absent from the complexity term, unavoidable in the transport term.

\textbf{Q3 (geometric).} Whenever an admissible adapted weight is singular---automatic when $m<d$---the divergence-defined cone has ambient-scale radius (Proposition~\ref{prop:trivial-radius}), and even the pushforward-realizable restriction obeys a rank-independent universal radius bound under a fixed budget---sharp under the mild budget condition $s_d(W_0)\le\rho$ of Proposition~\ref{prop:tight}, with a rank-one Gaussian witness on whose instance the exact radius is attained for every rank (Theorem~\ref{thm:cone}).

The practical reading is that rank should not be selected as a capacity control. It should be selected against the spectral decay of the pre-trained weight and the norm budget one is willing to spend---with the effective rank of the feature distribution determining whether any statistical benefit is available at all. A natural next question, which our framework poses but does not settle, is whether optimization dynamics under per-factor weight decay implicitly select low-nuclear-norm rather than low-rank solutions; if so, the results here suggest the nuclear norm, not $r$, is the quantity worth tuning.
\bibliographystyle{IEEEtran}
\bibliography{reference}
\newpage
\appendices

\section{Auxiliary Results}
\label{app:aux}

\begin{lemma}[Norm Submultiplicativity]
\label{lem:submult}
For $B\in\mathbb R^{m\times r}$, $A\in\mathbb R^{r\times d}$: $\|BA\|_F\le\|BA\|_*\le\|B\|_F\|A\|_F$.
\end{lemma}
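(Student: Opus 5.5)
The statement to prove is Lemma~\ref{lem:submult}: $\|BA\|_F\le\|BA\|_*\le\|B\|_F\|A\|_F$. The plan is to handle the two inequalities separately, using only singular values and duality.

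For the left inequality, write the singular values of $BA$ as $s_1\ge s_2\ge\cdots\ge0$. Then
\[
\|BA\|_F^2=\sum_i s_i^2\le\Big(\sum_i s_i\Big)^2=\|BA\|_*^2,
\]
because every cross term $2s_is_j$ in the expanded square is nonnegative. Taking square roots gives the claim.

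For the right inequality, use the variational characterization of the nuclear norm,
\[
\|M\|_*=\sup_{\|Q\|_2\le1}\langle M,Q\rangle_F .
\]
This is the duality between nuclear and operator norms. It can be checked directly from the SVD $M=U\,\mathrm{diag}(s)V^T$:
\begin{itemize}
\item for the lower bound, choose $Q=UV^T$;
\item for the upper bound, note that $\langle M,Q\rangle_F=\sum_i s_i\,u_i^TQv_i$ and $|u_i^TQv_i|\le\|Q\|_2\le1$.
\end{itemize}

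Now fix any $Q$ with $\|Q\|_2\le1$. By the cyclic property of the trace,
\[
\langle BA,Q\rangle_F=\mathrm{tr}(A^TB^TQ)=\langle B,\,QA^T\rangle_F .
\]
Cauchy--Schwarz in the Frobenius inner product gives
\[
\langle B,\,QA^T\rangle_F\le\|B\|_F\,\|QA^T\|_F .
\]
Then $\|QA^T\|_F\le\|Q\|_2\|A^T\|_F\le\|A\|_F$. This last step holds because left multiplication by $Q$ acts on each column of $A^T$ separately, and it cannot increase any column's Euclidean norm by more than a factor $\|Q\|_2$. Taking the supremum over $Q$ yields $\|BA\|_*\le\|B\|_F\|A\|_F$.

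An equivalent route, if preferred, expands $BA=\sum_{k=1}^r b_ka_k^T$ as a sum of rank-one terms, with $b_k$ the columns of $B$ and $a_k^T$ the rows of $A$. The triangle inequality gives $\|BA\|_*\le\sum_k\|b_k\|_2\|a_k\|_2$, and Cauchy--Schwarz over $k$ bounds this by $\|B\|_F\|A\|_F$. This version is arguably cleaner and foreshadows the rank-one structure behind Lemma~\ref{lem:factorization}.

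There is no real obstacle in either route. The only point needing care is justifying the nuclear-norm duality, or equivalently that $\|xy^T\|_*=\|x\|_2\|y\|_2$, which follows immediately because a rank-one matrix has the single nonzero singular value $\|x\|_2\|y\|_2$.
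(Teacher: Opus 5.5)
Your proof is correct and follows essentially the same route as the paper: the first inequality comes from $\sum_i s_i^2\le(\sum_i s_i)^2$, and the second from nuclear/operator-norm duality followed by Frobenius Cauchy--Schwarz. The only difference is that you pair $B$ with $QA^T$ where the paper pairs $A$ with $B^TZ$, which is a mirror-image choice; your alternative rank-one expansion route is also valid but not needed.
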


\begin{proof}
The first inequality holds for any matrix, since $\|M\|_F^2=\sum_is_i(M)^2\le(\sum_is_i(M))^2$. For the second, by duality of $\|\cdot\|_*$ and $\|\cdot\|_2$ and then Cauchy--Schwarz,
\begin{equation}
\begin{aligned}
\|BA\|_* &= \max_{\|Z\|_2\le1}\langle BA,Z\rangle_F = \max_{\|Z\|_2\le1}\langle A, B^TZ\rangle_F\\
&\le \max_{\|Z\|_2\le1}\|A\|_F\|B^TZ\|_F \le \|A\|_F\|B\|_F,
\end{aligned}
\end{equation}
using $\|B^TZ\|_F\le\|B\|_F\|Z\|_2$.
\end{proof}

\begin{lemma}[Vector-Valued Translation Invariance]
\label{lem:translation}
Let $F=\{x\mapsto W_0x+\Delta Wx : \Delta W\in\mathcal D\}$ for a \emph{fixed} $W_0$ and any admissible set $\mathcal D$. Then $\mathfrak R_n^{\mathrm{vec}}(F)=\mathfrak R_n^{\mathrm{vec}}(\{x\mapsto\Delta Wx:\Delta W\in\mathcal D\})$ exactly: the frozen matrix contributes nothing.
\end{lemma}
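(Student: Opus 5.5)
The plan is to expand the supremum in the definition of $\mathfrak R_n^{\mathrm{vec}}$ and show that the $W_0$ contribution separates out as a term with zero mean. For any $\Delta W\in\mathcal D$ and any realization of the Rademacher vectors, write
\begin{equation}
\sum_{i=1}^n\langle\sigma_i,(W_0+\Delta W)x_i\rangle=\sum_{i=1}^n\langle\sigma_i,W_0x_i\rangle+\sum_{i=1}^n\langle\sigma_i,\Delta Wx_i\rangle .
\end{equation}
The first term does not depend on $\Delta W$. It can therefore be pulled outside the supremum over $\mathcal D$, and for each fixed $\sigma$ we obtain the pointwise identity
\begin{equation}
\sup_{\Delta W\in\mathcal D}\sum_i\langle\sigma_i,(W_0+\Delta W)x_i\rangle=\sum_i\langle\sigma_i,W_0x_i\rangle+\sup_{\Delta W\in\mathcal D}\sum_i\langle\sigma_i,\Delta Wx_i\rangle .
\end{equation}

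Next I take $\mathbb E_\sigma$ of both sides and use linearity of expectation. Because the sample is fixed, the first term is a finite linear combination $\sum_i\sum_j\sigma_{ij}(W_0x_i)_j$ of independent Rademacher signs. Each sign has mean zero, so this term has expectation zero. Dividing by $n$ then gives the claimed equality $\mathfrak R_n^{\mathrm{vec}}(F)=\mathfrak R_n^{\mathrm{vec}}(\{x\mapsto\Delta Wx:\Delta W\in\mathcal D\})$.

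The only step needing care is whether splitting the expectation is legitimate. Linearity of expectation holds when both pieces are integrable, or, more generally, when the $W_0$-term is integrable, which it is since it is bounded. The supremum may equal $+\infty$, for example if $\mathcal D$ is unbounded. In that case both sides are $+\infty$ because the $W_0$-term is finite, so the identity holds in the extended reals. If $\mathcal D$ is empty, both sides are $-\infty$ by convention; we exclude that case by requiring $\mathcal D$ to be nonempty as part of being admissible. For the sets used in the paper ($\mathcal A_r$ and $\mathcal B_{r,\rho}$), both suprema are finite: each is bounded by $\rho\|S\|_2$ or $\rho\|S\|_F$ via Lemma~\ref{lem:extremal}, so no extended-value issue arises there.
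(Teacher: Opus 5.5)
Your proof is correct and is the same argument as the paper's: you pull the $\Delta W$-independent term $\sum_i\langle\sigma_i,W_0x_i\rangle$ out of the supremum for each fixed $\sigma$, then eliminate it using $\mathbb E\sigma_i=0$. The extended-real bookkeeping is a harmless addition, since $\sigma$ takes finitely many values and the expectation is a finite average; one small correction is that the finiteness bound $\rho\|S\|_F$ over $\mathcal B_{r,\rho}$ comes from Cauchy--Schwarz (or Step~1 of the proof of Lemma~\ref{lem:matrix-rad}), not from Lemma~\ref{lem:extremal}, which only covers $\mathcal A_r$.
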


\begin{proof}
For each realization of $\{\sigma_i\}$, since $W_0x_i$ does not depend on $\Delta W$,
\begin{equation}
\begin{aligned}
\sup_{\Delta W}\sum_i
 \langle \sigma_i, W_0x_i+\Delta W x_i\rangle
&= \sum_i \langle \sigma_i, W_0x_i\rangle \\
&\quad + \sup_{\Delta W}
 \sum_i \langle \sigma_i,\Delta W x_i\rangle .
\end{aligned}
\end{equation}

Taking $\mathbb E_\sigma$ and using $\mathbb E[\sigma_i]=0$ annihilates the first summand exactly.
\end{proof}

\begin{remark}
This is not in tension with Proposition~\ref{prop:collapse}(ii), where a $\|W_0\|_2$ term survives. There the readout $\mathbf w$ is a free variable coupling the frozen and adaptive terms, so the two cannot be separated and the $W_0$ contribution is real---indeed Proposition~\ref{prop:collapse}(ii) is an identity. Here $\Delta W$ is the only free parameter and $W_0$ is a pure additive constant, so the decomposition is exact and its contribution vanishes.
\end{remark}

\begin{lemma}[Contraction to the Scalar Class]
\label{lem:contraction}
Let $S:=\sum_{i=1}^n\sigma_ix_i^T$ with $\sigma_i\in\{\pm1\}^m$ i.i.d.\ Rademacher vectors. For a constraint set $\mathcal D_r\in\{\mathcal A_r,\ \mathcal B_{r,\rho}\}$ define
\begin{equation}
\mathfrak R_n^{\mathrm{mat}}(\mathcal D_r) := \frac1n\,\mathbb E_\sigma\Big[\sup_{\Delta W\in\mathcal D_r}\langle\Delta W,S\rangle_F\Big],
\end{equation}
so that $\mathfrak R_n^{\mathrm{mat}}(\mathcal A_r)=\tfrac\rho n\mathbb E\|S\|_2$ (Proposition~\ref{prop:matrix-collapse}) and $\mathfrak R_n^{\mathrm{mat}}(\mathcal B_{r,\rho})=\tfrac\rho n\mathbb E\|S_r\|_F=\mathfrak R_n^{\mathrm{mat}}$ (Lemma~\ref{lem:matrix-rad}). With $\psi$ fixed and $L_\psi$-Lipschitz as in Definition~\ref{def:fixed-readout} and $\mathcal H_r^\psi$ built on $\mathcal D_r$,
\begin{equation}
\mathfrak{R}_n(\mathcal{H}_r^\psi) \le \sqrt2\, L_\psi\, \mathfrak{R}_n^{\mathrm{mat}}(\mathcal D_r).
\end{equation}
\end{lemma}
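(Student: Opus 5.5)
The plan is a vector-contraction argument of Maurer type \cite{maurer2016vector}, followed by the translation invariance of Lemma~\ref{lem:translation} and a rewriting of the vector-valued Rademacher sum as a Frobenius inner product.

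\textbf{Step 1 (contraction).} Fix the sample $x_1,\dots,x_n$. Write $f_{\Delta W}(x):=(W_0+\Delta W)x\in\mathbb R^m$, so that $\mathcal H_r^\psi=\{\psi\circ f_{\Delta W}:\Delta W\in\mathcal D_r\}$. Since $\psi$ is $L_\psi$-Lipschitz from $(\mathbb R^m,\|\cdot\|_2)$ to $\mathbb R$, the vector contraction inequality of \cite{maurer2016vector} applies. It gives
\begin{equation}
\mathbb E_\epsilon\Big[\sup_{\Delta W\in\mathcal D_r}\sum_{i=1}^n\epsilon_i\,\psi(f_{\Delta W}(x_i))\Big]\le\sqrt2\,L_\psi\,\mathbb E_\sigma\Big[\sup_{\Delta W\in\mathcal D_r}\sum_{i=1}^n\langle\sigma_i,f_{\Delta W}(x_i)\rangle\Big],
\end{equation}
where the $\epsilon_i$ are scalar signs and the $\sigma_i\in\{\pm1\}^m$ are Rademacher vectors. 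Dividing by $n$ yields
\begin{equation}
\mathfrak R_n(\mathcal H_r^\psi)\le\sqrt2\,L_\psi\,\mathfrak R_n^{\mathrm{vec}}(F),\qquad F:=\{f_{\Delta W}:\Delta W\in\mathcal D_r\}.
\end{equation}

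\textbf{Step 2 (remove $W_0$).} Lemma~\ref{lem:translation} applies with $\mathcal D=\mathcal D_r$ and gives
\begin{equation}
\mathfrak R_n^{\mathrm{vec}}(F)=\mathfrak R_n^{\mathrm{vec}}(\{x\mapsto\Delta Wx:\Delta W\in\mathcal D_r\}).
\end{equation}

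\textbf{Step 3 (matrix form).} For each realization of the signs,
\begin{equation}
\sum_i\langle\sigma_i,\Delta Wx_i\rangle=\sum_i\mathrm{tr}(x_i\sigma_i^T\Delta W)=\langle\Delta W,S\rangle_F,\qquad S=\sum_i\sigma_ix_i^T.
\end{equation}
Hence the reduced vector-valued complexity from Step 2 equals $\mathfrak R_n^{\mathrm{mat}}(\mathcal D_r)$ exactly, and chaining Steps 1–3 gives the claim. The identifications stated inside the lemma then follow from earlier results. For $\mathcal A_r$, Proposition~\ref{prop:matrix-collapse} (equivalently, Lemma~\ref{lem:extremal}) gives $\mathfrak R_n^{\mathrm{mat}}(\mathcal A_r)=\tfrac\rho n\mathbb E\|S\|_2$. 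For $\mathcal B_{r,\rho}$, Lemma~\ref{lem:matrix-rad}(i) gives $\mathfrak R_n^{\mathrm{mat}}(\mathcal B_{r,\rho})=\tfrac\rho n\mathbb E\|S_r\|_F$.

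\textbf{Main obstacle: applicability of the contraction inequality.} Maurer's inequality requires a countable index set, or at least measurability of the suprema. Two ways to handle this:
\begin{itemize}
\item Both $\mathcal A_r$ and $\mathcal B_{r,\rho}$ are compact, and $\Delta W\mapsto\psi((W_0+\Delta W)x_i)$ is continuous. The supremum over $\mathcal D_r$ therefore equals the supremum over a countable dense subset, and the inequality applies there.
\item If needed, one can apply the inequality to finite subsets of $\mathcal D_r$ and pass to the limit by monotone convergence.
\end{itemize}

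A second point to check is that the Lipschitz condition is with respect to the Euclidean norm on $\mathbb R^m$. This is the norm under which Maurer's constant $\sqrt2$ is stated, and it matches Definition~\ref{def:fixed-readout}.
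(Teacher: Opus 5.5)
Your proof is correct and follows the paper's argument. Both use the same three ingredients: Maurer's vector contraction with constant $\sqrt2$, the exact translation invariance of Lemma~\ref{lem:translation}, and the trace identity $\sum_i\langle\sigma_i,\Delta Wx_i\rangle=\langle\Delta W,S\rangle_F$. You apply them in a different order, and your countable-dense-subset remark is a harmless extra precaution; you also rightly skip the paper's optional recentring $\psi-\psi(0)$, since Maurer's inequality as you use it has no absolute values.
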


\begin{proof}
By Lemma~\ref{lem:translation} and the trace identity $\sum_i\langle\sigma_i,\Delta Wx_i\rangle=\langle\Delta W,S\rangle_F$, the class $\{x\mapsto(W_0+\Delta W)x:\Delta W\in\mathcal D_r\}$ has vector-valued complexity $\mathfrak R_n^{\mathrm{mat}}(\mathcal D_r)$. Maurer's vector-contraction inequality \cite{maurer2016vector}---which requires the Lipschitz map to be fixed rather than jointly optimized, and carries the explicit constant $\sqrt2$---gives $\mathfrak R_n(\psi\circ F)\le\sqrt2L_\psi\mathfrak R_n^{\mathrm{vec}}(F)$. If one prefers a contraction statement for maps vanishing at zero, replace $\psi$ by $\psi-\psi(0)$: this changes neither $L_\psi$ nor $\mathfrak R_n(\mathcal H_r^\psi)$, since the constant shift adds the $h$-independent term $\psi(0)\sum_i\sigma_i$, of zero expectation, inside the supremum.
\end{proof}

\begin{lemma}[Monotonicity of the Tail Profile]
\label{lem:monotone}
$\tau_0\ge\tau_1\ge\tau_2\ge\cdots$, and $\tau_r=0$ for $r\ge\mathrm{rank}(W_0)$.
\end{lemma}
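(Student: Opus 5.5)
The plan is to make the tail weight explicit through the fixed SVD, recast $\tau_r$ as the operator norm of a compression of one fixed symmetric matrix, and then use the fact that compressing to a smaller subspace cannot increase the operator norm.

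Write $W_0=\sum_{i=1}^K s_iu_iv_i^T$ with $K=\mathrm{rank}(W_0)$ and $s_i:=s_i(W_0)$. Then $(W_0)_{>r}=\sum_{i>r}s_iu_iv_i^T$, and in particular $(W_0)_{>0}=W_0$. Let $U_{>r}$ be the matrix with columns $u_{r+1},\dots,u_K$, define $V_{>r}$ the same way from the $v_i$, and let $D_{>r}=\mathrm{diag}(s_{r+1},\dots,s_K)$. This gives
\begin{equation}
(W_0)_{>r}\,\Delta\Sigma\,(W_0)_{>r}^T=U_{>r}\,M_{>r}\,U_{>r}^T,\qquad M_{>r}:=D_{>r}V_{>r}^T\Delta\Sigma V_{>r}D_{>r}.
\end{equation}
The columns of $U_{>r}$ are orthonormal, so $\tau_r=\|M_{>r}\|_2$.

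Now let $N$ be the $K\times K$ symmetric matrix with entries $N_{ij}=s_is_j\,v_i^T\Delta\Sigma v_j$. The matrix $M_{>r}$ is exactly the principal submatrix of $N$ indexed by $\{r+1,\dots,K\}$.

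The main step is the claim that principal submatrices of a symmetric matrix have nonincreasing operator norm as the index set shrinks. For a symmetric matrix, $\|M\|_2=\max_{\|x\|=1}|x^TMx|$. If $P$ is a principal submatrix of $Q$, then any unit vector supported on the indices of $P$, extended by zeros, is a unit vector for $Q$ with the same quadratic form. So the maximum over the smaller index set is at most the maximum over the larger one. Applying this to the nested sets $\{r+2,\dots,K\}\subseteq\{r+1,\dots,K\}$ gives $\tau_{r+1}\le\tau_r$ for all $r\le K-1$.

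For $r\ge K$ the tail $(W_0)_{>r}$ is the zero matrix, so $\tau_r=0$. This covers the transition from $r=K-1$ to $r=K$, since $\tau_K=0\le\tau_{K-1}$, and it also covers every $r\ge K$ trivially. Combining the two cases gives the full chain $\tau_0\ge\tau_1\ge\cdots$.

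The obstacle I expect is only bookkeeping:
\begin{itemize}
\item stating the symmetric quadratic-form characterization of the operator norm, which applies because $\Delta\Sigma$ is symmetric, so that $N$ is symmetric;
\item handling the degenerate ranges $r\ge K$ and $W_0=0$, where every $\tau_r$ equals $0$;
\item noting that repeated singular values cause no ambiguity, because the SVD is fixed once and for all.
\end{itemize}
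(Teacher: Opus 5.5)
Your proof is correct. It is the paper's argument written in coordinates: the paper observes that $(W_0)_{>r+1}=P\,(W_0)_{>r}$ with $P=I_m-u_{r+1}u_{r+1}^T$, so the $(r{+}1)$-st alignment matrix equals $P$ times the $r$-th times $P$, and $\tau_{r+1}\le\|P\|_2^2\,\tau_r$. After conjugating by the isometry $U_{>r}$, this is exactly your deletion of the first row and column of the principal block of $N$.

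The paper's version needs neither coordinates nor the symmetry of $\Delta\Sigma$. Yours has the side benefit of showing every $\tau_r$ as a nested principal block of one fixed matrix $N$, so both later bounds $s_{r+1}(W_0)^2\,|v_{r+1}^T\Delta\Sigma v_{r+1}|\le\tau_r\le s_{r+1}(W_0)^2\|\Delta\Sigma\|_2$ can be read off directly.
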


\begin{lemma}[Spectral Cancellation is Realizable]
\label{lem:spectral}
Let $W_0=(W_0)_{\le r}+(W_0)_{>r}$ be the SVD split of $W_0$. If $\rho\ge\|(W_0)_{\le r}\|_*=\sum_{i\le r}s_i(W_0)$, then $\Delta W:=-(W_0)_{\le r}$ belongs to $\mathcal A_r$, and $W:=W_0+\Delta W=(W_0)_{>r}$, so that $\|W\Delta\Sigma W^T\|_2=\tau_r$ for every target.
\end{lemma}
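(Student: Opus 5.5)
The plan is to exhibit an explicit factorization of $\Delta W:=-(W_0)_{\le r}$ that meets the per-factor budgets, and then read off the identity for $W$ algebraically. Using the fixed SVD $W_0=\sum_i s_i(W_0)u_iv_i^T$, write $s_i:=s_i(W_0)$ and $N:=\sum_{i\le r}s_i=\|(W_0)_{\le r}\|_*$.

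If $N=0$, then $(W_0)_{\le r}=0$. In this case $B=0$, $A=0$ gives $\Delta W=0\in\mathcal A_r$.

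If $N>0$, the $i$th column of $B\in\mathbb R^{m\times r}$ will be $-\sqrt{s_i}\,(B_B/\sqrt N)\,u_i$. The $i$th row of $A\in\mathbb R^{r\times d}$ will be $\sqrt{s_i}\,(B_A/\sqrt N)\,v_i^T$. If $r$ exceeds $\min(m,d)$ or $\mathrm{rank}(W_0)$, the extra indices are padded with zero columns and rows. The product is then
\[
BA=-\sum_{i\le r}s_i\,(B_BB_A/N)\,u_iv_i^T .
\]
This rescaling is wrong unless $\rho/N=1$, so the balanced split must be done more carefully. Take columns $-\sqrt{s_i}\,\sqrt{B_B/B_A}\,u_i$ for $B$ and rows $\sqrt{s_i}\,\sqrt{B_A/B_B}\,v_i^T$ for $A$. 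Then $BA=-\sum_{i\le r}s_iu_iv_i^T=-(W_0)_{\le r}$. By orthonormality of the $u_i$ and $v_i$, the norms are
\[
\|B\|_F^2=(B_B/B_A)\,N\le (B_B/B_A)\,\rho=B_B^2,
\qquad
\|A\|_F^2=(B_A/B_B)\,N\le B_A^2 .
\]
Hence $\Delta W\in\mathcal A_r$. Alternatively, one can simply invoke Lemma~\ref{lem:factorization}: $\mathrm{rank}(\Delta W)\le r$, and $\|\Delta W\|_*=N\le\rho$ because the nuclear norm of a truncated SVD is the sum of the retained singular values. I would present the explicit factorization as the primary argument, since it avoids depending on the harder direction of Lemma~\ref{lem:factorization}.

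The remaining steps are immediate. First, $W=W_0-(W_0)_{\le r}=(W_0)_{>r}$ by definition of the split. Second, $\|W\Delta\Sigma W^T\|_2=\|(W_0)_{>r}\Delta\Sigma(W_0)_{>r}^T\|_2=\tau_r$ by definition of $\tau_r$. This holds for any target, since $\Delta\Sigma$ never entered the construction.

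The only point needing care is bookkeeping when $r>\mathrm{rank}(W_0)$ or when singular values repeat. Both are handled by the fixed SVD convention and the convention $s_i=0$, so I expect no real obstacle.
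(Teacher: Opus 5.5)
Your proposal is correct and is essentially the paper's argument. The paper simply invokes Lemma~\ref{lem:factorization}, using that $\Delta W$ has rank at most $r$ and $\|\Delta W\|_*=\sum_{i\le r}s_i(W_0)\le\rho$; that is your stated alternative. Your explicit factorization is the $\supseteq$-direction construction from that lemma's proof, with the rescaling constant fixed at $c=\sqrt{B_B/B_A}$, which lies in the admissible interval precisely because $\sum_{i\le r}s_i(W_0)\le B_BB_A$. In a write-up, drop the discarded first attempt with factors $B_B/\sqrt N$ and $B_A/\sqrt N$, which as you noticed yields $-(\rho/N)(W_0)_{\le r}$, and drop the $N=0$ case split, since your corrected construction already returns $B=A=0$ there.
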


\begin{proof}
$\Delta W$ has rank at most $r$ and $\|\Delta W\|_*=\sum_{i\le r}s_i(W_0)\le\rho$, so $\Delta W\in\mathcal A_r$ by Lemma~\ref{lem:factorization}. The identity $W=(W_0)_{>r}$ is immediate.
\end{proof}

\begin{proposition}[Displacement of the Cancellation Pushforward; $m=d$]
\label{prop:cancel-dist}
With $\Delta W=-(W_0)_{\le r}$ as in Lemma~\ref{lem:spectral}, $D_S$ of finite second moment, and $D_T=(I+\Delta W)_\#D_S$,
\begin{equation}
W_2(D_S,D_T) \le \big(\mathrm{Tr}\big[(W_0)_{\le r}\,\Sigma_S\,(W_0)_{\le r}^T\big]\big)^{1/2}.
\end{equation}
\end{proposition}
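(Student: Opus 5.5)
The plan is to bound $W_2$ by the cost of one explicit coupling: the deterministic coupling that pairs each source point with its own image under the cancellation map. Let $x\sim D_S$ and set $y:=(I+\Delta W)x=x-(W_0)_{\le r}x$. The joint law of $(x,y)$ is a coupling in $\Pi(D_S,D_T)$, because its second marginal is $(I+\Delta W)_\#D_S=D_T$ by definition. Taking the infimum over couplings in Definition~\ref{def:wass} then gives
\begin{equation}
W_2(D_S,D_T)^2\ \le\ \mathbb E_{x\sim D_S}\|x-y\|_2^2=\mathbb E_{x\sim D_S}\big\|(W_0)_{\le r}x\big\|_2^2 .
\end{equation}

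Next I compute this expectation as a trace. Write $P:=(W_0)_{\le r}$. Then
\begin{equation}
\|Px\|_2^2=x^TP^TPx=\mathrm{Tr}\big(Pxx^TP^T\big),
\end{equation}
and by linearity of trace and expectation,
\begin{equation}
\mathbb E\|Px\|_2^2=\mathrm{Tr}\big(P\,\Sigma_S\,P^T\big).
\end{equation}
Here $\Sigma_S$ is the uncentered second-moment matrix, which is finite by hypothesis. The exchange of trace and expectation needs no further justification, since every entry of $xx^T$ is integrable.

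For well-posedness, $D_T$ has a finite second moment because $\|(I+\Delta W)x\|_2\le\|I+\Delta W\|_2\,\|x\|_2$, so $W_2(D_S,D_T)$ is defined. Taking square roots in the first display gives the claimed bound.

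The only point that needs care is confirming that $D_T$ is exactly the pushforward under $I+\Delta W$. This is also why $m=d$ is required: it makes $I+\Delta W$ a map on $\mathbb R^d$. The rest of the argument is routine.
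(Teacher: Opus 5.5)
Your proposal is correct and follows the paper's proof: the deterministic coupling $(x,(I+\Delta W)x)$ gives $W_2^2\le\mathbb E_{D_S}\|\Delta W x\|_2^2=\mathrm{Tr}(\Delta W\Sigma_S\Delta W^T)$, and substituting $\Delta W=-(W_0)_{\le r}$ (the sign cancels in the quadratic form) yields the bound. Your added checks on the finite second moment of $D_T$ and on exchanging trace and expectation are harmless extras that the paper leaves implicit.
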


\begin{proof}
The linear pushforward is a feasible coupling, so $W_2^2\le\mathbb E_{D_S}\|\Delta Wx\|_2^2=\mathrm{Tr}(\Delta W\Sigma_S\Delta W^T)$; substitute $\Delta W=-(W_0)_{\le r}$. This is generally only an upper bound: $W_2$ is an infimum over \emph{all} couplings, and the linear map need not be the optimal-transport map (equality would additionally require, e.g., Gaussian marginals with commuting covariances).
\end{proof}

\begin{example}[Gaussian instantiation of Assumption~\ref{ass:kappa}]
\label{ex:gaussian}
Let $D_S,D_T$ be centered Gaussians on $\mathbb R^d$ with covariances $\Sigma_S,\Sigma_T$ and $\Sigma_T\succ0$ (Assumption~\ref{ass:bddfeat} is not in force here). If $W_0=0$ every cancellation weight is $0$, every induced class is trivial, and the assumption holds vacuously with any $\kappa>0$; so assume $W_0\ne0$, write $K:=\mathrm{rank}(W_0)\ge1$, and let $s_K(W_0)>0$ be the smallest nonzero singular value of $W_0$. For each weight $W$ take the \emph{range-adapted} (output-space) bounded linear readouts
\begin{equation}
\Psi_W=\big\{z\mapsto\mathbf w^Tz\ :\ \|\mathbf w\|_2\le1,\ \mathbf w\in\mathrm{range}(W)\big\},
\end{equation}
where $\mathrm{range}(W)\subseteq\mathbb R^m$ is the column space of $W$. We emphasize that this is an explicitly weight-dependent readout family $\{\Psi_W\}$, as permitted by Definition~\ref{def:induced}: the example verifies the assumption for this restricted model, not for an arbitrary readout class fixed independently of $W$. With this choice, Assumption~\ref{ass:kappa} holds for the entire cancellation family $\{W=(W_0)_{>r'} : r'\in\{0,1,\dots,K-1\}\}$ with the single constant
\begin{equation}
\kappa=\tfrac2c\sqrt{2/\pi},\qquad c:=\lambda_{\min}(\Sigma_T)^{1/2}\,s_K(W_0).
\end{equation}
Indeed, fix such a $W$. Its nonzero singular values are $s_{r'+1}(W_0)\ge\cdots\ge s_K(W_0)$, so expanding any $u\in\mathrm{range}(W)$ in the left singular vectors of $W$ gives $\|W^Tu\|_2\ge s_K(W_0)\|u\|_2$, whence
\begin{equation}
\beta:=u^TW\Sigma_TW^Tu\ \ge\ \lambda_{\min}(\Sigma_T)\|W^Tu\|_2^2\ \ge\ c^2\|u\|_2^2 .
\end{equation}
For $h=\mathbf w^TWx$, $h'=\mathbf w'^TWx$ with $\mathbf w,\mathbf w'$ in the unit ball of $\mathrm{range}(W)$ and $u:=\mathbf w-\mathbf w'\in\mathrm{range}(W)$ (so $\|u\|_2\le2$), the Gaussian identity $\mathbb E|Z|=\sqrt{2/\pi}(\mathrm{Var}\,Z)^{1/2}$ gives
\begin{equation}
\mathbb E_{D}|h-h'| = \sqrt{\tfrac2\pi}\big(u^TW\Sigma_DW^Tu\big)^{1/2},\qquad D\in\{D_S,D_T\}.
\end{equation}
For $u\ne0$, writing $a:=u^TW\Sigma_SW^Tu$,

\begin{equation}
\begin{aligned}
\bigl|\sqrt{a}-\sqrt{\beta}\bigr|
&= \frac{|a-\beta|}{\sqrt{a}+\sqrt{\beta}} \\
&\le
\frac{\|u\|_2^2\,\|W\Delta\Sigma W^\top\|_2}
     {c\|u\|_2} \\
&\le \frac{2}{c}\,
\|W\Delta\Sigma W^\top\|_2 .
\end{aligned}
\end{equation}

and $u=0$ is trivial. Taking the supremum over $\mathbf w,\mathbf w'$ gives the claim; for $r'\ge K$ one has $W=0$, every member of $\mathcal H_W$ is identically zero, and both sides vanish. Two remarks. First, the constant is uniform over the family precisely because the tail's smallest nonzero singular value equals $s_K(W_0)$ for every $r'<K$. Second, the restriction of the readouts to $\mathrm{range}(W)$ is essential: cancellation weights are rank-deficient, so no bound of the form $\lambda_{\min}(W\Sigma_TW^T)\ge c^2>0$ over all of $\mathbb R^m$ can hold for them, and an unrestricted-readout version of this example would fail exactly at the weights where the assumption is used (cf.\ Remark~\ref{rem:kappa-scope}).
\end{example}

\begin{proposition}[Tightness Witness for Theorem~\ref{thm:cone}(i)]
\label{prop:tight}
Let $m=d$ and $D_S=\mathcal N(0,I_d)$, so $\lambda_{\max}(\Sigma_S)=1$ (Assumption~\ref{ass:bddfeat} is not in force here), and suppose the budget covers the weakest input direction of the frozen weight:
\begin{equation}
\label{eq:witness-budget}
s_d(W_0)\ \le\ \rho
\end{equation}
(automatic, for every budget, when $W_0$ is singular). Let $u$ be a unit right singular vector of $W_0$ attaining $\|W_0u\|_2=s_d(W_0)$, let $e_1\in\mathbb R^r$ be the first standard basis vector, and set $B:=B_B\,ue_1^T\in\mathbb R^{m\times r}$, $A:=B_A\,e_1u^T\in\mathbb R^{r\times d}$, so that $\|B\|_F=B_B$, $\|A\|_F=B_A$ and $\Delta W:=BA=\rho\,uu^T$ has rank one, and put $D_T:=(I+\Delta W)_\#D_S$. Then for every $\epsilon\ge0$ and every $r\ge1$:
\begin{enumerate}
\item[(i)] \emph{(Membership.)} $D_T\in\mathcal C_{r,\mathrm{two}}^{\mathrm{push}}(\epsilon)$.
\item[(ii)] \emph{(Extremality.)} $W_2(D_S,D_T)=\rho$, matching the bound of Theorem~\ref{thm:cone}(i) exactly.
\end{enumerate}
As anticipated in Remark~\ref{rem:two-witnesses}, the update certifying membership in part (i) is different from the update generating the pushforward.
\end{proposition}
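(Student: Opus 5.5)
The plan is to prove the two parts separately. In both, the key fact is that the pushforward map $T:=I+\rho\,uu^T$ moves points only along the single direction $u$.

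\emph{Pushforward realizability and (ii).} First I will check that $\Delta W=\rho\,uu^T=BA$ lies in $\mathcal A_r$. This is immediate from the definition \eqref{eq:Ar}, since $\|B\|_F=B_B$ and $\|A\|_F=B_A$. It also follows from Lemma~\ref{lem:factorization}, because the rank is one and $\|\Delta W\|_*=\rho$. Hence $D_T=T_\#D_S$ belongs to the pushforward set for every $r\ge1$.

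For (ii), note that $D_T=\mathcal N(0,T^2)$ with $T\succ0$ symmetric. The covariances $I$ and $T^2$ commute, so the Gaussian $W_2$ formula gives
\[
W_2^2=\mathrm{tr}\bigl(I+T^2-2(T^2)^{1/2}\bigr)=\|T-I\|_F^2=\|\rho\,uu^T\|_F^2=\rho^2 .
\]
Equivalently, the upper bound $W_2\le\rho$ comes from the coupling $(x,Tx)$. The lower bound comes from $W_2\ge|\sqrt{\mathrm{Var}_{D_T}(u^Tx)}-\sqrt{\mathrm{Var}_{D_S}(u^Tx)}|=(1+\rho)-1$, which is the one-dimensional projection argument. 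Since $\lambda_{\max}(\Sigma_S)=1$, this matches Theorem~\ref{thm:cone}(i).

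\emph{Membership (i): the second witness.} This is the real step, and I will use a different update $\Delta W':=-W_0uu^T$. Write $W_0u=s_d(W_0)\,v'$ with $v'$ a unit left singular vector. Then $\Delta W'=-s_d(W_0)\,v'u^T$ has rank at most one and $\|\Delta W'\|_*=s_d(W_0)\le\rho$ by \eqref{eq:witness-budget}. By Lemma~\ref{lem:factorization}, $\Delta W'\in\mathcal A_1\subseteq\mathcal A_r$. If $W_0$ is singular, $s_d(W_0)=0$ and $\Delta W'=0$, so the condition holds automatically.

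The adapted weight is $W'=W_0+\Delta W'=W_0(I-uu^T)$, which satisfies $W'u=0$. Therefore $W'Tx=W'x+\rho(u^Tx)W'u=W'x$ for all $x$. This says $W'_\#D_T=(W'T)_\#D_S=W'_\#D_S$, so the law of $z=W'x$ is identical under the two domains.

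Every pair $h,h'\in\mathcal H_{W'}$ has the form $h(x)=\psi(W'x)$ and $h'(x)=\psi'(W'x)$. Hence $|h-h'|$ is a measurable function of $W'x$ alone, and its expectations under $D_S$ and $D_T$ coincide. This gives $\tilde d_{\mathcal H_{W'}}(D_S,D_T)=0\le\epsilon$ for every $\epsilon\ge0$ and any readout family $\Psi_{W'}$. The only caveat is that these expectations must be defined, which I will note is implicit in Definition~\ref{def:disc}. With $\mathcal X=\mathbb R^d$ closed under $T$, this establishes $D_T\in\mathcal C_r(W_0,D_S;\epsilon)$ and hence (i).

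I expect the main obstacle to be conceptual rather than computational. The pushforward witness $\rho\,uu^T$ does not itself certify small discrepancy in general, so a separate annihilating update is needed. The budget condition $s_d(W_0)\le\rho$ is used exactly once, to make $\Delta W'=-W_0uu^T$ affordable, and the final remark about distinct witnesses follows by comparing $\Delta W$ with $\Delta W'$.
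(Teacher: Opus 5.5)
Your proposal is correct and follows the paper's proof essentially step for step. For (i) you use the same annihilating second witness $\Delta W'=-W_0uu^T$: it is affordable exactly because $s_d(W_0)\le\rho$, and $W'u=0$ makes the law of $W'x$ identical under $D_S$ and $D_T$. For (ii) you use the coupling upper bound together with the projection-onto-$u$ lower bound. Your extra closed-form Gaussian computation $W_2^2=\|T-I\|_F^2=\rho^2$ for commuting covariances is a valid shortcut; the paper does not need it, since the elementary projection argument you also give already yields equality.
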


\begin{proof}
(i) $D_T$ is by construction a pushforward by an admissible update, since $\Delta W=\rho uu^T\in\mathcal A_1\subseteq\mathcal A_r$. It remains to verify $D_T\in\mathcal C_r(W_0,D_S;\epsilon)$, i.e.\ to exhibit \emph{some} admissible update whose induced class has discrepancy at most $\epsilon$. Take $\Delta W':=-W_0uu^T=-(W_0u)u^T$. It has rank at most one (it vanishes when $W_0u=0$) and, by \eqref{eq:witness-budget}, $\|\Delta W'\|_*=\|W_0u\|_2\|u\|_2=s_d(W_0)\le\rho$, so $\Delta W'\in\mathcal A_1\subseteq\mathcal A_r$ by Lemma~\ref{lem:factorization} (and $\|\Delta W'\|_F\le\rho$, so also $\Delta W'\in\mathcal B_{r,\rho}$). The weight $W':=W_0+\Delta W'=W_0(I-uu^T)$ satisfies $W'u=0$. Since $D_T$ is the law of $x+\rho(u^Tx)u$ with $x\sim D_S$,
\begin{equation}
W'\big(x+\rho(u^Tx)u\big)=W'x,
\end{equation}
so $W'x$ has the same law under $D_S$ and $D_T$, and therefore $\mathbb E_{D_S}|h-h'|=\mathbb E_{D_T}|h-h'|$ for all $h,h'\in\mathcal H_{W'}$, whatever the readout set. Hence $\tilde d_{\mathcal H_{W'}}(D_S,D_T)=0\le\epsilon$, and $D_T\in\mathcal C_{r,\mathrm{two}}^{\mathrm{push}}(\epsilon)$.

(ii) The upper bound $W_2\le\rho$ is Theorem~\ref{thm:cone}(i). For the lower bound, let $(X,Y)$ be any coupling of the two laws. The projections $(u^TX,u^TY)$ couple $\mathcal N(0,1)$ with $\mathcal N(0,(1+\rho)^2)$ (isotropy of $D_S$ makes this valid for any unit $u$, in particular the chosen singular direction), and $\mathbb E\|X-Y\|_2^2\ge\mathbb E(u^TX-u^TY)^2$. For real variables with second moments $1$ and $(1+\rho)^2$, Cauchy--Schwarz gives $\mathbb E(u^TX-u^TY)^2\ge 1+(1+\rho)^2-2(1+\rho)=\rho^2$. Hence $W_2\ge\rho$, with equality. Since both the pushforward witness $\Delta W$ and the membership certificate $\Delta W'$ have rank at most one, enlarging $r$ cannot enlarge the radius and restricting to $r=1$ cannot shrink it: under \eqref{eq:witness-budget} the radius is genuinely rank-independent for this instance, not merely rank-independently bounded.
\end{proof}

\section{Secondary Results and Extended Remarks}
\label{app:remarks}

This appendix collects, in the order of the main text, secondary results and extended discussion supporting the results of Sections~\ref{sec:framework}--\ref{sec:cone}.

\begin{remark}\label{rem:disc-position}
$\tilde d_{\mathcal H}$ is a discrepancy distance in the sense of Mansour, Mohri, and Rostamizadeh~\cite{mansour2009domain}, instantiated with the absolute loss. It is a continuous-valued analogue of, but \emph{not} identical to, the binary $\mathcal H\Delta\mathcal H$-divergence of Ben-David et al.~\cite{ben2006analysis}: the latter is a supremum over probabilities of disagreement events for thresholded hypotheses, whereas $\tilde d_{\mathcal H}$ is a supremum over expected absolute score differences. Every result below is stated and proved for $\tilde d_{\mathcal H}$ natively; no transfer from the binary theory is invoked anywhere, and in particular no margin condition relating hard and soft disagreement is needed.
\end{remark}

\begin{remark}[On the $\alpha/r$ scaling used in practice]
\label{rem:alpha-scaling}
Standard implementations parameterize the update as $\Delta W=\tfrac{\alpha_{\mathrm{LoRA}}}rBA$ for a fixed hyperparameter $\alpha_{\mathrm{LoRA}}>0$ (we reserve the plain symbol $\alpha$ for the spectral decay exponent of Section~\ref{sec:spectral}). The reachable set is then $\{\Delta W:\mathrm{rank}(\Delta W)\le r,\ \|\Delta W\|_*\le\rho_r\}$ with \emph{effective} nuclear radius $\rho_r:=\tfrac{\alpha_{\mathrm{LoRA}}}rB_BB_A$: all extremal formulas below remain valid after replacing $\rho$ by $\rho_r$, while the individual reachable updates depend on both the rank cap and $\rho_r$. Under that convention, the rank-one support and Frobenius extremals over the per-factor set change only through the scalar radius $\rho_r$---an instance of, not an exception to, the affordability reading above---while the joint-Frobenius matrix complexity of Lemma~\ref{lem:matrix-rad} retains its additional dependence on the truncated signal $S_r$. Throughout the paper $\rho$ always denotes the budget on the actual product $\Delta W$.
\end{remark}

\begin{remark}\label{rem:collapse-mechanism}
Part (i) is strictly stronger than (ii): it forecloses \emph{every} statistical separation between ranks in this formalization, since covering numbers, chaining functionals, and Rademacher averages all depend on a class only as a set of functions. The mechanism, visible in the proof, is that a free readout $\mathbf w$ can absorb the entire relevant action of any admissible $\Delta W$ into the rank-one surrogate $\|\mathbf w\|_2^{-2}\,\mathbf w(\mathbf w^T\Delta W)$, which lies in $\mathcal A_1$ by Lemma~\ref{lem:factorization}. Note also that (ii) is an identity rather than an inequality, so the $\|W_0\|_2$ term in \eqref{eq:collapsed} is genuine and not an artifact of decoupling the two suprema.
\end{remark}

\begin{proposition}[Necessity Without Sufficiency for the Scalar Classes]
\label{prop:necessity}
Let $v=\sum_i\sigma_ix_i$ and $S=\sum_i\sigma_ix_i^T$ be as in Section~\ref{sec:complexity}.
\begin{enumerate}
\item[(i)] \emph{(Fixing the readout alone is not enough.)} For the class $\{x\mapsto\mathbf w^T\Delta Wx:\Delta W\in\mathcal A_r\}$ with $\mathbf w$ fixed, $\|\mathbf w\|_2\le1$, the empirical Rademacher complexity equals $\tfrac{\rho}{n}\|\mathbf w\|_2\,\mathbb E\|v\|_2$, with no $r$.
\item[(ii)] \emph{(Relaxing the budget alone is not enough.)} For the free-readout class $\{x\mapsto\mathbf w^T\Delta Wx : \|\mathbf w\|_2\le1,\ \Delta W\in\mathcal B_{r,\rho}\}$, the empirical Rademacher complexity equals $\tfrac{\rho}{n}\mathbb E\|v\|_2$, with no $r$.
\item[(iii)] \emph{(Both changes together are still not enough.)} For the class $\{x\mapsto\mathbf w^T\Delta Wx:\Delta W\in\mathcal B_{r,\rho}\}$ with $\mathbf w$ fixed, $\|\mathbf w\|_2\le1$, the empirical Rademacher complexity equals $\tfrac{\rho}{n}\|\mathbf w\|_2\,\mathbb E\|v\|_2$, with no $r$.
\end{enumerate}
Consequently no scalar single-layer \emph{linear}-readout formalization considered here---free or fixed readout, per-factor or joint budget---exhibits rank-dependence: the two changes are necessary for rank-dependence to be possible, and by (iii) they are not sufficient at the scalar linear level. Rank enters exactly at the matrix-valued level, through the value $\rho\,\mathbb E\|S_r\|_F/n$ of Lemma~\ref{lem:matrix-rad}, with no readout at all; for a fixed \emph{nonlinear} readout, over either constraint set, our contraction bounds are upper bounds rather than identities, so rank-dependence there is neither established nor excluded.
\end{proposition}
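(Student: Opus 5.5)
The plan is to reduce each of the three cases to a supremum of a linear functional $\langle \Delta W, M\rangle_F$ with a rank-one signal matrix, and then compute that supremum over the relevant constraint set. For a fixed or free $\mathbf w$, the trace identity gives
\[
\sum_i\sigma_i\,\mathbf w^T\Delta Wx_i=\mathbf w^T\Delta W v=\langle\Delta W,\mathbf wv^T\rangle_F .
\]
So the empirical complexity is $\tfrac1n\mathbb E_\sigma\sup\langle\Delta W,\mathbf wv^T\rangle_F$, with the supremum taken over $\Delta W$, and also over $\mathbf w$ in the free-readout case. The signal $\mathbf wv^T$ has rank at most one. Its three relevant norms coincide: $\|\mathbf wv^T\|_2=\|\mathbf wv^T\|_F=\|\mathbf wv^T\|_*=\|\mathbf w\|_2\|v\|_2$.

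For (i), I would apply Lemma~\ref{lem:extremal}, \eqref{eq:extremal-lin}, with $M=\mathbf wv^T$. This gives the value $\rho\|\mathbf wv^T\|_2=\rho\|\mathbf w\|_2\|v\|_2$ for every realization of $\sigma$. Taking expectations yields $\tfrac\rho n\|\mathbf w\|_2\mathbb E\|v\|_2$.

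For (iii), I need the support function of $\mathcal B_{r,\rho}$ at $M$.
\begin{itemize}
\item Upper bound: for any $\Delta W$ of rank at most $r$, von Neumann's trace inequality gives $\langle\Delta W,M\rangle_F\le\sum_{i\le r}s_i(\Delta W)s_i(M)\le\|\Delta W\|_F\|M_r\|_F$. Hence the supremum is at most $\rho\|M_r\|_F$. This is the same computation as in Lemma~\ref{lem:matrix-rad}(i), and I would simply cite it.
\item Attainment: the value $\rho\|M_r\|_F$ is attained at $\Delta W=\rho M_r/\|M_r\|_F$ (or at $0$ if $M=0$), which lies in $\mathcal B_{r,\rho}$.
\end{itemize}
For $M=\mathbf wv^T$ of rank at most one and any $r\ge1$, we have $M_r=M$. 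So the supremum is $\rho\|\mathbf w\|_2\|v\|_2$, and expectation again gives $\tfrac\rho n\|\mathbf w\|_2\mathbb E\|v\|_2$.

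For (ii), I would first take the inner supremum over $\Delta W$ using the (iii) computation, which gives $\rho\|\mathbf w\|_2\|v\|_2$. I would then take the supremum over $\|\mathbf w\|_2\le1$, which gives $\rho\|v\|_2$. This is attained at $\mathbf w=v/\|v\|_2$ when $v\neq0$; when $v=0$, every term vanishes. Taking expectations gives $\tfrac\rho n\mathbb E\|v\|_2$.

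In all three cases the computations are pointwise in $\sigma$, so the expectation passes through trivially. The only step needing care is the support function of $\mathcal B_{r,\rho}$. The von Neumann bound there must be applied with the rank cap on $\Delta W$, so that only the top $r$ singular values of $M$ enter; the collapse then follows from the observation that a rank-one $M$ has $M_r=M$ for every $r\ge1$. The concluding ``consequently'' sentence needs no separate argument: it just records that none of the three exact values depends on $r$, in contrast with the matrix-level value $\rho\,\mathbb E\|S_r\|_F/n$ of Lemma~\ref{lem:matrix-rad}.
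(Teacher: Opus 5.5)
Your proposal is correct and matches the paper's proof essentially step for step. You reduce each class to the rank-one signal $\mathbf w v^T$ via the trace identity, use Lemma~\ref{lem:extremal} for (i), use the von Neumann support-function computation of Step~1 of Lemma~\ref{lem:matrix-rad} with $M_r=M$ for the joint budget, and take the outer supremum over $\|\mathbf w\|_2\le1$ for (ii); the only difference is cosmetic ordering, since the paper proves (ii) first and gets (iii) by dropping that supremum, whereas you derive (ii) from (iii).
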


\begin{remark}[Adding the frozen weight changes nothing]
\label{rem:frozen-scalar}
The classes in Proposition~\ref{prop:necessity} are adaptation-only; the conclusions persist verbatim for the full-weight classes $x\mapsto\mathbf w^T(W_0+\Delta W)x$. For a \emph{fixed} readout, the frozen term contributes $\sum_i\sigma_i\mathbf w^TW_0x_i$, which does not depend on $\Delta W$ and has zero $\sigma$-expectation, so the complexities in cases (i) and (iii) are unchanged. For the \emph{free} readout over $\mathcal B_{r,\rho}$, the computation of Proposition~\ref{prop:collapse}(ii) goes through unchanged---the supremum of $\langle\Delta W,\mathbf wv^T\rangle_F$ over $\mathcal B_{r,\rho}$ equals $\rho\|\mathbf w\|_2\|v\|_2$, the same value as over $\mathcal A_r$, by the rank-one-signal argument of part (iii)---so the exact full-weight complexity is again $\tfrac1n\mathbb E_\sigma[\|W_0v\|_2+\rho\|v\|_2]$, with no $r$.
\end{remark}

\begin{remark}[The $\sqrt{q_r}$ is the worst-case nuclear--Frobenius gap]
\label{rem:sqrt-r-is-the-gap}
Comparing Proposition~\ref{prop:matrix-collapse} with \eqref{eq:datadep}, the rank-dependent branch $\tfrac\rho n\sqrt{q_r}\,\mathbb E\|S\|_2$ is exactly the collapse value $\tfrac\rho n\mathbb E\|S\|_2$ with budget inflated by $\sqrt{q_r}$---precisely the right-hand inclusion in \eqref{eq:sandwich} (recall $q_r=r$ whenever $r\le\min(m,d)$). Here $\sqrt{q_r}$ is the sharp \emph{worst-case} nuclear--Frobenius comparison factor over $\mathcal B_{r,\rho}$; for an individual matrix $M$ the factor is $\sqrt{\mathrm{rank}(M)}\le\sqrt{q_r}$. In other words, the apparent gain from restricting the rank under $\mathcal B_{r,\rho}$ is the mirror image of the loss incurred by relaxing the nuclear budget to a Frobenius one. Under the true LoRA constraint set $\mathcal A_r$ one already has the better, $r$-free value $\tfrac\rho n\mathbb E\|S\|_2$, and nothing is gained by shrinking $r$.
\end{remark}

\begin{remark}[When the rank constraint helps within $\mathcal B_{r,\rho}$]
\label{rem:when-rank-helps}
Comparing the two branches of \eqref{eq:datadep} via \eqref{eq:frob-branch}--\eqref{eq:op-branch} and ignoring the lower-order term, the rank-dependent branch beats the rank-free Frobenius branch when
\begin{equation}
2\,q_r\log(m+d)\cdot\max\big(\mathrm{tr}(G_n),\,m\|G_n\|_2\big) \lesssim m\,\mathrm{tr}(G_n),
\end{equation}
i.e.\ when $q_r\log(m+d)\lesssim\min\big(m,\ r_{\mathrm{eff}}(G_n)\big)$ (with $q_r=r$ in the practical regime $r\le\min(m,d)$). We stress the epistemic status of this comparison: both branches are upper bounds on the exact value $\tfrac\rho n\mathbb E\|S_r\|_F$, and the operator branch additionally discards the lower-order Bernstein term, so the condition identifies a \emph{sufficient regime for the rank-dependent bound to improve}, not an exact characterization of the complexity. Two consequences. In the spectrally degenerate case (all $x_i$ collinear, $r_{\mathrm{eff}}=1$) the rank constraint yields \emph{no} improvement. For well-spread features ($r_{\mathrm{eff}}\asymp\min(n,d)$) it yields savings of order $\sqrt{r_{\mathrm{eff}}/(q_r\log(m+d))}$ once $q_r\log(m+d)\ll\min(m,n,d)$. And in the worst case allowed by Assumption~\ref{ass:bddfeat} the rank branch never wins at all: by Lemma~\ref{lem:matrix-rad}(ii) the worst-case bound is rank-free.
\end{remark}

\begin{remark}[On the $\sqrt m$ factor]
\label{rem:sqrtm}
The bound carries an explicit $\sqrt m$ relative to a scalar-noise heuristic that treats $\sigma_i$ as though its size were independent of the output dimension. Since $\sigma_i\in\{\pm1\}^m$ has deterministic norm $\sqrt m$, both the matrix-Bernstein variance proxy and the almost-sure norm bound pick up factors of $m$ and $\sqrt m$ respectively, and these propagate to the rate. We also note that all absolute constants here and downstream are conditional on the precise formulations of the two external inequalities we invoke---the expectation form of the matrix Bernstein inequality \cite{tropp2012user} in \eqref{eq:op-branch} and Maurer's vector-contraction inequality \cite{maurer2016vector} with constant $\sqrt2$ in Lemma~\ref{lem:contraction}; alternative statements of these inequalities carry slightly different absolute constants or logarithmic factors.
\end{remark}

Two auxiliary facts, both in Appendix~\ref{app:aux}, transfer Lemma~\ref{lem:matrix-rad} to $\mathcal H_r^\psi$: the frozen weight contributes \emph{exactly} nothing to the vector-valued complexity (Lemma~\ref{lem:translation}), and Maurer's vector-contraction inequality \cite{maurer2016vector} passes through the fixed readout at cost $\sqrt2L_\psi$ (Lemma~\ref{lem:contraction}).

\begin{corollary}[Complexity Bounds for the Fixed-Readout Classes]
\label{cor:rank-dep}
Write $\mathcal H_{r,\mathcal D_r}^\psi$ for the class of Definition~\ref{def:fixed-readout} built on the constraint set $\mathcal D_r$. Under Assumption~\ref{ass:bddfeat} and for $n\ge\log(m+d)$,
\begin{align}
\mathfrak{R}_n(\mathcal{H}_{r,\mathcal A_r}^\psi)
&\le
\frac{\sqrt{2}\,L_\psi\,\rho}{n}\,
\mathbb{E}\|S\|_2
\notag\\
&\le
\sqrt{2}\,L_\psi\,\rho\,X_{\max}
\sqrt{\frac{m}{n}},
\\[0.5em]
\mathfrak{R}_n(\mathcal{H}_{r,\mathcal B_{r,\rho}}^\psi)
&\le
\frac{\sqrt{2}\,L_\psi\,\rho}{n}
\min\!\left\{
\sqrt{q_r}\,\mathbb{E}\|S\|_2,\,
\mathbb{E}\|S\|_F
\right\}
\notag\\
&\le
\sqrt{2}\,L_\psi\,\rho\,X_{\max}
\sqrt{\frac{m}{n}}.
\end{align}

where in each line the final inequality (the rank-free worst case) follows from $\|S\|_2\le\|S\|_F$, $\|S_r\|_F\le\|S\|_F$, and \eqref{eq:frob-branch}. Rank enters only through the data-dependent minimum in the $\mathcal B_{r,\rho}$ line, whose operator-norm branch can improve on the Frobenius branch under the effective-rank condition of Remark~\ref{rem:when-rank-helps}; the $\mathcal A_r$ line is rank-free.
\end{corollary}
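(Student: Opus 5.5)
The plan is to assemble the corollary from three ingredients already on the table: the contraction of Lemma~\ref{lem:contraction}, the exact matrix complexities (Proposition~\ref{prop:matrix-collapse} for $\mathcal A_r$, Lemma~\ref{lem:matrix-rad} for $\mathcal B_{r,\rho}$), and the worst-case Frobenius bound \eqref{eq:frob-branch}. First, for either constraint set $\mathcal D_r$, Lemma~\ref{lem:contraction} gives
\begin{equation*}
\mathfrak R_n(\mathcal H_{r,\mathcal D_r}^\psi)\le\sqrt2\,L_\psi\,\mathfrak R_n^{\mathrm{mat}}(\mathcal D_r).
\end{equation*}
The frozen weight drops out of this bound by Lemma~\ref{lem:translation}, which the contraction lemma already incorporates.

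For the $\mathcal A_r$ line, Proposition~\ref{prop:matrix-collapse} (equivalently Lemma~\ref{lem:extremal}\eqref{eq:extremal-lin} with $M=S$, pointwise in $\sigma$) identifies $\mathfrak R_n^{\mathrm{mat}}(\mathcal A_r)=\tfrac\rho n\mathbb E\|S\|_2$. This is the first inequality. For the second I use $\|S\|_2\le\|S\|_F$ together with Jensen:
\begin{equation*}
\mathbb E\|S\|_F\le\sqrt{\mathbb E\|S\|_F^2}=\sqrt{m\,\mathrm{tr}(G_n)}.
\end{equation*}
The identity $\mathbb E\|S\|_F^2=m\,\mathrm{tr}(G_n)$ holds because the independent signs kill cross terms and each $\|\sigma_i\|_2^2=m$; this is exactly \eqref{eq:frob-branch}. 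Finally, Assumption~\ref{ass:bddfeat} gives $\mathrm{tr}(G_n)\le nX_{\max}^2$, so $\tfrac\rho n\sqrt{mnX_{\max}^2}=\rho X_{\max}\sqrt{m/n}$.

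For the $\mathcal B_{r,\rho}$ line, Lemma~\ref{lem:matrix-rad}(i) supplies both $\mathfrak R_n^{\mathrm{mat}}=\tfrac\rho n\mathbb E\|S_r\|_F$ and the data-dependent minimum \eqref{eq:datadep}, so the first inequality is immediate after multiplying by $\sqrt2L_\psi$. For the second, I bound the minimum by its Frobenius branch (or directly use $\|S_r\|_F\le\|S\|_F$) and then repeat the same Jensen and boundedness computation as before.

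The hypothesis $n\ge\log(m+d)$ is only needed if one routes through Lemma~\ref{lem:matrix-rad}(ii). The Frobenius route does not use it, so it is harmless here and I keep it for consistency with that lemma.

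The only point I expect to need care is checking that Lemma~\ref{lem:contraction} really applies to both sets with the same constant $\sqrt2$. Maurer's inequality requires $\psi$ to be fixed and applies to arbitrary index sets, so the choice of $\mathcal D_r$ is irrelevant. The remainder is bookkeeping, including the observation that the rank enters only through $\min\{\sqrt{q_r}\,\mathbb E\|S\|_2,\mathbb E\|S\|_F\}$.
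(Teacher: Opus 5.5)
Your proposal is correct and matches the paper's proof step for step. You apply Lemma~\ref{lem:contraction} to either constraint set, insert the exact values from Proposition~\ref{prop:matrix-collapse} and Lemma~\ref{lem:matrix-rad}(i), and obtain the rank-free worst case from $\|S\|_2\le\|S\|_F$ (resp.\ $\|S_r\|_F\le\|S\|_F$) together with \eqref{eq:frob-branch} and $\mathrm{tr}(G_n)\le nX_{\max}^2$; you are also right that the hypothesis $n\ge\log(m+d)$ is never used on this Frobenius route.
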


\begin{remark}[Scope]
\label{rem:linear-readout}
Corollary~\ref{cor:rank-dep} is delimited by three caveats, and we state them plainly. First, the rank-sensitive minimum concerns $\mathcal B_{r,\rho}$ only; the $\mathcal A_r$ line is rank-free, its underlying vector-valued complexity being exactly $\rho\,\mathbb E\|S\|_2/n$ (Proposition~\ref{prop:matrix-collapse}). Second, when $\psi$ is itself \emph{linear}, $\psi(z)=\mathbf w_0^Tz$ with $\|\mathbf w_0\|_2\le1$ fixed, the class is scalar-linear in $x$ and the direct bound $\mathfrak R_n(\mathcal H_r^\psi)\le\rho X_{\max}/\sqrt n$ (no $W_0$ term, by Lemma~\ref{lem:translation}) beats the worst case of Corollary~\ref{cor:rank-dep} by a factor of $\sqrt{2m}$; the vector-contraction route, and with it the $m$ dependence, is informative only for genuinely nonlinear $\psi$. Third, even within $\mathcal B_{r,\rho}$ the rank branch improves on the rank-free branch only under Remark~\ref{rem:when-rank-helps}'s effective-rank condition---never under worst-case features.
\end{remark}

\begin{remark}[Scope of Assumption~\ref{ass:kappa}]
\label{rem:kappa-scope}
The assumption is deliberately restricted to the cancellation weights, which are the only adapted weights at which it is applied (in Theorem~\ref{thm:minrank}, Corollary~\ref{cor:spectral-route}, and Theorem~\ref{thm:cone}(ii)). Requiring it for \emph{all} admissible $W=W_0+\Delta W$, $\Delta W\in\mathcal A_r$, would be strictly stronger and is typically unavailable: cancellation weights are rank-deficient (rank $m-r'$ when $W_0$ has full row rank and $r'\in\{0,1,\dots,m\}$; identically zero for larger $r'$), so pointwise nondegeneracy conditions of the form $\lambda_{\min}(W\Sigma_TW^T)\ge c^2>0$ on all of $\mathbb R^m$ necessarily fail on them. The range-adapted formulation of Example~\ref{ex:gaussian} is what restores a constant $\kappa$ that is uniform over the whole family.
\end{remark}

\begin{remark}[Spectral threshold versus budget-feasible certified rank]
\label{rem:rcert}
$r_{\mathrm{spec}}(\epsilon)$ is a spectral quantity; it is a rank at which cancellation \emph{would} certify the tolerance, not automatically a rank at which cancellation is \emph{affordable}. The two conditions move in opposite directions: $\{r:\tau_r\le\epsilon/\kappa\}$ is upward closed (Lemma~\ref{lem:monotone}), while the budget condition $\{r:\sum_{i\le r}s_i(W_0)\le\rho\}$ is \emph{downward} closed, so their intersection---the set of budget-feasible certifying ranks---is not an upward-closed threshold set but an \emph{interval}
\begin{equation}
\begin{aligned}
\bigl[r_{\mathrm{spec}}(\epsilon),\,r_{\mathrm{bud}}(\rho)\bigr],
\qquad
r_{\mathrm{bud}}(\rho)
&:= \max\Bigl\{
r\in\{0,1,\ldots,K\}:\\
&\qquad \sum_{i\le r}s_i(W_0)\le \rho
\Bigr\}.
\end{aligned}
\end{equation}

well defined since the $r=0$ sum is $0\le\rho$ and the constraint is restricted to $r\le K$ (for $r>K$ the partial sums are constant, so no information is lost by the restriction). Defining the budget-aware certified rank
\begin{equation}
\begin{aligned}
r_{\mathrm{cert}}(\epsilon,\rho)
&:= \min\Bigl\{
1\le r\le K:\ \tau_r\le \epsilon/\kappa,\\
&\qquad
\sum_{i\le r}s_i(W_0)\le \rho
\Bigr\},
\end{aligned}
\end{equation}

whenever the set is nonempty, we therefore get: the set is nonempty \emph{exactly} when \eqref{eq:budget-at-rspec} holds, and in that case $r_{\mathrm{cert}}(\epsilon,\rho)=r_{\mathrm{spec}}(\epsilon)$. So the budget never changes \emph{which} rank is minimally certifying; it only decides \emph{whether} any rank certifies at all. All bounds of Theorem~\ref{thm:minrank} and Proposition~\ref{prop:lower} thus apply verbatim to $r_{\mathrm{cert}}$ on its domain of definition.
\end{remark}

\begin{remark}[Variants of the complexity term]
\label{rem:gen-Ar}
Three comments on the complexity term of Theorem~\ref{thm:gen}. (a) For $\mathcal D_r=\mathcal A_r$, Proposition~\ref{prop:matrix-collapse} in place of Lemma~\ref{lem:matrix-rad} gives the exact layer complexity $\rho\,\mathbb E\|S\|_2/n_S$ instead of $\rho\,\mathbb E\|S_r\|_F/n_S$; since $\|S\|_2\le\|S\|_F$ and $\mathbb E\|S\|_F\le X_{\max}\sqrt{mn_S}$ by \eqref{eq:frob-branch}, the worst-case simplification is again $2\sqrt2L_\psi\rho X_{\max}\sqrt{m/n_S}$---\emph{identical} to the $\mathcal B_{r,\rho}$ worst case. The two parameterizations are distinguished only by their data-dependent complexities, $\rho\,\mathbb E\|S\|_2/n_S$ versus $\rho\,\mathbb E\|S_r\|_F/n_S$. (b) The data-dependent refinement $\tfrac{2\sqrt2L_\psi\rho}{n_S}\min\{\sqrt{q_r}\,\mathbb E_\sigma\|S\|_2,\mathbb E_\sigma\|S\|_F\}$ of Corollary~\ref{cor:rank-dep} is conditional on the sample; it can be used in Theorem~\ref{thm:gen} at the standard cost of stating the bound with the empirical Rademacher complexity and an additional deviation term. (c) In all variants the frozen weight is absent from the vector-valued complexity and from the contraction upper bound used here (Lemma~\ref{lem:translation}), and present in the transport term.
\end{remark}

\begin{remark}[Where $W_0$ does and does not appear]
\label{rem:w0-roles}
The frozen weight plays two distinct roles. It contributes nothing to the \emph{vector-valued layer complexity}---Lemma~\ref{lem:translation} is an exact identity, not a bound---and hence nothing to the contraction upper bound used in Theorem~\ref{thm:gen}, because it is never optimized against the sample. (We note the precise scope: the \emph{exact} scalar complexity of a fixed nonlinear readout class may still depend on $W_0$; the $W_0$-freeness is exact at the vector-valued level and inherited by the particular upper bound we use.) But it necessarily enters the \emph{transport} term through the input-space Lipschitz constant $L_\psi(\|W_0\|_2+\rho)$, because $h$ is $L_\psi$-Lipschitz in the layer \emph{output} $Wx$, not in the feature $x$, and different members of the class carry different $\Delta W$, so the transport step cannot be relocated to a shared post-layer feature space. Freezing $W_0$ removes its statistical cost, not its geometric influence.
\end{remark}

\begin{remark}[Two existential witnesses]
\label{rem:two-witnesses}
Membership in $\mathcal C_{r,\mathrm{two}}^{\mathrm{push}}(\epsilon)$ involves two existential quantifiers: one update realizes the target as a pushforward, and another (via Definition~\ref{def:cone}) certifies the discrepancy condition. The definition does \emph{not} require the two witnesses to coincide, and our tightness construction (Proposition~\ref{prop:tight}) indeed uses two different rank-one updates---$\Delta W=\rho uu^T$ to transport the source, $\Delta W'=-W_0uu^T$ to certify transferability. Readers interested in the single-update variant
\begin{equation}
\begin{aligned}
\mathcal C_{r,\mathrm{same}}^{\mathrm{push}}(\epsilon)
&:= \Bigl\{(I+\Delta W)_\#D_S:\ \Delta W\in\mathcal A_r,\\
&\qquad
\tilde d_{\mathcal H_{W_0+\Delta W}}
\bigl(D_S,(I+\Delta W)_\#D_S\bigr)
\le \epsilon\Bigr\}
\end{aligned}
\end{equation}
should note that $\mathcal C_{r,\mathrm{same}}^{\mathrm{push}}(\epsilon)\subseteq\mathcal C_{r,\mathrm{two}}^{\mathrm{push}}(\epsilon)$, so the upper bound of Theorem~\ref{thm:cone}(i) applies to it verbatim; whether that upper bound is attained for the single-update cone in general is an open question that our witness does not settle.
\end{remark}

\begin{remark}[The collapses are one phenomenon]
\label{rem:same-phenomenon}
Theorem~\ref{thm:cone}(i) and Propositions~\ref{prop:collapse}--\ref{prop:matrix-collapse} are two readings of Lemma~\ref{lem:extremal}: over a nuclear ball, both the Rademacher functional \eqref{eq:extremal-lin} and the Frobenius norm \eqref{eq:extremal-frob} are maximized at rank one, so the rank cap never binds. Recovering rank-dependence therefore requires abandoning the per-factor budget framing---statistically, by passing to the \emph{matrix-valued} class under a joint rank-and-Frobenius constraint (Lemma~\ref{lem:matrix-rad}), the scalar linear formalizations remaining rank-free even after both modifications (Proposition~\ref{prop:necessity}); geometrically, by asking not what maximal radius a fixed budget attains but how large a budget is \emph{required} to reach a specified spectral target (Theorem~\ref{thm:minrank}, Proposition~\ref{prop:lower}, Theorem~\ref{thm:cone}(ii)). We regard this as a structural finding about Frobenius-budgeted LoRA, not a limitation of technique: under the budget condition of Proposition~\ref{prop:tight} the reachable radius is exactly rank-independent for the witness instance, not merely rank-independently bounded.
\end{remark}

\section{Proofs of Main Results}
\label{app:proofs}

\subsection{Proof of Lemma~\ref{lem:factorization}}

\begin{proof}
($\subseteq$) If $\Delta W=BA$ with $\|B\|_F\le B_B$, $\|A\|_F\le B_A$, then $\mathrm{rank}(\Delta W)\le r$ and, by Lemma~\ref{lem:submult}, $\|\Delta W\|_*\le\|B\|_F\|A\|_F\le\rho$.

($\supseteq$) Let $\mathrm{rank}(\Delta W)=k\le r$ and $\|\Delta W\|_*\le\rho$. Write the compact SVD $\Delta W=U\Lambda V^T$ with $\Lambda\in\mathbb R^{k\times k}$ diagonal and positive, and pad $U$, $\Lambda$, $V^T$ with zeros to sizes $m\times r$, $r\times r$, $r\times d$ respectively, so that $B_0:=U\Lambda^{1/2}\in\mathbb R^{m\times r}$ and $A_0:=\Lambda^{1/2}V^T\in\mathbb R^{r\times d}$. Then $B_0A_0=\Delta W$ and
\begin{equation}
\|B_0\|_F^2=\|A_0\|_F^2=\mathrm{tr}(\Lambda)=\|\Delta W\|_*,
\end{equation}
so $\|B_0\|_F\|A_0\|_F=\|\Delta W\|_*\le B_BB_A$. Rescaling $B:=cB_0$, $A:=A_0/c$ preserves the product, and both per-factor constraints hold for any
\begin{equation}
c\in\big[\|A_0\|_F/B_A,\ B_B/\|B_0\|_F\big],
\end{equation}
an interval that is nonempty precisely because $\|A_0\|_F\|B_0\|_F\le B_AB_B$. (If $\Delta W=0$ take $B=A=0$.)
\end{proof}

\subsection{Proof of Lemma~\ref{lem:extremal}}

\begin{proof}
\eqref{eq:extremal-lin}: For $\Delta W\in\mathcal A_r$, Hölder duality of $\|\cdot\|_*$ and $\|\cdot\|_2$ with Lemma~\ref{lem:factorization} gives $\langle\Delta W,M\rangle_F\le\|\Delta W\|_*\|M\|_2\le\rho\|M\|_2$. If $M=0$ both sides of \eqref{eq:extremal-lin} vanish and $\Delta W=0$ attains the supremum; assume then $M\ne0$. Conversely, let $M=\sum_is_i(M)\,p_iq_i^T$ be an SVD and put $\Delta W^\star:=\rho\,p_1q_1^T$. Then $\mathrm{rank}(\Delta W^\star)=1\le r$ and $\|\Delta W^\star\|_*=\rho$, so $\Delta W^\star\in\mathcal A_1\subseteq\mathcal A_r$ by Lemma~\ref{lem:factorization}, and $\langle\Delta W^\star,M\rangle_F=\rho\,s_1(M)=\rho\|M\|_2$.

\eqref{eq:extremal-frob}: $\|\Delta W\|_F\le\|\Delta W\|_*\le\rho$ for every $\Delta W\in\mathcal A_r$, and the rank-one point $\rho\,pq^T$ (any unit $p,q$) attains $\|\cdot\|_F=\rho$.
\end{proof}

\subsection{Proof of Theorem~\ref{thm:da}}

\begin{proof}
Fix $h'\in\mathcal H$. By the triangle inequality pointwise under $\mathbb E_{D_T}$,
\begin{equation}
\begin{aligned}
\epsilon_T(h)
&= \mathbb{E}_T\bigl|h-f_T\bigr| \\
&\le \mathbb{E}_T\bigl|h'-f_T\bigr|
 + \mathbb{E}_T\bigl|h-h'\bigr| \\
&= \epsilon_T(h')
 + \mathbb{E}_T\bigl|h-h'\bigr|.
\end{aligned}
\end{equation}

By Definition~\ref{def:disc}, since $h,h'\in\mathcal H$,
\begin{equation}
\mathbb E_T|h-h'|\le\mathbb E_S|h-h'|+\tilde d_{\mathcal H}(D_S,D_T),
\end{equation}
and by the triangle inequality again, through $f_S$ under $\mathbb E_{D_S}$,
\begin{equation}
\mathbb E_S|h-h'|\le\mathbb E_S|h-f_S|+\mathbb E_S|h'-f_S|=\epsilon_S(h)+\epsilon_S(h').
\end{equation}
Chaining and taking the infimum over $h'$ gives the claim.
\end{proof}

\subsection{Proof of Proposition~\ref{prop:collapse}}

\begin{proof}
(i) $\mathcal F_1\subseteq\mathcal F_r$ is immediate, since $\mathcal A_1\subseteq\mathcal A_r$. Conversely take $h_{\mathbf w,\Delta W}\in\mathcal F_r$; if $\mathbf w=0$ the function is $0\in\mathcal F_1$, so assume $\mathbf w\ne0$ and define the rank-one surrogate
\begin{equation}
\Delta W' := \frac{1}{\|\mathbf w\|_2^{2}}\,\mathbf w\,\big(\mathbf w^T\Delta W\big).
\end{equation}
The normalization is what makes the readout unchanged: $\mathbf w^T\Delta W'=\|\mathbf w\|_2^{-2}\|\mathbf w\|_2^2\,\mathbf w^T\Delta W=\mathbf w^T\Delta W$, hence $h_{\mathbf w,\Delta W'}=h_{\mathbf w,\Delta W}$ pointwise, with the \emph{same} $\mathbf w$ and therefore the same frozen term $\mathbf w^TW_0x$. (One cannot instead rescale $\mathbf w$ to unit norm: that would alter $\mathbf w^TW_0x$.) Admissibility: $\Delta W'$ has rank one, and since for rank-one matrices $\|\cdot\|_*=\|\cdot\|_F$,
\begin{equation}
\begin{aligned}
\|\Delta W'\|_*
&= \frac{\|\mathbf w\|_2\,
        \|\Delta W^\top\mathbf w\|_2}
       {\|\mathbf w\|_2^2} \\
&= \frac{\|\Delta W^\top\mathbf w\|_2}
        {\|\mathbf w\|_2} \\
&\le \|\Delta W\|_2
 \le \|\Delta W\|_*
 \le \rho .
\end{aligned}
\end{equation}

so $\Delta W'\in\mathcal A_1$ by Lemma~\ref{lem:factorization}. Hence $\mathcal F_r\subseteq\mathcal F_1$.

(ii) With $v=\sum_i\sigma_ix_i$ we have $\sum_i\sigma_ih_{\mathbf w,\Delta W}(x_i)=\mathbf w^T(W_0+\Delta W)v$. For fixed $\mathbf w$, Lemma~\ref{lem:extremal} applied to $M=\mathbf wv^T$ gives
\begin{equation}
\begin{aligned}
\sup_{\Delta W\in\mathcal A_r}
 \mathbf w^\top\Delta Wv
&= \sup_{\Delta W\in\mathcal A_r}
 \langle \Delta W,\mathbf wv^\top\rangle_F \\
&= \rho\|\mathbf wv^\top\|_2 \\
&= \rho\|\mathbf w\|_2\|v\|_2 .
\end{aligned}
\end{equation}

Writing $\mathbf w=s\hat{\mathbf w}$ with $s\in[0,1]$, $\|\hat{\mathbf w}\|_2=1$, the remaining supremum is
\begin{equation}
\sup_{s\in[0,1]}\ s\Big(\sup_{\|\hat{\mathbf w}\|_2=1}\hat{\mathbf w}^TW_0v+\rho\|v\|_2\Big)=\|W_0v\|_2+\rho\|v\|_2,
\end{equation}
the bracket being nonnegative so that $s=1$ is optimal. Dividing by $n$ and taking $\mathbb E_\sigma$ gives the identity, which visibly contains no $r$. Finally $\mathbb E_\sigma\|v\|_2\le(\mathbb E_\sigma\|v\|_2^2)^{1/2}=(\sum_i\|x_i\|_2^2)^{1/2}\le\sqrt nX_{\max}$ by Jensen and independence, and $\|W_0v\|_2\le\|W_0\|_2\|v\|_2$, giving \eqref{eq:collapsed}. That no covering-number or chaining refinement can introduce $r$-dependence is immediate from (i), since all such functionals depend on the class only as a set of functions.
\end{proof}

\subsection{Proof of Proposition~\ref{prop:matrix-collapse}}

\begin{proof}
For fixed $\{\sigma_i\}$, $\sum_i\langle\sigma_i,\Delta Wx_i\rangle=\langle\Delta W,\sum_i\sigma_ix_i^T\rangle_F=\langle\Delta W,S\rangle_F$. By Lemma~\ref{lem:extremal} with $M=S$, the supremum over $\mathcal A_r$ equals $\rho\|S\|_2$ for every $r\ge1$. Divide by $n$ and take $\mathbb E_\sigma$.
\end{proof}

\subsection{Proof of Proposition~\ref{prop:necessity}}

\begin{proof}
(i) For fixed $\mathbf w$, $\sum_i\sigma_i\mathbf w^T\Delta Wx_i=\langle\Delta W,\mathbf wv^T\rangle_F$, and Lemma~\ref{lem:extremal} gives supremum $\rho\|\mathbf wv^T\|_2=\rho\|\mathbf w\|_2\|v\|_2$, with no $r$.

(ii) For fixed $\mathbf w$, the signal matrix $M=\mathbf wv^T$ has rank one, so $M_r=M$ for every $r\ge1$. By Step 1 of the proof of Lemma~\ref{lem:matrix-rad} below (which is valid for any $M$),
\begin{equation}
\sup_{\Delta W\in\mathcal B_{r,\rho}}\langle\Delta W,M\rangle_F=\rho\|M_r\|_F=\rho\|M\|_F=\rho\|\mathbf w\|_2\|v\|_2 .
\end{equation}
Taking the supremum over $\|\mathbf w\|_2\le1$ and dividing by $n$ gives $\tfrac\rho n\mathbb E\|v\|_2$, with no $r$.

(iii) Identical to (ii) with the supremum over $\mathbf w$ omitted: the signal matrix $\mathbf wv^T$ is rank one, so $(\mathbf wv^T)_r=\mathbf wv^T$ and the supremum over $\mathcal B_{r,\rho}$ equals $\rho\|\mathbf wv^T\|_F=\rho\|\mathbf w\|_2\|v\|_2$ for every $r\ge1$.
\end{proof}

\subsection{Proof of Lemma~\ref{lem:matrix-rad}}

\begin{proof}
Throughout, $s_i(\cdot)$ denotes singular values, kept notationally distinct from the Rademacher vectors $\sigma_i$.

\emph{Step 1 (exact value of the supremum).} Let $\Delta W$ have singular values $t_1\ge t_2\ge\cdots$, with $t_i=0$ for $i>r$. By von Neumann's trace inequality and then Cauchy--Schwarz with $\sum_it_i^2=\|\Delta W\|_F^2\le\rho^2$,
\begin{equation}
\begin{aligned}
\langle \Delta W,S\rangle_F
&\le \sum_{i\ge 1} t_i\,s_i(S)
 = \sum_{i=1}^r t_i\,s_i(S) \\
&\le \rho
 \left(\sum_{i=1}^r s_i(S)^2\right)^{1/2} \\
&= \rho\|S_r\|_F .
\end{aligned}
\end{equation}

If $S_r=0$ (equivalently $S=0$) the supremum is zero and is attained at $\Delta W=0$; otherwise the value is attained at $\Delta W^\star=\rho\,S_r/\|S_r\|_F$ (rank $\le r$, Frobenius norm $\rho$), since $\langle S_r,S\rangle_F=\|S_r\|_F^2$. In either case $\sup_{\mathcal B_{r,\rho}}\langle\Delta W,S\rangle_F=\rho\|S_r\|_F$ exactly.

\emph{Step 2 (two bounds on $\|S_r\|_F$).} $S_r$ retains only the top $r$ singular values, of which at most $q_r=\min\{r,m,d\}$ are nonzero, so
\begin{equation}
\|S_r\|_F\le\min\big(\sqrt{q_r}\,s_1(S),\ \|S\|_F\big)=\min\big(\sqrt{q_r}\,\|S\|_2,\ \|S\|_F\big).
\end{equation}
The first branch uses the rank constraint; the second holds for any matrix and gives the rank-free baseline.

\emph{Step 3 (Frobenius branch).} By independence and $\mathbb E\sigma_i=0$ the cross terms vanish, and $\|\sigma_ix_i^T\|_F^2=\|\sigma_i\|_2^2\|x_i\|_2^2=m\|x_i\|_2^2$ deterministically, so by Jensen
\begin{equation}
\mathbb E\|S\|_F\le\big(\mathbb E\|S\|_F^2\big)^{1/2}=\Big(\sum_im\|x_i\|_2^2\Big)^{1/2}=\sqrt{m\,\mathrm{tr}(G_n)},
\end{equation}
which is \eqref{eq:frob-branch}.

\emph{Step 4 (operator branch via matrix Bernstein).} Let $Z_i=\sigma_ix_i^T$, independent and mean zero. Since $\mathbb E[\sigma_i\sigma_i^T]=I_m$,
\begin{equation}
\Big\|\sum_i\mathbb E[Z_iZ_i^T]\Big\|_2=\Big\|\sum_i\|x_i\|_2^2I_m\Big\|_2=\mathrm{tr}(G_n),
\end{equation}
and since $\sigma_i^T\sigma_i=m$ deterministically,
\begin{equation}
\Big\|\sum_i\mathbb E[Z_i^TZ_i]\Big\|_2=m\Big\|\sum_ix_ix_i^T\Big\|_2=m\|G_n\|_2 .
\end{equation}
Hence the variance proxy is at most $v_n=\max(\mathrm{tr}(G_n),m\|G_n\|_2)$, and $\|Z_i\|_2=\|\sigma_i\|_2\|x_i\|_2\le\sqrt mX_{\max}=:R$ almost surely. The expectation form of the matrix Bernstein inequality \cite{tropp2012user} gives \eqref{eq:op-branch}. Combining Steps 1--4 and dividing by $n$ proves (i).

\emph{Step 5 (worst case).} Under Assumption~\ref{ass:bddfeat}, $\mathrm{tr}(G_n)\le nX_{\max}^2$ and $\|G_n\|_2\le nX_{\max}^2$, so $v_n\le mnX_{\max}^2$; and when $\log(m+d)\le n$,
\begin{equation}
\tfrac{\sqrt mX_{\max}}3\log(m+d)\le\tfrac{X_{\max}}3\sqrt{mn\log(m+d)} .
\end{equation}
Therefore $\mathbb E\|S\|_2\le(\sqrt2+\tfrac13)X_{\max}\sqrt{mn\log(m+d)}\le2X_{\max}\sqrt{mn\log(m+d)}$, while $\mathbb E\|S\|_F\le X_{\max}\sqrt{mn}$. Substituting into (i) and dividing by $n$ yields the minimum in \eqref{eq:lem2-fixed}. For the final equality in \eqref{eq:lem2-fixed}, compare the two branches: $2\sqrt{mq_r\log(m+d)/n}\ge\sqrt{m/n}$ if and only if $4q_r\log(m+d)\ge1$, which holds for every $q_r\ge1$ and $m+d\ge2$ since $4\log2>1$; the minimum is therefore always the Frobenius branch.
\end{proof}

\subsection{Proof of Corollary~\ref{cor:rank-dep}}

\begin{proof}
Apply Lemma~\ref{lem:contraction} with the appropriate constraint set. For $\mathcal D_r=\mathcal A_r$, $\mathfrak R_n^{\mathrm{mat}}(\mathcal A_r)=\tfrac\rho n\mathbb E\|S\|_2$ by Proposition~\ref{prop:matrix-collapse}; for $\mathcal D_r=\mathcal B_{r,\rho}$, Lemma~\ref{lem:matrix-rad}(i) gives the data-dependent minimum. In both lines the final inequality uses $\|S\|_2\le\|S\|_F$ (resp.\ $\|S_r\|_F\le\|S\|_F$) and $\mathbb E\|S\|_F\le X_{\max}\sqrt{mn}$ from \eqref{eq:frob-branch}. The worst-case bounds are uniform over samples obeying Assumption~\ref{ass:bddfeat}, hence hold for the expected complexity as well.
\end{proof}

\subsection{Proof of Lemma~\ref{lem:monotone}}

\begin{proof}
If $W_0=0$ then $\tau_r=0$ for every $r$ and the claim is immediate; likewise, for $r\ge\mathrm{rank}(W_0)$ both $\tau_r$ and $\tau_{r+1}$ vanish and there is nothing to prove. So assume $W_0\ne0$ and $r+1\le\mathrm{rank}(W_0)$. Write $M_r:=(W_0)_{>r}\Delta\Sigma(W_0)_{>r}^T$, so $\tau_r=\|M_r\|_2$. Let $u_{r+1}$ be the $(r{+}1)$-st left singular vector of $W_0$ and $P:=I_m-u_{r+1}u_{r+1}^T$. Since $(W_0)_{>r+1}=(W_0)_{>r}-s_{r+1}(W_0)u_{r+1}v_{r+1}^T=P\,(W_0)_{>r}$, we get $M_{r+1}=PM_rP$, whence $\tau_{r+1}=\|PM_rP\|_2\le\|P\|_2^2\|M_r\|_2=\tau_r$. If $r\ge\mathrm{rank}(W_0)$ then $(W_0)_{>r}=0$ and $\tau_r=0$, so the defining set of $r_{\mathrm{spec}}(\epsilon)$ is nonempty and, by monotonicity, upward closed.
\end{proof}

\subsection{Proof of Theorem~\ref{thm:minrank}}

\begin{proof}
If $\|\Delta\Sigma\|_2=0$ then $\tau_r=0$ for every $r$, so $r_{\mathrm{spec}}(\epsilon)=1$; under the stated budget condition, Lemma~\ref{lem:spectral} makes the cancellation admissible with alignment value $\tau_{r_{\mathrm{spec}}(\epsilon)}=0$, and Assumption~\ref{ass:kappa} then gives $\tilde d_{\mathcal H_W}(D_S,D_T)\le\kappa\,\tau_{r_{\mathrm{spec}}(\epsilon)}=0\le\epsilon$, proving the certificate claim; the decay bounds are not needed in this branch and are not evaluated. Assume henceforth $\|\Delta\Sigma\|_2>0$. Well-definedness of $r_{\mathrm{spec}}(\epsilon)$ is Lemma~\ref{lem:monotone} together with $\tau_K=0$. Under \eqref{eq:budget-at-rspec}, Lemma~\ref{lem:spectral} makes the top-block cancellation at rank $r_{\mathrm{spec}}(\epsilon)$ admissible with $\|W\Delta\Sigma W^T\|_2=\tau_{r_{\mathrm{spec}}(\epsilon)}\le\epsilon/\kappa$, and Assumption~\ref{ass:kappa} (applied at the cancellation weight $W=(W_0)_{>r_{\mathrm{spec}}(\epsilon)}$) gives $\tilde d_{\mathcal H_W}(D_S,D_T)\le\kappa\tau_{r_{\mathrm{spec}}(\epsilon)}\le\epsilon$.

For the decay bounds: by submultiplicativity of the operator norm, $\tau_r\le\|(W_0)_{>r}\|_2^2\|\Delta\Sigma\|_2=s_{r+1}(W_0)^2\|\Delta\Sigma\|_2$. Under polynomial decay and for $r\ge1$,
\begin{equation}
\tau_r\le\bar C^2(r+1)^{-2\alpha}\|\Delta\Sigma\|_2\le\bar C^2r^{-2\alpha}\|\Delta\Sigma\|_2,
\end{equation}
so $\tau_r\le\epsilon/\kappa$ holds for every integer $r\ge(\kappa\bar C^2\|\Delta\Sigma\|_2/\epsilon)^{1/(2\alpha)}$, in particular for the integer $\lceil(\kappa\bar C^2\|\Delta\Sigma\|_2/\epsilon)^{1/(2\alpha)}\rceil$ (which is at least $1$, the argument of the ceiling being positive); by Lemma~\ref{lem:monotone} the set $\{r:\tau_r\le\epsilon/\kappa\}$ is upward closed, so $r_{\mathrm{spec}}(\epsilon)$ is at most that integer. In the geometric case, $\bar C^2\beta^{2r}\|\Delta\Sigma\|_2\le\epsilon/\kappa$ holds for every integer $r\ge\tfrac1{2\log(1/\beta)}\log(\kappa\bar C^2\|\Delta\Sigma\|_2/\epsilon)$; if the logarithm is nonpositive this includes $r=1$, and otherwise it includes the ceiling of the displayed quantity, whence the stated bound with the outer $\max\{1,\cdot\}$. In both cases $r_{\mathrm{spec}}(\epsilon)\le K$ holds by definition, justifying the outer minimum with $K$.
\end{proof}

\subsection{Proof of Proposition~\ref{prop:lower}}

\begin{proof}
Take the unit vector $w=u_{r+1}$. Since $u_{r+1}^T(W_0)_{>r}=s_{r+1}(W_0)\,v_{r+1}^T$,
\begin{equation}
\begin{aligned}
\tau_r
&\ge \bigl|
w^\top (W_0)_{>r}\Delta\Sigma
(W_0)_{>r}^\top w
\bigr| \\
&= s_{r+1}(W_0)^2
 \bigl|v_{r+1}^\top\Delta\Sigma v_{r+1}\bigr| \\
&\ge c_1s_{r+1}(W_0)^2 .
\end{aligned}
\end{equation}

Under the two-sided decay bounds, $\tau_r\ge c_1c_0^2(r+1)^{-2\alpha}$ (resp.\ $c_1c_0^2\beta^{2(r+1)}$). In the non-saturated regime the index $r_{\mathrm{spec}}(\epsilon)+1\le K$ is available, so $v_{r_{\mathrm{spec}}(\epsilon)+1}$ exists, $s_{r_{\mathrm{spec}}(\epsilon)+1}(W_0)>0$, and the uniform directional condition covers it; the displayed lower bound therefore applies at $r=r_{\mathrm{spec}}(\epsilon)$. Since $\tau_{r_{\mathrm{spec}}(\epsilon)}\le\epsilon/\kappa$ by definition, this forces $(r_{\mathrm{spec}}(\epsilon)+1)^{2\alpha}\ge\kappa c_1c_0^2/\epsilon$ (resp.\ $2(r_{\mathrm{spec}}(\epsilon)+1)\log(1/\beta)\ge\log(\kappa c_1c_0^2/\epsilon)$), giving the stated bounds.
\end{proof}

\subsection{Proof of Theorem~\ref{thm:gen}}

\begin{proof}
\emph{Step 1 (adaptation).} Theorem~\ref{thm:da} with $\mathcal H=\mathcal H_r^\psi$ gives $\epsilon_T(h)\le\epsilon_S(h)+\tilde d_{\mathcal H_r^\psi}(D_S,D_T)+\lambda^*$.

\emph{Step 2 (transport, with the correct Lipschitz constant).} Every $h\in\mathcal H_r^\psi$ satisfies, for all $x,x'$,
\begin{equation}
|h(x)-h(x')|\le L_\psi\|(W_0+\Delta W)(x-x')\|_2\le L_\psi M\|x-x'\|_2,
\end{equation}
since $\|W_0+\Delta W\|_2\le\|W_0\|_2+\|\Delta W\|_F\le M$. Hence for any $h,h'\in\mathcal H_r^\psi$ the function $\varphi:=|h-h'|$ is $2L_\psi M$-Lipschitz, being the composition of the $1$-Lipschitz $|\cdot|$ with a difference of two $L_\psi M$-Lipschitz functions. If $L_\psi M=0$, every member of the class is constant on $\mathcal X$, so $\tilde d_{\mathcal H_r^\psi}(D_S,D_T)=0$ and the transport term may be taken to be zero; assume therefore $L_\psi M>0$. Kantorovich--Rubinstein duality applied to $\varphi/(2L_\psi M)$---licit since $D_S$ is compactly supported under Assumption~\ref{ass:bddfeat} and $D_T$ has a finite first moment by hypothesis---yields
\begin{equation}
\tilde d_{\mathcal H_r^\psi}(D_S,D_T)\le2L_\psi M\,W_1(D_S,D_T),
\end{equation}
following the optimal-transport route of \cite{redko2017theoretical, shen2018wasserstein}. The factor $M$ cannot be dropped, for the reason given in Remark~\ref{rem:w0-roles}.

\emph{Step 3 (concentration).} The source loss class $\mathcal L=\{x\mapsto|h(x)-f_S(x)|:h\in\mathcal H_r^\psi\}$ takes values in $[0,b]$ by Assumption~\ref{ass:bounded}, so symmetrization plus McDiarmid gives, with probability $\ge1-\delta$, uniformly in $h$,
\begin{equation}
\epsilon_S(h)\le\hat\epsilon_S(h)+2\,\bar{\mathfrak R}_{n_S}(\mathcal L)+b\sqrt{\tfrac{\log(1/\delta)}{2n_S}},
\end{equation}
where $\bar{\mathfrak R}_{n_S}$ is the expected complexity of Section~\ref{sec:framework}, as is standard for this inequality; since our complexity bounds below are uniform over samples satisfying Assumption~\ref{ass:bddfeat}, they bound the expected complexity as well. To contract, note that $\phi_i(t):=|t-f_S(x_i)|$ is $1$-Lipschitz but does \emph{not} vanish at $0$; the version of the Ledoux--Talagrand contraction principle stated with absolute values inside the supremum \cite{ledoux1991probability} requires the contractions to vanish at $0$, so to keep the argument self-contained we center. Set $\tilde\phi_i(t):=|t-f_S(x_i)|-|f_S(x_i)|$, which is $1$-Lipschitz with $\tilde\phi_i(0)=0$. Then for every $h$,
\begin{equation}
\sum_i\sigma_i\phi_i(h(x_i))=\sum_i\sigma_i\tilde\phi_i(h(x_i))+\sum_i\sigma_i|f_S(x_i)|,
\end{equation}
and the last sum does not depend on $h$ and has zero $\sigma$-expectation. Hence, samplewise, $\mathfrak R_{n_S}(\mathcal L)=\mathfrak R_{n_S}(\{\tilde\phi_i\circ h\})\le\mathfrak R_{n_S}(\mathcal H_r^\psi)$ by contraction; taking expectations over the sample, the same holds for $\bar{\mathfrak R}_{n_S}$, and Corollary~\ref{cor:rank-dep} bounds the latter by $\sqrt2L_\psi\rho X_{\max}\sqrt{m/n_S}$ for either constraint set $\mathcal D_r$; by Lemma~\ref{lem:translation} this term carries no $W_0$ contribution. Combining Steps 1--3 gives the statement.
\end{proof}

\subsection{Proof of Corollary~\ref{cor:spectral-route}}
\begin{proof}
The discrepancy bound is immediate from Lemma~\ref{lem:spectral} (admissibility and $\|W\Delta\Sigma W^T\|_2=\tau_r$) and Assumption~\ref{ass:kappa} applied at $W$; (i) is then Theorem~\ref{thm:da} with $\mathcal H=\mathcal H_W$. For (ii), bound $\epsilon_S(h)$ by symmetrization plus McDiarmid exactly as in Step 3 of the proof of Theorem~\ref{thm:gen}: the loss class of $\mathcal H_W$ takes values in $[0,b]$, and the centering argument there applies verbatim to the fixed-weight class, giving $\epsilon_S(h)\le\hat\epsilon_S(h)+2\bar{\mathfrak R}_{n_S}(\mathcal H_W)+b\sqrt{\log(1/\delta)/(2n_S)}$ uniformly in $h$. For the linear-readout instantiation, for any unit-ball readout set contained in $\{\mathbf w:\|\mathbf w\|_2\le1\}$,

\begin{equation}
\begin{aligned}
\mathbb E_\sigma \sup_{\mathbf w}
 \sum_i \sigma_i \mathbf w^\top W x_i
&\le \mathbb E_\sigma \|Wv\|_2 \\
&\le \|W\|_2\,\mathbb E_\sigma \|v\|_2 \\
&\le \|W\|_2\sqrt{n_S}\,X_{\max}.
\end{aligned}
\end{equation}

with $v=\sum_i\sigma_ix_i$ (for the range-adapted family the first inequality is an equality, since $Wv\in\mathrm{range}(W)$); divide by $n_S$ and use $\|(W_0)_{>r}\|_2=s_{r+1}(W_0)$, valid also at $r=K$ under the convention $s_i(W_0)=0$ for $i>K$. The bound is uniform over samples satisfying Assumption~\ref{ass:bddfeat}, hence bounds the expected complexity $\bar{\mathfrak R}_{n_S}$.
\end{proof}

\subsection{Proof of Proposition~\ref{prop:trivial-radius}}

\begin{proof}
Let $W=W_0+\Delta W$ with $\Delta W\in\mathcal A_r$ and pick a unit $k\in\ker(W)$. For any $D$ and any $t$, the translate $D^{(t)}:=(x\mapsto x+tk)_\#D$ satisfies $W(x+tk)=Wx$, so the law of $\psi(Wx)$ is the same under $D$ and $D^{(t)}$ for every readout $\psi$. Hence $\mathbb E_{D}|h-h'|=\mathbb E_{D^{(t)}}|h-h'|$ for all $h,h'\in\mathcal H_W$, giving $\tilde d_{\mathcal H_W}(D,D^{(t)})=0\le\epsilon$ and therefore $D^{(t)}\in\mathcal C_r(W_0,D;\epsilon)$.

(i) If $\mathcal X=\mathbb R^d$, take $D=D_S$ arbitrary with finite second moment. The map $x\mapsto x+tk$ is a coupling, and by the projection argument $W_2(D_S,D_S^{(t)})=t\|k\|_2=t$, which is unbounded in $t$.

(ii) If $\mathcal X$ is the ball of radius $X_{\max}$, choose $D_S$ supported in $\{x\in k^\perp:\|x\|_2\le X_{\max}/2\}$, which is a nonempty subset of $\mathcal X$. For $t\le X_{\max}/2$ the translate $D_S^{(t)}$ is supported in $\mathcal X$ by orthogonality, since $\|x+tk\|_2^2=\|x\|_2^2+t^2\le X_{\max}^2/2$. Taking $t=X_{\max}/2$ gives a cone member at $W_2$-distance $X_{\max}/2$. In both cases the value obtained is independent of $\epsilon$, and of $r$ and $\rho$ within the kernel premise.
\end{proof}

\subsection{Proof of Theorem~\ref{thm:cone}}

\begin{proof}
(i) For any $\Delta W\in\mathcal A_r$, the map $x\mapsto(I+\Delta W)x$ is one feasible coupling of $D_S$ with its pushforward, so
\begin{equation}
\begin{aligned}
W_2^2\bigl(D_S,(I+\Delta W)_\#D_S\bigr)
&\le \mathbb{E}_{x\sim D_S}
   \bigl\|\Delta W x\bigr\|_2^2 \\
&= \operatorname{Tr}
   \bigl(\Delta W\Sigma_S\Delta W^\top\bigr).
\end{aligned}
\end{equation}

and $\mathrm{Tr}(\Delta W\Sigma_S\Delta W^T)\le\lambda_{\max}(\Sigma_S)\|\Delta W\|_F^2$. By Lemma~\ref{lem:extremal}\eqref{eq:extremal-frob}, $\sup_{\Delta W\in\mathcal A_r}\|\Delta W\|_F=\rho$ with no factor of $\sqrt r$ and no dependence on $r$. Taking square roots and the supremum over $\mathcal C_{r,\mathrm{two}}^{\mathrm{push}}(\epsilon)\subseteq\{(I+\Delta W)_\#D_S:\Delta W\in\mathcal A_r\}$ gives the bound; sharpness under the budget condition $s_d(W_0)\le\rho$ is Proposition~\ref{prop:tight}, whose part (i) supplies the required cone membership and whose part (ii) supplies the extremal displacement. The identical argument applies to $\mathcal B_{r,\rho}$, over which $\sup\|\Delta W\|_F=\rho$ as well.

(ii) By Lemma~\ref{lem:spectral} the cancellation update is admissible under the stated nuclear budget and achieves $\|W\Delta\Sigma W^T\|_2=\tau_{\mathrm{tail}}$ with $W=(W_0)_{>r}$. Assumption~\ref{ass:kappa}, applied at this cancellation weight, then gives $\tilde d_{\mathcal H_W}(D_S,D_T)\le\kappa\tau_{\mathrm{tail}}\le\epsilon$, which is exactly the membership condition of Definition~\ref{def:cone}.
\end{proof}

\end{document}